\newtheorem{remark}{Remark}
\newtheorem*{stmnt*}{Statement}
\newcommand{\USW}{\mathtt{USW}}
\newcommand{\NW}{\mathtt{NW}}
\newcommand{\ESW}{\mathtt{ESW}}
\newcommand{\MMS}{\mathtt{MMS}}
\newcommand{\EIT}{\mathtt{EIT}}
\newcommand{\R}{\mathbb{R}}
\newcommand{\Z}{\mathbb{Z}}
\newcommand{\bsub}{matroid rank }
\newcommand{\boxs}{$(0,1)$-\textsc{OXS} }
\newcommand{\calI}{\mathcal{I}}
\newcommand{\calO}{\mathcal{O}}
\newcommand{\bfx}{\boldsymbol{x}} 
\newcommand{\bfy}{\boldsymbol{y}}
\newcommand{\egend}{\hfill $\blacksquare$}
\newcommand{\remend}{\hfill $\blacklozenge$}
\newcommand{\score}{\boldsymbol{s}}
\newcommand{\mytodo}[2]
{\ifnum\Comments=1
	{\marginpar{\small{\color{#1}	#2}}}\fi}
\newcounter{Bew1}
\newcounter{Bew2}
\newcommand{\appendixproof}[2]{
	
	\stepcounter{Bew1}
	%    \hyperref[\arabic{Bew1}]{\noindent see Proof~\arabic{Bew1} (appendix)\\}
	\label{L\arabic{Bew1}}
	\gappto{\appendixProofText}{ \stepcounter{Bew2}\label{\arabic{Bew2}} \subsection{#1} #2}
	\vspace{-2pt}
}
\begin{document}

\title{Finding Fair and Efficient Allocations for Matroid Rank Valuations} 
\author{Nawal Benabbou}
\affiliation{
	\institution{Sorbonne Universit\'{e}, CNRS, Laboratoire d’Informatique de Paris 6, France}
	}
	\email{nawal.benabbou@lip6.fr}
\author{Mithun Chakraborty}
%\authornote{This author contributed to this work while he was at National University of Singapore.}
\affiliation{
	\institution{University of Michigan, Ann Arbor}
	}
	\email{dcsmc@umich.edu}
\author{Ayumi Igarashi}
\affiliation{
	\institution{National Institute of Informatics, Tokyo, Japan}
	}
	\email{ayumi\_igarashi@nii.ac.jpp}
\author{Yair Zick}
\affiliation{
	  \institution{University of Massachusetts, Amherst, USA}
	}
	\email{yzick@umass.edu}

\begin{abstract}  % put your abstract here!
In this paper, we present new results on the fair and efficient allocation of indivisible goods to agents whose preferences correspond to {\em matroid rank functions}. This is a versatile valuation class with several desirable properties (such as monotonicity and submodularity), which naturally lends itself to a number of real-world domains. 
We use these properties to our advantage; first, we show that when agent valuations are matroid rank functions, a socially optimal (i.e. utilitarian social welfare-maximizing) allocation that achieves envy-freeness up to one item (EF1) exists and is computationally tractable. We also prove that the Nash welfare-maximizing and the leximin allocations both exhibit this fairness/efficiency combination, by showing that they can be achieved by minimizing any symmetric strictly convex function over utilitarian optimal outcomes. To the best of our knowledge, this is the first valuation function class not subsumed by additive valuations for which it has been established that an allocation maximizing Nash welfare is EF1. 
Moreover, for a subclass of these valuation functions based on maximum (unweighted) bipartite matching, we show that a leximin allocation can be computed in polynomial time. Additionally, we explore possible extensions of our results to fairness criteria other than EF1 as well as to generalizations of the above valuation classes.
\end{abstract}

%%
%% The code below is generated by the tool at http://dl.acm.org/ccs.cfm.
%% Please copy and paste the code instead of the example below.
%%
%\begin{CCSXML}
%<ccs2012>
% <concept>
%  <concept_id>10010520.10010553.10010562</concept_id>
%  <concept_desc>Computer systems organization~Embedded systems</concept_desc>
%  <concept_significance>500</concept_significance>
% </concept>
% <concept>
%  <concept_id>10010520.10010575.10010755</concept_id>
%  <concept_desc>Computer systems organization~Redundancy</concept_desc>
%  <concept_significance>300</concept_significance>
% </concept>
% <concept>
%  <concept_id>10010520.10010553.10010554</concept_id>
%  <concept_desc>Computer systems organization~Robotics</concept_desc>
%  <concept_significance>100</concept_significance>
% </concept>
% <concept>
%  <concept_id>10003033.10003083.10003095</concept_id>
%  <concept_desc>Networks~Network reliability</concept_desc>
%  <concept_significance>100</concept_significance>
% </concept>
%</ccs2012>
%\end{CCSXML}

%\ccsdesc[500]{Computer systems organization~Embedded systems}
%\ccsdesc[300]{Computer systems organization~Redundancy}
%\ccsdesc{Computer systems organization~Robotics}
%\ccsdesc[100]{Networks~Network reliability}

%%
%% Keywords. The author(s) should pick words that accurately describe
%% the work being presented. Separate the keywords with commas.
%\keywords{}

%%
%% This command processes the author and affiliation and title
%% information and builds the first part of the formatted document.
\maketitle

%%%%%%%%%%%%%%%%%%%%%%%%%%%%%%%%%%%%%%%%%%%%%%%%%%%%%%%%%%%%%%%%%%%%%%%%%%%%%%%%%%%%%%%%%%%%%%%%%%%%%%%%%
%% start of main body of paper

\section{Introduction}\label{sec:intro}
%\mc{Should we update \citet{caragiannis2016unreasonable} to the recent journal version? But that messes up the chronology of results etc.}: Ayumi, updated. I think it's okay.  
Suppose that we are interested in allocating seats in courses to prospective students. How should this be done? On the one hand, courses offer limited seats and have scheduling conflicts; on the other, students have preferences over the classes that they take, which must be accounted for. In addition, students might have exogenous constraints, such as a hard limit on the number of classes they may take. Course allocation can be thought of as a problem of allocating a set of {\em indivisible goods} (course slots) to {\em agents} (students). 
One thing that immediately stands out is that the problem of assigning courses to students is very well-structured: students are either willing or unwilling to sign up for a class; this can be thought of as having either a value of $1$ or of $0$ for being assigned a class --- it makes no sense to assign a student to a class they did not sign up for. In addition, if we assign a set of classes $S$ to a student, and they are able to take it, then they would be able to take any subset of $S$ as well. Finally (this is not trivial to prove, but is indeed true), given two sets of feasible course assignments $S,T$ such that $|S|<|T|$, we can find some class  $o\in T$ such that $S\cup \{o\}$ is also a feasible course assignment. Such ``well-behaved'' structures are also known as \emph{matroids}. 
How should we divide goods among agents with subjective valuations? Can we find a ``good'' allocation in polynomial time? Can we exploit the structure of certain problems to efficiently find good allocations?

These questions have been the focus of intense study in the CS/Econ community in recent years; several justice criteria, as well as methods for computing allocations that satisfy them have been investigated. 
Generally speaking, justice criteria fall into two categories: {\em efficiency}  and {\em fairness}. Efficiency criteria are chiefly concerned with lowering some form of \textit{waste}, maximizing some notion of item utilization, or agent utilities. 
For instance, {\em Pareto optimality} (PO) is a popular efficiency concept which ensures that the value realized by no agent can be improved without diminishing that of another agent. %, or the utility of the worst-off agent. 
Fairness criteria require that agents do not perceive the resulting allocation as mistreating them compared to others; for example, one might want to ensure that no agent prefers another agent's assigned bundle (i.e. subset of goods) to her own bundle -- this criterion is known as {\em envy-freeness} (EF) \cite{Foley1967}. However, envy-freeness is not always achievable when items are indivisible: consider a stylized setting, where there is just one course with one seat for which two students are competing; any student receiving this slot would be envied by the other. A simple solution ensuring envy-freeness would be to withhold the seat altogether, not assigning it to either student.
Withholding items, however, violates most efficiency criteria.\footnote{One of the coauthors applied this solution when his children were fighting over a single toy; the method was ultimately deemed unsuccessful.}

As illustrated above and also observed by \citet{budish2011combinatorial}, envy-freeness is not always achievable, even under \emph{completeness}, a very weak efficiency criterion requiring that each item is allocated to some agent. 
However, a less stringent fairness notion --- {\em envy-freeness up to one good} (EF1) --- can be attained. An allocation is EF1 if for any two agents $i$ and $j$, there is some item in $j$'s bundle whose removal results in $i$ not envying $j$. Complete, EF1 allocations always exist for monotone valuations, and in fact, can be found in polynomial time, thanks to the now-classic \textit{envy graph algorithm} due to \citet{lipton2004approximately}.

It is already challenging to individually achieve strong allocative justice criteria; hence, computationally efficient methods that produce allocations satisfying multiple such criteria simultaneously are of particular interest. \citet{caragiannis2016unreasonable} show that when agent valuations are {\em additive} --- i.e. every agent $i$ values its allocated bundle as the sum of values of individual items --- there exist allocations that are both PO and EF1. Specifically, these are allocations that maximize the product of agents' utilities --- also known as the {\em Nash welfare} (MNW). Further work 
\cite{barman2018finding} shows that such allocations can be found in pseudo-polynomial time. 
While encouraging, these results are limited to agents with additive valuations. 
In particular, they do not apply to settings such as the course allocation problem described above (e.g. being assigned two courses with conflicting schedules will not result in additive gain), or other settings we describe later on. In fact, \citet{caragiannis2016unreasonable} left it open whether their result extends to other natural classes of valuation functions, such as the class of submodular valutions.\footnote{\citet{caragiannis2016unreasonable} do provide an instance of two agents with monotone supermodular/subadditive valuations where no allocation is PO and EF1.} 
At present, little is known about other classes of valuation functions --- this is where our work comes in.

\subsection{Our Contributions}\label{sec:contrib}
We focus on monotone submodular valuations with binary (or dichotomous) marginal gains, which are also known as {\em matroid rank valuations} \cite{oxley2011matroid}. 
In this setting, the added benefit of receiving another item is binary and obeys the law of diminishing marginal returns. 
This is equivalent to the class of valuations that can be captured by \textit{matroid} constraints. Matroids are mathematical structures that generalize the concept of linear independence beyond vector spaces; we refer the interested reader to \cite{oxley2011matroid} for further details. 
Matroids have proven to be a versatile framework for describing a variety of problem domains. In our fair allocation domain, each agent has a different matroid constraint over the collection of items, and her value for a bundle is determined by the size of a maximum independent set included in the bundle.

Matroid rank valuations naturally arise in many practical applications, beyond the course allocation problem described above (where students are limited to either approving/disapproving a class). For example, suppose that a government body wishes to fairly allocate public goods to individuals of different minority groups (say, in accordance with a diversity-promoting policy). 
This could apply to the assignment of kindergarten slots to children from different neighborhoods/socioeconomic classes\footnote{see, e.g. \url{https://www.ed.gov/diversity-opportunity}.} or of flats in public housing estates to applicants of different ethnicities \cite{benabbou2019fairness,benabbou2020price}. %\cite{deng2013publichousing}. 
A possible way of achieving group fairness in this setting is to model each minority group as an agent consisting of many individuals: each agent's valuation function is based on {\em optimally matching} items to its constituent individuals; envy naturally captures the notion that no group should believe that other groups were offered better bundles (this is the fairness notion studied by \citet{benabbou2019fairness}). Such assignment/matching-based valuations (known as OXS valuations \cite{paesleme2017gs}) are non-additive in general, and constitute an important subclass of submodular valuations.

Matroid rank functions correspond to submodular functions with binary (i.e. $\{0,1\}$) marginal gains. The binary marginal gains assumption is best understood in context of matching-based valuations described above --- in this scenario, it simply means that individuals either approve or disapprove of items, and do not distinguish between items they approve (we call OXS functions with binary individual preferences \boxs valuations). 
This is a reasonable assumption in kindergarten slot allocation (all approved/available slots are identical), and is implicitly made in some public housing mechanisms; for instance, Singapore housing applicants are required to effectively approve a subset of flats by selecting a block, and are precluded from expressing a more refined preference model. A similar assumption is made in student course selection, where students de-facto approve certain classes by signing up for them (and are thus precluded from expressing more refined preferences). 

What if we further assume that there are exogenous capacity constraints? This is the case in course selection (students may only approve at most a fixed number of classes), and in housing allocation (ethnic minorities in Singapore may only receive a fixed number of flats \cite{benabbou2020price}). 
In addition, imposing certain constraints on the underlying matching problem retains the submodularity of the agents' induced valuation functions: if there is a hard limit due to a \emph{budget} or an exogenous \emph{quota} (e.g. ethnicity-based quotas in Singapore public housing \cite{Parl1989,chua1991race,sim2003public,deng2013publichousing,kim2013singapore,wong2014estimating,benabbou2020price}; socioeconomic status-based quotas in certain U.S. public school admission systems such as Chicago Public Schools \cite{quick2016chicago,schools2017chicago,USedu2017,benabbou2020price} on the number of items each group is able or allowed to receive,
%such quotas exist in both the Singapore public housing system (ethnicity-based), and in allocating kindergarten slots in many U.S. cities (socioeconomic status-based) 
then agents' valuations are {\em truncated} matching-based valuations. Such valuation functions are not OXS, but are still matroid rank functions (i.e. submodularity is preserved). 
Since agents still have binary/dichotomous preferences over items even with the quotas in place, our results apply to this broader class as well.

Using the matroid framework, we obtain a variety of positive existential and algorithmic results on the compatibility of (approximate) envy-freeness with welfare-based allocation concepts.
The following is a summary of our main results (see also Figure~\ref{fig:blocks} and Table~\ref{cmplxty}):

\begin{enumerate}[label=(\alph*)]
	\item \label{maxuswef} For matroid rank valuations, we show that an EF1 allocation that also maximizes the \textit{utilitarian social welfare} or $\USW$ (hence is Pareto optimal) always exists and can be computed in polynomial time by a simple greedy algorithm.
	\item \label{MNWef1} For matroid rank valuations, we show that leximin\footnote{Roughly speaking, a leximin allocation is one that maximizes the realized valuation of the worst-off agent and, subject to that, maximizes that of the second worst-off agent, and so on.} and MNW allocations both possess the EF1 property. 
	\item \label{lmMNW} For matroid rank valuations, we provide a characterization of the leximin allocations; we show that they are identical to the minimizers of \emph{any} symmetric strictly convex function over utilitarian optimal allocations (equivalently, the maximizers of any symmetric strictly concave function over utilitarian optimal allocations). We obtain the same characterization for MNW allocations. 
	\item \label{polytimelmmnw} For \boxs valuations, we show that both leximin and MNW allocations can be computed efficiently. 
\end{enumerate}

\begin{table*}
	\begin{center}
		\begin{tabular}{|c||c|c|c|}
			\hline
			Valuation class & MNW & Leximin & $\max$-$\USW$+EF1\\\hline
			\boxs &poly (Th. \ref{thm:leximin:poly}) & poly (Th. \ref{thm:leximin:poly}) & poly (Th. \ref{thm:bin_all})\\\hline
			matroid rank & poly (BEF20) & poly (BEF20) & poly (Th. \ref{thm:bin_all}; BEF20)\\\hline
		\end{tabular}
	\end{center}
	\caption{Summary of our computational complexity results: ``poly" denotes polynomial; BEF20 refers to \citet{babaioff2020fair} \label{cmplxty}}
\end{table*}

\begin{figure*}
\centering
    \begin{subfigure}[b]{0.4\textwidth}
    \centering
    \includegraphics[width=\textwidth]{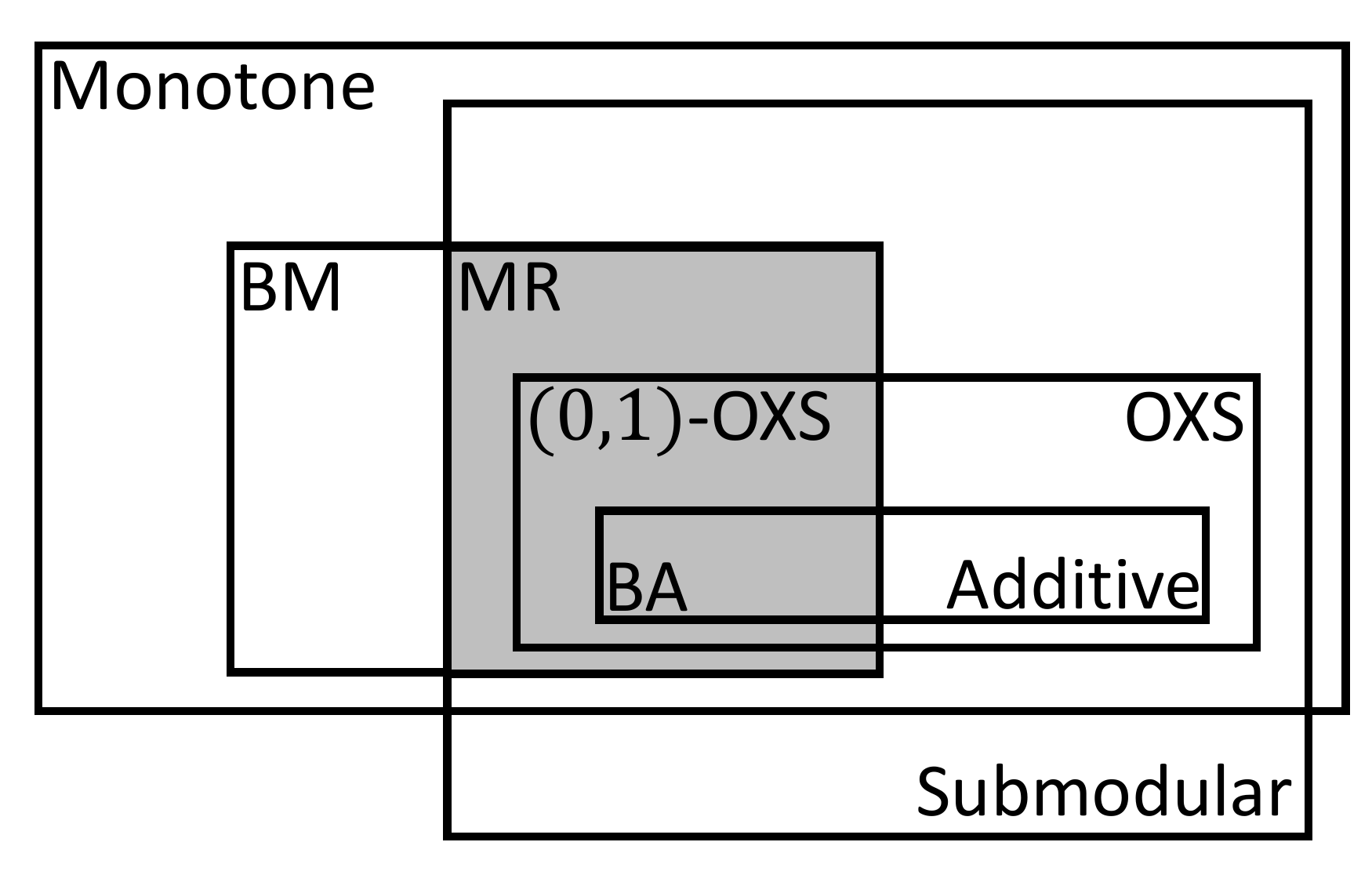}
    \caption{\label{fig:subsump}}
    \end{subfigure}
    \hspace{1cm}
    \begin{subfigure}[b]{0.3\textwidth}
    \centering
    \includegraphics[width=\textwidth]{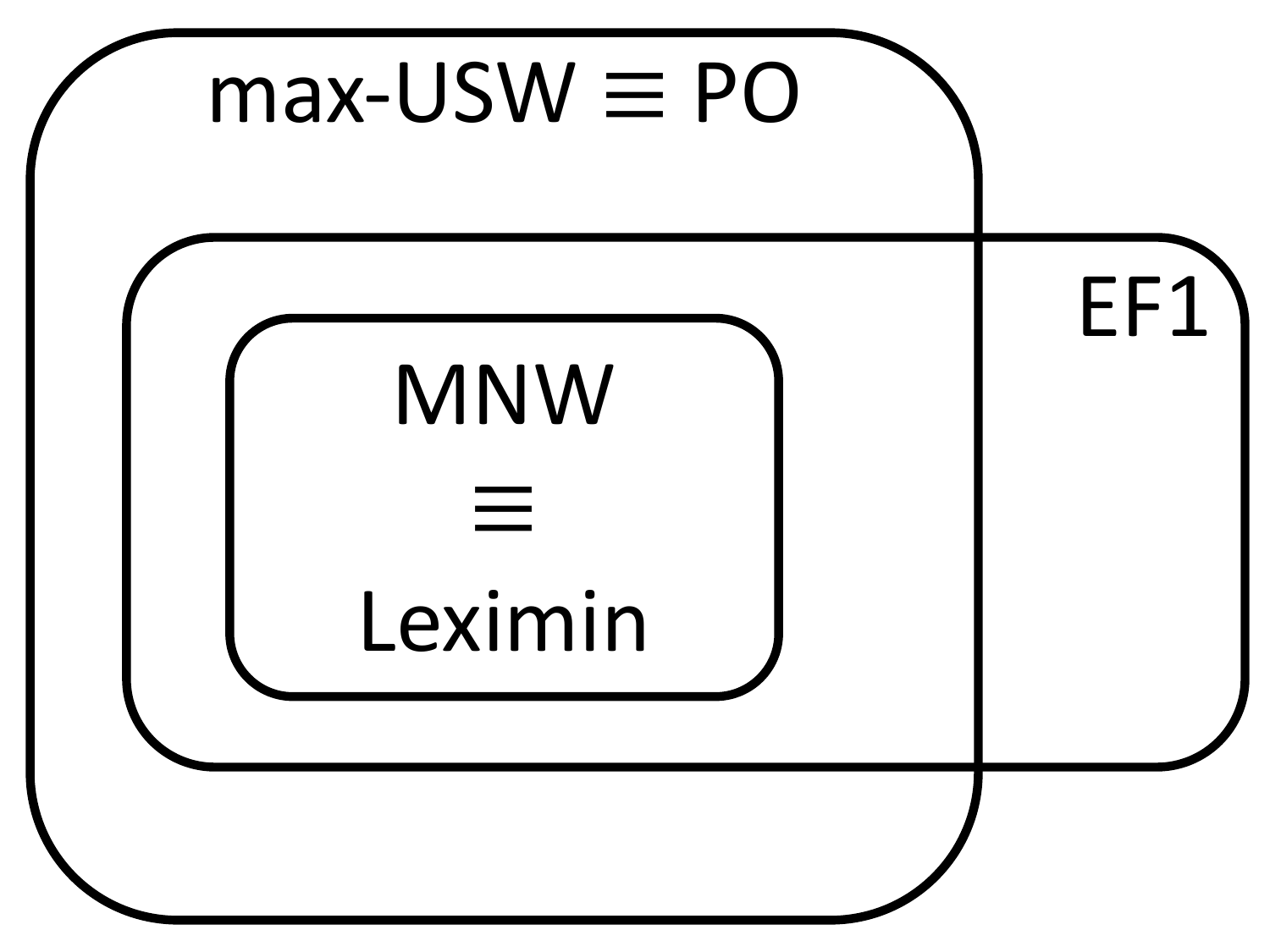}
    \caption{\label{fig:equivs}}
    \end{subfigure}
    \caption{(\protect\subref{fig:subsump}) Subsumption relations and intersections among the various valuation function classes defined in this paper: BM denotes the class of valuation functions with binary marginal gains, MR the matroid rank valuation function class (shaded) --- the class of main interest to us, and BA denotes the class of binary additive valuations. (\protect\subref{fig:equivs}) Equivalences and intersections among properties of clean allocations under the matroid rank valuation class. For both (\protect\subref{fig:subsump}) and (\protect\subref{fig:equivs}), sizes of blocks have no significance. \label{fig:blocks}}    
\end{figure*}

% \begin{figure}[t]
% 	\centering
% 	%\includegraphics[width=0.4\columnwidth]{existence}
% 	\includegraphics[width=0.28\columnwidth]{existence2}
% 	\caption{Equivalences among properties for the matroid rank valuation class under clean allocations.
% 		\label{fig:boxs_results}}
% \end{figure}

Result~\ref{maxuswef} is remarkably positive: the EF1 and utilitarian welfare objectives are incompatible in general, even for additive valuations, as shown by Example~\ref{ex:ef1notopt} in Appendix~\ref{app:egs}. In fact, maximizing the utilitarian social welfare among all EF1 allocations is NP-hard for general valuations \cite{barman2019fair}.
\appendixproof{Example where no EF1 allocation is utilitarian optimal}{
\begin{example}\label{ex:ef1notopt}
Consider an instance with three items and two agents Alice and Bob having additive valuations as described in Table \ref{table:ef1notopt}. 
\begin{table}[h!]
\begin{center}
\upshape
\setlength{\tabcolsep}{6.4pt}
\scalebox{1}{
\begin{tabular}{rccccc}
\toprule
&
\multicolumn{3}{l}{\!\!\!
\begin{tikzpicture}[scale=0.57, transform shape, every node/.style={minimum size=7mm, inner sep=1.2pt, font=\huge}]	
\node[draw, circle](2)  at (5.4,0) {$1$};
\node[draw, circle](3)  at (6.8,0) {$2$};
\node[draw, circle](4)  at (8.2,0) {$3$};
\end{tikzpicture}\!\!\!\!
\vspace{-2pt}
} \\
\midrule
Alice:\!\! & $\nicefrac{1}{4}$ & $\nicefrac{3}{8}$ &  $\nicefrac{3}{8}$ \\
Bob:\!\! &  $0$ &  $\nicefrac{1}{2}$ & $\nicefrac{1}{2}$   \\
\bottomrule
\end{tabular}
}
\end{center}
\caption{An instance where no EF1 allocation is utilitarian optimal.}
\label{table:ef1notopt}
\end{table}
The unique allocation maximizing $\USW$ is the one which gives item $1$ to Alice and items $2$ and $3$ to Bob for respective valuations $\nicefrac{1}{4}$ and $1$. However, even if either $2$ or $3$ were dropped from Bob's bundle, Alice would value it at $\nicefrac{3}{8} > \nicefrac{1}{4}$; hence, the unique utilitarian optimal allocation is not EF1 and so no EF1 allocation can be utilitarian optimal. \egend
\end{example}
}
 %Such strong welfare/fairness guarantees are not known even for simple classes such as {\em binary} additive valuations (i.e. the value of a subset of items is the sum of the values of individual items which are, in turn, valued at either 0 or 1 each), which are subsumed by the \bsub class.

Result~\ref{MNWef1} is reminiscent of the Theorem 3.2 in \citet{caragiannis2016unreasonable}, showing that any MNW allocation is PO and EF1 under \emph{additive} valuations; they also showed that a PO and EF1 allocation may not exist under subadditive/supermodular valuations (Theorem 3.3) and MNW does not imply EF1 for arbitrary, real-valued submodular functions (Appendix C Example C.3) but left the PO+EF1 existence question open for the submodular class. 
The open questions in this paper have received substantial attention in recent literature (for instance, progress has been made on EFX or envy-freeness up to the least valued item, see e.g. \cite{plaut2018almost}) but the PO+EF1 existence issue beyond additive valuations is yet to be settled.
To our knowledge, the class of matroid rank valuations is the first valuation class not subsumed by additive valuations for which the EF1 property of the MNW allocation and multiple alternative ways of achieving the PO+EF1 combination have been established. The other properties of the MNW principle that we have uncovered for this valuation class (results~ \ref{MNWef1} and \ref{lmMNW}) may be of independent interest (see the discussion in Section~\ref{sec:disc}). 

Our computational tractability results~\ref{polytimelmmnw} are significant since we know that for arbitrary real valuations, it is NP-hard to compute the following types of allocations: PO+EF even for the seemingly simple class of binary additive valuations which is subsumed by our \bsub class  (\citet{bouveret2008efficiency} Proposition $21$); leximin \cite{bezakova2005allocating}; and MNW \cite{nguyen2014computational}. Moreover, although previous work on binary additive valuations established the polynomial time-solvability of MNW (and thus finding a PO and EF1 allocation) via a clever algorithm based on a subtle running time analysis \cite{barman2018greedy}, we extend this result to the strictly larger \boxs class by uncovering connections to the rich literature on combinatorial optimization. For example, the result \ref{polytimelmmnw} will exploit the network flow technique, drawing the equivalence between leximin allocations and balanced network flows \cite{Murota2018}.

 Our analysis makes extensive use of tools and concepts from matroid theory \cite{oxley2011matroid}. 
 While some papers have explored the application of matroid theory to the fair division problem \cite{BiswasB18,Lauren2017}, we believe that ours is the first to provide with fairness and efficiency guarantees for a matroid-based valuation function class. 

In addition to the above main results~\ref{maxuswef}--\ref{polytimelmmnw}, we present in the appendices a discussion on non-envy-based fairness criteria for matroid rank valuations as well as our attempts at generalizing our results beyond this valuation class:
\begin{itemize}
    \item In Appendix~\ref{sec:subjbin}, we show that for the more general submodular valuations with \textit{subjective} binary marginal gains (for which adding an item to an agent's bundle increases her valuation either by zero or by an item-invariant but agent-specific constant), MNW and leximin allocations no longer coincide -- in particular, an MNW allocation retains the EF1 property whereas leximin allocations do not.
    \item In Appendix~\ref{sec:assval}, we formulate a heuristic extension of the fair allocation algorithm from Result~\ref{maxuswef} above that applies to general assignment (OXS) valuations and evaluate its efficiency in experiments based on a real-world data set \textit{MovieLens-ml-1m} \cite{movieLens}.
    \item In Appendix~\ref{sec:other_fairness}, we explore (approximate) proportionality, equitability, and the maximin share guarantee for matroid rank valuations in particular and submodular valuations in general.
\end{itemize}

%\noindent
\subsection{Related Work}\label{sec:relwork}
\subsubsection{Fair Division with General Valuations}
There is a vast and growing literature on fairness and efficiency issues in resource allocation. Early work on divisible resource allocation provides an elegant result: an allocation that satisfies envy-freeness and Pareto optimality always exists under mild assumptions on valuations \citep{varian}, and can be computed via the convex programming of \citet{eisenberg} for additive valuations. In the domain of the allocation of indivisible goods (see \citet{bouveret2016fair,markakis2017approximation} for an overview), \citet{budish2011combinatorial} was the first to formalize the notion of EF1 as an approximation to envy-freeness; but, it implicitly appears in \citet{lipton2004approximately}. 
More recently, \citet{caragiannis2016unreasonable} prove the discrete analogue of \citet{eisenberg}: MNW allocation satisfies EF1 and Pareto optimality for additive valuations. \citet{barman2018finding} provide a pseudo-polynomial-time algorithm for computing allocations satisfying EF1 and PO. 

Closely related to ours is the work of \citet{BiswasB18} who consider fair division under matroid constraints. In Section $6$ of \citet{BiswasB18}, the authors consider a specific setting where agents have identical additive valuations, and the aim is to find a fair allocation under the same matroid constraint. They show that if a \emph{feasible} allocation satisfying the matroid constraint exists, it can be transformed into an EF1 complete feasible allocation. Although our setting is different from theirs, the proof of Theorem~\ref{thm:bin_all} in our work uses a similar technique to the one used in Theorem $3$ of \citet{BiswasB18} in order to transfer an item from the envious bundle to the envied one.

In this paper, we admit allocations that may be incomplete (i.e. not all items are allocated to the agents under consideration) but satisfy strong fairness and efficiency guarantees. This brings us close to recent work on fairness with ``charity" by \citet{caragiannis2019envy} and \citet{chaudhury2020little}. 
%\citet{plaut2018almost} provide existence/impossibility results on Pareto optimality along with a stronger version of EF1 under various conditions. 
  %An interesting, recent extension of additive bundle valuations is the allocation of goods with connectivity constraints \cite{bilo2018almost,igarashi2019pareto}.

\subsubsection{Binary Additive Valuations}
Under binary preferences, agents either approve or disapprove of each good. % and the valuation for each subset of items is determined by the number of approved items in that set. 
Due to the advantage of simple elicitation, the use of binary preferences is widespread in social choice literature \cite{LacknerSkowron2020}. In the context of fair division, there has been recent progress on  the study of fair and efficient allocations in this domain. 

In particular, several positive computational results have been obtained for binary additive valuations. \citet{darmann2015nash} and \citet{barman2018greedy} show that the maximum Nash welfare can be computed efficiently for binary additive valuations while computing MNW allocations of indivisible items is hard in general. The work of \citet{barman2018greedy} further develops an efficient greedy algorithm to find an MNW allocation when the valuation of each agent %$i$ 
is a concave function %$f_i$ 
that depends on the number of items approved by her. %$i$. 
We note that this class of valuations does not subsume the class of \boxs valuations since bundles of the same number of approved items may have different values under the latter class;\footnote{Consider $3$ items, $o_1,o_2,o_3$, and a group of members $S=\{1,2,3\}$ with member $1$ assigning weight $1$ to items $o_1$ and $o_3$, and members $2$ and $3$ assigning weight $1$ to item $o_2$ only. The value of a maximum matching between $\{o_1,o_2\}$ and $S$ is $2$ while the value of a maximum matching between $\{o_1,o_3\}$ and $S$ is $1$.} hence the polynomial-time complexity result of \citet{barman2018finding} does not imply our Theorem \ref{thm:leximin:poly}. 

Independently of our work, \citet{ijcai2020AzizRey} show the equivalence between leximin and MNW in the context of binary additive valuations (Lemma $4$ of \citet{ijcai2020AzizRey}), which is a special case of our result \ref{lmMNW}. 
%\citet{ORTEGA202015} study the setting where, showing that the egalitarian solution is Lorenz dominant, unique in utilities, and group strategy-proof. 

\citet{halpern2020fair} show that for binary additive valuations, there is a group strategy-proof mechanism that returns an allocation satisfying utilitarian optimality and EF1 (Theorem $1$ of \citet{halpern2020fair}); dropping strategy-proofness, we generalize this result to the class of matroid rank valuations.

%We note that the class of binary additive valuations is a subclass of matroid rank valuations. Therefore, the positive results listed above are special cases of our results.   

%\mc{Should we elaborate on \citet{babaioff2020fair}?}
\subsubsection{Matroid Rank Valuations}
%The class of matroid rank valuations is a versatile framework that properly includes the class of binary additive valuations. 
\citet{babaioff2020fair} independently present a set of results similar to our own; moreover, they explore strategy-proof deterministic and randomized mechanisms for matroid rank valuations, showing that such mechanisms exist. Below, we compare our results with theirs in greater detail. 
\begin{itemize}
    \item \citet{babaioff2020fair} show that a mechanism returning a special MNW allocation (called Prioritized Egalitarian mechanism, or PE mechanism for short) achieves strategy-proofness, EFX$_0$\footnote{We provide the definition of EFX$_0$ in Remark \ref{remark:EFX} in Section \ref{sec:binsub}.}, and maximizes the social welfare for \bsub valuations. In addition, their mechanism runs in polynomial time (Theorem $1$ of \citet{babaioff2020fair}). 
    In this work, we present a simple algorithm that finds a social welfare-maximizing allocation satisfying EF1; however, the returned allocation may not be MNW or EFX$_0$ (see Examples \ref{runningex} and \ref{ex:ef1po_notefx}). In addition, we provide a polynomial-time algorithm returning a MNW allocation for \boxs valuations. 

    \item \citet{babaioff2020fair} show that for binary additive valuations, the PE mechanism achieves MMS (Proposition $5$ of \citet{babaioff2020fair}). They further prove that for general matroid rank functions, although PE mechanism may not satisfy full MMS, it achieves $\frac{1}{2}$MMS (Proposition $6$ of \citet{babaioff2020fair}). We also show that for binary additive valuations, PO and EF1 allocation is MMS (Proposition \ref{prop:MMS}). In addition, we show that there is an instance with \boxs valuations such that no PO and EF1 allocation is MMS. 

    \item \citet{babaioff2020fair} show that for matroid rank functions, PE mechanism minimizes the sum of squares. In this work, we prove a more general statement: for general matroid rank functions, any MNW/leximin minimizes a symmetric convex function of agents' valuations. 

    \item While \citet{babaioff2020fair} do not show the equivalence between MNW and leximin (as we do), they prove a lemma (Lemma $17$ of \citet{babaioff2020fair}) that corresponds to Lemma \ref{lem:sequence} of this work, which is crucial in establishing the equivalence. 

    \item \citet{babaioff2020fair} study the setting where the desirable items are allowed to take an arbitrary value in the range $[1, 1 + \epsilon]$ (called $\epsilon$-leveled valuations), aiming to generalize the positive existence result of a fair and strategy-proof mechanism. Note that the marginal can vary over items so the class of $\epsilon$-leveled valuations strictly generalizes the class of submodular functions with subjective binary marginal gains we study in Appendix \ref{sec:subjbin}. However, there is little overlap with respect to this domain. We show MNW still satisfies EF1 but the equivalence between MNW and leximin is lost for the class of submodular functions with subjective binary marginal gains, which has not been addressed in \citet{babaioff2020fair}. 
\end{itemize}

To conclude, our work was developed independently, and is conceptually different from  \citet{babaioff2020fair} in that the main focus of ours is on the fairness and efficiency compatibility and the properties of such allocations. 

%For instance, most of our results are based on an observation that the set of \emph{clean} bundles (i.e. bundles containing no item with zero marginal value for the agent under consideration) forms the set of independent sets of a matroid for every agent. 

\subsubsection{Other related work}
One motivation for this paper is recent work by \citet{benabbou2019fairness} on promoting diversity in assignment problems through efficient, EF1 allocations of bundles to attribute-based groups in the population. 
Similar works study quota-based fairness/diversity \cite[and references therein]{aziz2019matching,benabbou2020price,suzuki2018efficient}, or by the optimization of carefully constructed functions \cite[and references therein]{ahmed2017diverse,dickerson2019balancing,lang2016multi} in allocation/subset selection.

\section{Model and definitions}\label{sec:prelims}
Throughout the paper, given a positive integer $r$, let $[r]$ denote the set $\{1,2,\dots,r\}$. 
We are given a set $N = [n]$ of {\em agents}, and a set $O = \{o_1,\dots,o_m\}$ of {\em items} or {\em goods}. Subsets of $O$ are referred to as {\em bundles}, and each agent $i \in N$ has a {\em valuation function} $v_i:2^O \to \R_+$ over bundles where $v_i(\emptyset) = 0$.%, i.e all valuations are \emph{normalized}.
We further assume polynomial-time oracle access to the valuation $v_i$ of all agents. 
Given a valuation function $v_i:2^O \to \R$, we define the \emph{marginal gain} of an item $o \in O$ w.r.t. a bundle $S \subseteq O$, as $\Delta_i(S;o) \triangleq v_i(S\cup\{o\}) - v_i(S)$. A valuation function $v_i$ is  {\em monotone} if $v_i(S) \le v_i(T)$ whenever $S \subseteq T$. 

An {\em allocation} $A$ of items to agents is a collection of $n$ disjoint bundles $A_1,\dots,A_n$, such that $\bigcup_{i \in N} A_i \subseteq O$; the bundle $A_i$ is allocated to agent $i$. Given an allocation $A$, we denote by $A_0$ the set of unallocated items, also referred to as {\em withheld items}. 
We may refer to agent $i$'s valuation of its bundle $v_i(A_i)$ under the allocation $A$ as its {\em realized valuation} under $A$. 
%In much of the existing literature, an (implicit) efficiency criterion, not necessarily compatible with Pareto-optimality, that is often used is {\em completeness}, i.e., 
An allocation is {\em complete} if every item is allocated to \emph{some} agent, i.e. $A_0 = \emptyset$. 
%But in situations where agents may have zero marginal utilities for items, completeness can be thought of as wasteful: items may be reserved for future allocation or put to use elsewhere. 
We admit incomplete, but {\em clean} allocations: a bundle $S \subseteq O$ is {\em clean} for $i \in N$ if it contains no item $o \in S$ for which agent $i$ has zero marginal gain (i.e., $\Delta_i(S\setminus \{o\};o)=0$); allocation $A$ is \emph{clean} if each allocated bundle $A_i$ is clean for the agent $i$ that receives it.  
It is easy to `clean' any allocation without changing any realized valuation by iteratively revoking items of zero marginal gain from respective agents and placing them in $A_0$. For example, if for agent $i$, $v_i(\{1\})=v_i(\{2\})=v_i(\{1,2\})=1$, then the bundle $A_i=\{1,2\}$ is not clean for agent $i$ (and neither is any allocation where $i$ receives items $1$ and $2$) but it can be cleaned by moving item $1$ (or item $2$ but not both) to $A_0$.

\subsection{Fairness and Efficiency Criteria}\label{sec:allocative-efficiency}
%\subsection{Fairness Criteria}\label{sec:fairness-criteria}
Our fairness criteria are based on the concept of \emph{envy}. Agent $i$ \emph{envies} agent $j$ under an allocation $A$ if $v_i(A_i) < v_i(A_{j})$.
An allocation $A$ is {\em envy-free} (EF) if no agent envies another.
We will use the following relaxation of the EF property due to \citet{budish2011combinatorial}: we say that $A$ is {\em envy-free up to one good} (EF1) if, for every $i,j \in N$, $i$ does not envy $j$ or there exists $o$ in $A_{j}$ such that $v_i(A_i) \ge v_i(A_{j}\setminus \{o\})$.

The efficiency concept that we are primarily interested in is \emph{Pareto optimality}. 
An allocation $A'$ is said to \emph{Pareto dominate} the allocation $A$ if $v_i(A'_i) \ge v_i(A_i)$ for all agents $i \in N$ and $v_j(A'_j) > v_j(A_j)$ for some agent $j \in N$. 
An allocation is \emph{Pareto optimal} (or PO for short) if it is not Pareto dominated by any other allocation. 

Closely related to the concept of efficiency  is the welfare of an allocation which can be measured in several ways \cite{sen2018collective}. 
Specifically, given an allocation $A$, 
\begin{itemize}
\item its {\em utilitarian social welfare} is $\USW(A) \triangleq \sum_{i = 1}^n v_i(A_i)$; 
\item its {\em egalitarian social welfare} is $\ESW(A) \triangleq\min_{i \in N}v_i(A_i)$; 
\item its {\em Nash welfare} is $\NW(A)\triangleq \prod_{i \in N}v_i(A_i)$. 
\end{itemize}
An allocation $A$ is said to be \emph{utilitarian optimal} (respectively, \emph{egalitarian optimal}) if it maximizes $\USW(A)$ (respectively, $\ESW(A)$) among all allocations. 

Since it is possible that the maximum attainable Nash welfare is $0$ (e.g. if there are fewer items than agents, then one agent must have an empty bundle), we use the following refinement of the maximum Nash social welfare (MNW) criterion used in \cite{caragiannis2016unreasonable}: we find a largest subset of agents, say $N_{\max} \subseteq N$, to which we can allocate bundles of positive values, and compute an allocation to agents in $N_{\max}$ that maximizes the product of their realized valuations. If $N_{\max}$ is not unique, we choose the one that results in the highest product of realized valuations.

The {\em leximin} welfare is a lexicographic refinement of the maximin welfare concept, i.e. egalitarian optimality.
% amongst all allocations that maximize the egalitarian welfare, we select an allocation that maximizes the valuation of the second worst-off agent; amongst these we select one that maximizes the valuation of the third worst-off agent, and so on.
Formally, for real $n$-dimensional vectors $\bfx$ and $\bfy$, $\bfx$ is \textit{lexicographically greater than or equal to}  $\bfy$ (denoted by $ \bfx \geq_{L} \bfy$) if and only if $\bfx=\bfy$, or $\bfx \neq \bfy$ and for the minimum index $j$ such that $x_j \neq y_j$ we have $x_j >y_j$. 
For each allocation $A$, we denote by $\score(A)$ the vector of the components $v_i(A_i)$ $(i \in N)$ arranged in non-decreasing order.
A {\it leximin} allocation $A$ is an allocation that maximizes the egalitarian welfare in a lexicographic sense, i.e., $\score(A) \geq_{L} \score(A')$ for any other allocation $A'$.

\subsection{Submodular Valuations}\label{subsec:defs}
In this paper, agents' valuation functions are not necessarily additive but {\em submodular}. 
A valuation function $v_i$ is {\em submodular} if each single item contributes more to a smaller set than to a larger one, namely, for all $S\subseteq T \subseteq O$ and all $o \in  O\setminus T$, $\Delta_i(S;o) \ge \Delta_i(T;o)$.

%Submodularity is known to arise in many real-life applications. 
One important sub-class of submodular valuations is the class of {\em assignment valuations}. 
This class of valuations was introduced by \citet{shapley1958complements} and is %identical to 
synonymous with the OXS valuation class \cite{lehmann2006combinatorial,paesleme2017gs,balcan2012learning}. 
Fair allocation in this setting was explored by \citet{benabbou2019fairness}. 
Here, each agent $h \in N$ represents a group of individuals $N_h$ (such as ethnic groups and genders), each individual $i \in N_h$ (also called a \emph{member}) having a fixed non-negative weight $u_{i,o}$ for each item $o$. An agent $h$ values a bundle $S$ via a \emph{matching} of the items to its individuals (i.e. each item is assigned to at most one member and vice versa) that maximizes the sum of weights \cite{munkres1957algo}; namely,
\[
v_h(S)=\max \{\, \sum_{i \in N_h}u_{i,\pi(i)} \mid \mbox{$\pi \in \Pi(N_h,S)$}\,\}, 
\]
where $\Pi(N_h,S)$ is the set of matchings $\pi:N_h \rightarrow S$ in the complete bipartite graph with bipartition $(N_h,S)$. 

Our particular focus is on submodular functions with {\em binary marginal gains}. 
We say that $v_i$ has {\em binary marginal gains} if $\Delta_i(S;o) \in \{0,1\}$ for all $S\subseteq O$ and $o \in O\setminus S$. 
The class of submodular valuations with binary marginal gains includes the classes of binary additive valuations \cite{barman2018greedy} and of assignment valuations where the weight is binary \cite{benabbou2019fairness}. We say that $v_i$ is a {\em matroid rank} valuation if it is a submodular function with binary marginal gains (these are equivalent definitions \cite{oxley2011matroid}), and \boxs if it is an assignment valuation with binary marginal gains.\footnote{\boxs valuations coincide with rank functions of \textit{transversal} matroids \cite{balcan2012learning}.} The constrained assignment valuations discussed in the fourth paragraph of Section~\ref{sec:contrib} are examples of matroid rank valuations that are not $(0,1)$-OXS. 

\section{Matroid rank valuations}\label{sec:binsub}
The main theme of all results in this section is that, when all agents have \bsub valuations, fairness (EF1) and efficiency (PO) properties are compatible with each other and also with all three optimal welfare criteria we consider. 
Lemma~\ref{lem:opt_po} below shows that Pareto optimality of optimal welfare is unsurprising; but, it is non-trivial to prove the EF1 property in each case (the proof is available in Appendix~\ref{app:egs}).
\begin{lemma}\label{lem:opt_po}
For monotone valuations, every utilitarian optimal, leximin, and MNW allocation is Pareto optimal.
\end{lemma}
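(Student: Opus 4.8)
The plan is to treat the three allocation concepts one at a time, each time arguing by contradiction: suppose $A$ is, respectively, utilitarian optimal, leximin, or MNW, but is Pareto dominated by some allocation $A'$, so that $v_i(A'_i) \ge v_i(A_i)$ for every $i \in N$ while $v_j(A'_j) > v_j(A_j)$ for some $j \in N$.

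The utilitarian case is immediate: summing the domination inequalities over all agents gives $\USW(A') > \USW(A)$, contradicting utilitarian optimality of $A$. For the leximin case, I would first isolate the elementary ``monotonicity of sorting'' fact: if $\bfx,\bfy \in \R^n$ satisfy $x_i \ge y_i$ for all $i$, then the $k$-th smallest coordinate of $\bfx$ is at least the $k$-th smallest coordinate of $\bfy$ for every $k$ (which follows from the identity $x^{\uparrow}_k = \min_{|S| = k}\max_{i \in S} x_i$). Applying this with $\bfx = (v_i(A'_i))_{i \in N}$ and $\bfy = (v_i(A_i))_{i \in N}$ gives $\score(A') \ge \score(A)$ coordinatewise; moreover the strict gain at $j$ forces $\USW(A') > \USW(A)$, so $\score(A') \ne \score(A)$, and hence at the first coordinate where the two vectors differ $\score(A')$ is strictly larger. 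Thus $\score(A') \geq_{L} \score(A)$ with $\score(A') \ne \score(A)$, contradicting the leximin optimality of $A$.

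The MNW case is the one that really uses the model's refinement. Let $N_{\max}$ be the maximum-cardinality set of agents to which some allocation assigns positive value; then an MNW allocation $A$ satisfies $v_i(A_i) > 0$ exactly for $i \in N_{\max}$ (were some agent outside $N_{\max}$ to receive positive value under $A$, the set of agents with positive value under $A$ would be a strictly larger admissible set) and maximizes $\prod_{i \in N_{\max}} v_i(A_i)$. Since realized valuations only weakly increase under $A'$, every $i \in N_{\max}$ still has $v_i(A'_i) > 0$, so the set of agents with positive value under $A'$ contains $N_{\max}$. If it strictly contains $N_{\max}$, then $A'$ witnesses a strictly larger admissible set, contradicting the maximality of $N_{\max}$; hence it equals $N_{\max}$, which in particular places the strictly improved agent $j$ inside $N_{\max}$. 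But then $\prod_{i \in N_{\max}} v_i(A'_i) \ge \prod_{i \in N_{\max}} v_i(A_i)$ with the factor indexed by $j$ strictly larger and all factors positive, contradicting the maximality of the product attained by $A$.

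I expect the MNW case to be the only delicate one, specifically the bookkeeping around the two-stage selection (first the largest set of positively-valued agents, then the product on that set): one must rule out a Pareto improvement both when it would enlarge that set and when it would leave it fixed. Incidentally, none of the three arguments appears to use monotonicity of the $v_i$; it is presumably stated as the standing hypothesis of the paper.
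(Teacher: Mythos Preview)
Your proof is correct and follows essentially the same contradiction-based approach as the paper's, treating the three cases in the same way; you simply supply more detail, notably the min-max identity justifying that coordinatewise domination is preserved under sorting (which the paper asserts without proof) and a more explicit unpacking of the two-stage MNW definition. Your closing observation is also apt: neither your argument nor the paper's actually uses monotonicity of the valuations.
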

\appendixproof{Proof of Lemma~\ref{lem:opt_po}}{\label{app:proof_lem:opt_po}
	\begin{stmnt*}
		For monotone valuations, every utilitarian optimal, leximin, and MNW allocation is Pareto optimal.
	\end{stmnt*}
\begin{proof}
	If an allocation $A$ were not Pareto optimal, then there would be an allocation $A'$ with $v_i(A'_i) >  v_i(A_i)$ for at least one agent $i \in N$ and $v_j(A'_j) \ge  v_j(A_j)$ for every other agent $j \in N \setminus \{i\}$. This implies that $\USW(A') > \USW(A)$ so that $A$ cannot be utilitarian optimal. Moreover, the vector of realized valuations under $A'$ arranged in non-decreasing order must also be lexicographically strictly greater than that of $A$ (since the vectors must differ at the coordinate corresponding to agent $i$, if not earlier), hence $A$ cannot be leximin either. Finally, suppose that $A$ is an MNW allocation if possible, $N_{\max}$ being the subset of agents with strictly positive realized valuations: if $i \in N_{\max}$, then $\prod_{i \in N_{\max}}v_i(A'_i) > \prod_{i \in N_{\max}}v_i(A_i)$, contradicting the optimality of $\NW(A)$, and if $i \in N \setminus N_{\max}$, that would contradict the maximality of $N_{\max}$.
\end{proof}
}

We start the analysis of matroid rank valuations by introducing the basics of matroid theory. 
Formally, a \emph{matroid} is an ordered pair $(E,\calI)$, where $E$
is some finite set and $\calI$ is a family of its subsets (referred to
as the \emph{independent sets} of the matroid), which satisfies the following three axioms: 
\begin{enumerate}[label=(I\arabic*)]
	\item $\emptyset \in \calI$,
	\item if $Y \in \calI$ and $X \subseteq Y$, then $X \in \calI$, and
	\item if $X,Y \in \calI$ and $|X| > |Y|$, then there exists $x \in X \setminus Y$ such that $Y\cup \{x\} \in \calI$.
\end{enumerate}
The rank function $r:2^E \rightarrow \Z$ of a matroid returns the {\em rank} of each set $X$, i.e. the maximum size of an independent subset of $X$. 
Another equivalent way to define a matroid is to use the axiom systems for a rank function.  We require that 
(R1)~$r(X) \leq |X|$, 
(R2)~$r$ is monotone, and 
(R3)~$r$ is submodular. 
Then, the pair $(E, \calI)$ where $\calI=\{\, X \subseteq E \mid r(X)=|X| \,\}$ is a matroid \cite{oxley2011matroid}. In other words, if $r$ satisfies properties (R1)--(R3) then it induces a matroid.

Within the fair allocation context, if an agent has a matroid rank valuation, then the set of \emph{clean} bundles forms the set of independent sets of a matroid. The following are useful properties of matroid rank valuations (the proofs are in Appendix~\ref{app:egs}).
\begin{proposition}\label{prop:binmargprops}
	A valuation function $v_i$ with binary marginal gains is monotone and takes values in $[|S|]$ for any bundle $S$ (hence $v_i(S) \le |S|$).
\end{proposition}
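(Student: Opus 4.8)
The plan is to prove the two claims in Proposition~\ref{prop:binmargprops}—monotonicity and the bound $v_i(S) \le |S|$—directly from the definition of binary marginal gains, namely that $\Delta_i(S;o) \in \{0,1\}$ for every $S \subseteq O$ and every $o \in O \setminus S$.

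First I would establish monotonicity. Take any $S \subseteq T \subseteq O$; I want $v_i(S) \le v_i(T)$. Enumerate the elements of $T \setminus S$ as $o_1, \dots, o_k$ and build up $T$ from $S$ one item at a time, setting $S_0 = S$ and $S_\ell = S_{\ell-1} \cup \{o_\ell\}$ so that $S_k = T$. Then
\[
v_i(T) - v_i(S) = \sum_{\ell=1}^{k} \bigl( v_i(S_\ell) - v_i(S_{\ell-1}) \bigr) = \sum_{\ell=1}^{k} \Delta_i(S_{\ell-1}; o_\ell) \ge 0,
\]
since each marginal gain lies in $\{0,1\}$ and is therefore nonnegative. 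This gives monotonicity.

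Next I would prove $v_i(S) \le |S|$, i.e. that $v_i(S) \in [|S|]$ (together with the trivial $v_i(\emptyset) = 0$, this also shows $v_i$ is integer-valued on the relevant range, which is the natural reading of "takes values in $[|S|]$"). Again write $S = \{o_1, \dots, o_{|S|}\}$, set $S_0 = \emptyset$, $S_\ell = S_{\ell-1} \cup \{o_\ell\}$, so $S_{|S|} = S$. Telescoping and using $v_i(\emptyset) = 0$,
\[
v_i(S) = \sum_{\ell=1}^{|S|} \Delta_i(S_{\ell-1}; o_\ell) \le \sum_{\ell=1}^{|S|} 1 = |S|,
\]
where the inequality is because each term is at most $1$. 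The same telescoping sum, with each term being a nonnegative integer, shows $v_i(S)$ is a nonnegative integer, so $v_i(S) \in \{0, 1, \dots, |S|\}$.

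I do not anticipate a genuine obstacle here: the statement is essentially a bookkeeping exercise with telescoping sums, and the only subtlety is being careful that the definition of binary marginal gains is quantified over \emph{all} subsets $S$ (so the partial sums $S_\ell$ are legitimately covered). If the intended meaning of "$[|S|]$" is the set $\{1, \dots, |S|\}$ rather than $\{0, 1, \dots, |S|\}$, one should note the harmless edge case $S = \emptyset$ separately, or read the claim as applying to nonempty $S$; either way the argument above covers it. One could alternatively invoke submodularity/subadditivity if one wanted $v_i(S) \le \sum_{o \in S} v_i(\{o\}) \le |S|$, but the direct telescoping argument is cleaner and does not even require submodularity—only that marginals are in $\{0,1\}$.
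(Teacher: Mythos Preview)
Your proof is correct and follows essentially the same approach as the paper: both arguments use a telescoping sum over the marginal gains to establish monotonicity and the bound $v_i(S)\le |S|$, differing only in minor notational choices.
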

\appendixproof{Proof of Proposition \ref{prop:binmargprops}}{\label{proof_prop:binmargprops}
	\begin{stmnt*}
		A valuation function $v_i$ with binary marginal gains is monotone and takes values in $[|S|]$ for any bundle $S$ (hence $v_i(S) \le |S|$).
	\end{stmnt*}
	\begin{proof}
		Consider subsets of items $T \subset S \subseteq O$ such that $S \setminus T =\{o_1,o_2,\dots,o_r\}$ where $r = |S \setminus T|$. Define $S_0=\emptyset$ and $S_t=\{o_1,o_2,\ldots,o_{t}\}$ for each $t \in [r]$. This gives us the following telescoping series:
		\begin{align*}
		v_i(S) - v_i(T) &=\sum^r_{t=1} (v_i(T \cup S_t) - v_i(T \cup S_{t-1}))\\
		&= \sum^r_{t=1} (v_i(T \cup S_{t-1} \cup \{o_t\}) - v_i(T \cup S_{t-1}))\\ &=\sum^r_{t=1}\Delta_i(T \cup S_{t-1};o_t).
		\end{align*}
		Since all marginal gains are binary, $\Delta_i(T \cup S_{t-1};o_t) \ge 0$ for every $t \in [r]$, hence the above identity implies $v_i(S) - v_i(T) \ge 0$ for $S \supset T$, i.e. $v_i$ is monotone.
		
		Moreover, by setting $T = \emptyset$ and noting that $\Delta_i(T \cup S_{t-1};o_t) \le 1$ for every $t \in [r]$, we get
		$v_i(S) \le v_i(\emptyset)+r = 0 + |S \setminus \emptyset| = |S|$.
	\end{proof}
}

This property leads us to the following equivalence between the size and realized valuation of every \emph{clean} allocated bundle for the \bsub valuation class --- a crucial component of all our proofs. 
Note that cleaning any optimal-welfare allocation leaves the welfare unaltered and ensures that each resulting withheld item is of zero marginal gain to each agent; hence it preserves the PO condition.

\begin{proposition}\label{prop:clean_size}
	For matroid rank valuations, $A$ is a clean allocation if and only if $v_i(A_i)=|A_i|$ for each $i \in N$.
\end{proposition}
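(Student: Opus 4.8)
The plan is to prove both directions by exploiting the telescoping-series idea already used in the proof of Proposition~\ref{prop:binmargprops}, together with the characterization of a clean bundle as one containing no item of zero marginal gain relative to the rest of the bundle.

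\textbf{($\Leftarrow$).} Suppose $v_i(A_i)=|A_i|$ for every $i\in N$; I want to show $A_i$ is clean for $i$. Fix $i$ and suppose toward a contradiction that there is some $o\in A_i$ with $\Delta_i(A_i\setminus\{o\};o)=0$, i.e. $v_i(A_i\setminus\{o\})=v_i(A_i)=|A_i|$. But $A_i\setminus\{o\}$ has size $|A_i|-1$, so by Proposition~\ref{prop:binmargprops} we have $v_i(A_i\setminus\{o\})\le |A_i|-1 < |A_i|$, a contradiction. Hence no such $o$ exists and $A_i$ is clean; since $i$ was arbitrary, $A$ is clean.

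\textbf{($\Rightarrow$).} Suppose $A$ is clean; fix $i$ and write $A_i=\{o_1,\dots,o_r\}$ with $r=|A_i|$. Order the items so that, for each $t\in[r]$, item $o_t$ has nonzero marginal gain with respect to $A_i\setminus\{o_t\}$ — this is exactly the cleanness hypothesis, and by submodularity the marginal gain of $o_t$ with respect to the smaller set $\{o_1,\dots,o_{t-1}\}$ is at least its marginal gain with respect to the larger set $A_i\setminus\{o_t\}=\{o_1,\dots,o_{t-1}\}\cup\{o_{t+1},\dots,o_r\}$, hence also nonzero. Since marginal gains are binary, each such marginal gain equals exactly $1$. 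Telescoping as in Proposition~\ref{prop:binmargprops},
\[
v_i(A_i)=\sum_{t=1}^r \Delta_i(\{o_1,\dots,o_{t-1}\};o_t)=\sum_{t=1}^r 1 = r = |A_i|.
\]
Doing this for every $i$ gives the claim.

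The only subtle point — and the step I would be most careful about — is the ($\Rightarrow$) direction's use of submodularity to pass from ``$o_t$ has nonzero marginal relative to $A_i\setminus\{o_t\}$'' (which is what cleanness literally gives, for every item simultaneously) to ``$o_t$ has nonzero marginal relative to the prefix $\{o_1,\dots,o_{t-1}\}$'' (which is what the telescoping sum needs). Monotonicity of marginals under submodularity handles this cleanly and in fact makes the ordering of $A_i$ irrelevant. Everything else is a direct appeal to Proposition~\ref{prop:binmargprops} and the definition of cleanness, so I do not anticipate any real obstacle.
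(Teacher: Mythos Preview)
Your proof is correct and follows essentially the same approach as the paper: both directions use Proposition~\ref{prop:binmargprops} for the ($\Leftarrow$) direction via contraposition, and for ($\Rightarrow$) use submodularity to lift the cleanness condition $\Delta_i(A_i\setminus\{o_t\};o_t)=1$ to the prefix marginals $\Delta_i(\{o_1,\dots,o_{t-1}\};o_t)=1$, then telescope. Your explicit flagging of the submodularity step is apt; the paper handles it identically.
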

\appendixproof{Proof of Proposition \ref{prop:clean_size}}{\label{proof_prop:clean_size}
	\begin{stmnt*}
		For matroid rank valuations, $A$ is a clean allocation if and only if $v_i(A_i)=|A_i|$ for each $i \in N$.
	\end{stmnt*}
	\begin{proof}
	The ``if'' part: Suppose, there is an item $o \in S$ such that $\Delta_i(S \setminus \{o\};o)=0$. Now, by Proposition~\ref{prop:binmargprops}, $v_i(S \setminus \{o\}) \le |S \setminus \{o\}|=|S|-1$ since $o \in S$. This implies that $v_i(S)=v_i(S \setminus \{o\})+\Delta_i(S \setminus \{o\};o) < |S|$. Thus, by contraposition, if $v_i(S)=|S|$, then $\Delta_i(S \setminus \{o\};o)=1$ $\forall o \in S$ since the marginal gain can be either $0$ or $1$, i.e. $S$ is a clean bundle for $i$.
		
	The ``only if'' part: As in the proof of Proposition~\ref{prop:binmargprops}, let $S=\{o_1,o_2,\dots,o_r\}$; define $S_0=\emptyset$ and $S_t=\{o_1,o_2,\ldots,o_{t}\}$ for each $t \in [r]$. By the definition of cleanness, $\Delta_i(S \setminus \{o_t\};o_t)=1$ $\forall t \in [r]$. Since $S_{t-1} \subseteq S \setminus \{o_t\}$ for every $t \in [r]$, 
		$\Delta_i(S_{t-1};o_t) \ge \Delta_i(S \setminus \{o_t\};o_t) = 1$; moreover 
		due to marginal gains in $\{0,1\}$, we must have $\Delta_i(S_{t-1};o_t)=1$ for every $t \in [r]$. Hence, $v_i(S) = \sum_{t=1}^r \Delta_i(S_{t-1};o_t) = r = |S|$.
	\end{proof}
At this point, it is worthwhile to discuss the interplay between cleanness and completeness. It is obvious that every instance admits a utilitarian optimal allocation that is complete (since, given any incomplete utilitarian optimal allocation, we can add withheld items to arbitrary bundles until completeness is reached, keeping the utilitarian social welfare unchanged) and also one that is clean (for analogous reasons). But, it may not be possible to achieve these two properties simultaneously under optimal utilitarian social welfare, as the next result shows. 

\begin{corollary}\label{cor:cleancomp}
	For an instance where all agents have matroid rank valuations, every utilitarian optimal allocation is clean as well as complete if and only if the maximum utilitarian social welfare for the instance under consideration is equal to the number of items. 
\end{corollary}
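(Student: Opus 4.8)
The plan is to derive the corollary almost immediately from Propositions~\ref{prop:binmargprops} and~\ref{prop:clean_size} together with the remark preceding the statement. Write $\OPT$ for the maximum utilitarian social welfare of the instance. The first step I would take is to record the universal bound $\OPT \le m$: for \emph{any} allocation $A$ the bundles are disjoint, so $\sum_{i\in N}|A_i| = |\bigcup_{i\in N}A_i| \le m$, while Proposition~\ref{prop:binmargprops} gives $v_i(A_i)\le|A_i|$ for every $i\in N$; chaining these yields $\USW(A)=\sum_{i\in N}v_i(A_i)\le\sum_{i\in N}|A_i|\le m$. Hence either $\OPT=m$ or $\OPT<m$, and I would handle the two implications in these two cases.

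For the ``if'' direction I would assume $\OPT=m$ and take an arbitrary utilitarian optimal allocation $A$. Then the chain $m=\OPT=\sum_{i\in N}v_i(A_i)\le\sum_{i\in N}|A_i|\le m$ must be an equality throughout. Equality in the left inequality forces $v_i(A_i)=|A_i|$ for every $i\in N$, so $A$ is clean by Proposition~\ref{prop:clean_size}; equality in the right inequality forces $\bigcup_{i\in N}A_i=O$, so $A$ is complete. This gives exactly the claim that every utilitarian optimal allocation is clean and complete.

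For the ``only if'' direction I would argue by contraposition: assuming $\OPT<m$, I would exhibit one utilitarian optimal allocation that is not clean-and-complete. By the discussion just before the corollary, some utilitarian optimal allocation is clean; call it $A$. Proposition~\ref{prop:clean_size} then gives $|\bigcup_{i\in N}A_i|=\sum_{i\in N}|A_i|=\sum_{i\in N}v_i(A_i)=\OPT<m$, so $A_0\neq\emptyset$ and $A$ is incomplete. Thus the universally quantified statement on the left-hand side fails.

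I do not expect a genuine obstacle here: once Propositions~\ref{prop:binmargprops} and~\ref{prop:clean_size} are available, the corollary is just bookkeeping around the inequality $\USW(A)\le\sum_i|A_i|\le m$. The only point deserving a little care is the logical shape of the equivalence --- the left side quantifies over \emph{all} utilitarian optimal allocations --- so for the forward implication it is cleanest to prove the contrapositive by producing a single clean-but-incomplete optimal allocation, which the cleaning procedure from the preamble supplies.
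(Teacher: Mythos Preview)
Your proposal is correct and follows essentially the same approach as the paper: both proofs amount to bookkeeping around the chain $\USW(A)\le\sum_i|A_i|\le m$ together with Propositions~\ref{prop:binmargprops} and~\ref{prop:clean_size}. The only cosmetic difference is that you and the paper swap which implication is proved directly and which by contraposition---your direct squeeze argument for the ``if'' direction is in fact slightly slicker than the paper's case analysis.
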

\begin{proof}
	Since $\sum_{i \in N} |A_i| \le m$ for any allocation $A$, it follows readily from Proposition~\ref{prop:binmargprops} that the maximum utilitarian social welfare of any instance cannot exceed the total number of items $m$ under binary marginal gains.
	
	If an instance admits a utilitarian optimal allocation $A^*$ that is incomplete, i.e. $|A_0|>0$, then $\USW(A^*) = \sum_{i \in N} v_i(A_i) \le \sum_{i \in N} |A_i| < m$. Likewise, if there is an $A^*$ that is complete but not clean, then there is at least one agent $i$ and one item $o \in A^*_i$ such that $\Delta_i(A^*_i\setminus \{o\};o)=0$; hence, $v_i(A^*_i)=v_i(A^*_i\setminus \{o\}) \le |A^*_i\setminus \{o\}| = |A^*_i|-1$. Thus, $\USW(A^*) \le |A^*_i|-1 + \sum_{j \in N\setminus\{i\}}|A^*_j| \le m-1 < m$. Taking the contraposition proves the ``if'' part.   
	
	For the ``only if'' part, note that if an allocation $A$ is complete, then $\sum_{i \in N} |A_i|=m$, and if it is clean under \bsub valuations, then $v_i(A_i)=|A_i|$ for every $i \in N$ by Proposition~\ref{prop:clean_size}. So, if a clean, complete allocation exists under \bsub valuations, then $\USW(A)=m$ which is the highest feasible utilitarian social welfare regardless of the specific valuation functions. Hence, $A$ must be utilitarian optimal with a $\USW$ of $m$.
\end{proof}

A simple example where each utilitarian optimal allocation is either not complete or not clean: $N=\{1,2\}$; $O=\{o_1,o_2,o_3,o_4\}$; $v_1(S) = 1$ and $v_2(S)=\max\{2,|S|\}$ for every $S \in 2^O \setminus \emptyset$. It is easy to see that both $v_1$ and $v_2$ are both \bsub valuations, and the $\USW$ cannot exceed $1+2=3<4$. In any utilitarian optimal allocation, any one item goes to agent $1$ for a valuation of $1$, any two of the remaining items go to agent $2$ for a valuation of $2$, and the final item may be arbitrarily allocated to either agent (not clean) or withheld (incomplete).
}

%It would thus appear that cleaning (and hence withholding some items) might be necessary for achieving the desired fairness-efficiency combination. In fact, in Section~\ref{sec:utiloptef1}, we show that focusing on clean allocations helps us design an elegantly simple and provably polynomial-time algorithm (Algorithm~\ref{alg_bin}) that zeros in on an EF1 and utilitarian optimal (but potentially incomplete) allocation under \bsub valuations --- thus, cleaning followed by further processing of a utilitarian optimal allocation is \emph{sufficient} for achieving the EF1+PO combination.

Example~\ref{ex:envygraph_notPO} in Appendix~\ref{app:lipton_notpo}  
shows that \citet{lipton2004approximately}'s classic envy graph algorithm does not guarantee a Pareto optimal allocation under \bsub valautions (although the output allocation is complete and EF1), and thus underscores the difficulty of finding the PO+EF1 combination under this valuation class. Moreover, note that in the simple example of one good and two agents each valuing the good at $1$, both agents' valuation functions belong to the class under consideration --- this shows that an envy-free and Pareto optimal allocation may not exist even under this class, and further justifies our quest for EF1 and Pareto-optimal allocations. 
\appendixproof{Example showing that \citet{lipton2004approximately}'s algorithm may not produce a Pareto optimal allocation under matroid rank valuations}{\label{app:lipton_notpo}
The algorithm under consideration works as follows: in each iteration, a new item is allocated to an arbitrary agent not currently envied by any other agent; the envy graph is constructed by drawing a directed edge from an agent to every agent it envies; if a cycle forms in the graph, it is eliminated by transferring bundles from envied to envious agent on the (reverse) cycle, starting with the smallest cycle in case of overlapping cycles. We can augment the first step of the above algorithm with a natural heuristic: allocate the item under consideration to an agent that has the maximum marginal gain from it, breaking ties arbitrarily --- for valuations with binary marginal gains, this is equivalent to giving the item to an agent whose marginal gain for it (given its current bundle) is $1$, and to an arbitrary agent if none has non-zero marginal gain for it. 
\begin{example}\label{ex:envygraph_notPO}
Consider $2$ agents and  $2$ items such that $v_1(o_1) = v_1(o_2) = v_1(o_1,o_2)=1$, $v_2(o_1) =v_2(o_1,o_2)=1$ and $v_2(o_2)=0$. \citet{lipton2004approximately}'s algorithm may assign $o_1$ to agent $1$; then $o_2$ will be arbitrarily allocated, resulting in an allocation $A$ with $v_1(A_1)=1$ and $v_1(A_2)=0$. This is Pareto dominated by $A'_1=\{o_2\}$, $A'_2=\{o_1\}$ where each agent realizes a valuation of $1$. The myopic\footnote{The algorithm is \emph{myopic} in the sense that the allocation of a new item in each iteration does not take into account its downstream impact on efficiency and is only geared towards maintaining the EF1 invariant.}, sequential nature of the algorithm results in this undesirable outcome. \egend
\end{example} 
}

\subsection{Finding a Utilitarian Optimal and EF1 Allocation}\label{sec:utiloptef1}
We will now establish that the existence of a PO+EF1 allocation, proved for additive valuations by \citet{caragiannis2016unreasonable}, extends to the class of \bsub valuations. 
In fact, we provide a stronger --- and surprisingly strong --- relation between efficiency and fairness: utilitarian optimality (stronger than Pareto optimality) and EF1 turn out to be mutually compatible under this valuation class. Moreover, such an allocation can be computed in polynomial time!
\begin{theorem}\label{thm:bin_all}
	For \bsub valuations, a utilitarian optimal allocation that is also EF1 exists and can be computed in polynomial time.
\end{theorem}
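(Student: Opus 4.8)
The plan is to keep, as a loop invariant, a \emph{clean} utilitarian optimal allocation, and to repair EF1 violations one item at a time by moving a carefully chosen item from an envied agent to an envious one in a way that preserves both cleanness and $\USW$. The starting point is a clean utilitarian optimal allocation, which exists and is computable in polynomial time: by (R1)--(R3) together with Proposition~\ref{prop:binmargprops}, each $v_i$ is the rank function of a matroid $\mathcal{M}_i$ on $O$, and by Proposition~\ref{prop:clean_size} a clean allocation is exactly a tuple of pairwise disjoint sets $A_i$ with each $A_i$ independent in $\mathcal{M}_i$; so maximizing $\USW(A)=\sum_i v_i(A_i)$ over clean allocations is a matroid union computation, and the matroid union algorithm returns the optimal value together with the partition $(A_1,\dots,A_n)$ using polynomially many calls to the rank ($=$ valuation) oracles. (Equivalently, one may take any utilitarian optimal allocation and clean it, which, as noted in the text, changes neither the welfare nor Pareto optimality.)

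The core is a transfer lemma. Suppose $A$ is clean and utilitarian optimal and agent $i$ envies agent $j$. By Proposition~\ref{prop:clean_size}, $v_i(A_i)=|A_i|$ and $A_i$ is independent in $\mathcal{M}_i$; since $v_i(A_j)>v_i(A_i)=|A_i|$, a maximum independent subset $B\subseteq A_j$ of $\mathcal{M}_i$ satisfies $|B|>|A_i|$, so the augmentation axiom (I3) yields $o\in B\setminus A_i\subseteq A_j$ with $A_i\cup\{o\}$ independent, i.e.\ $\Delta_i(A_i;o)=1$. Moving $o$ from $A_j$ to $A_i$ keeps the allocation clean ($A_i\cup\{o\}$ is independent hence clean for $i$, and $A_j\setminus\{o\}$, a subset of a clean bundle, stays clean for $j$) and keeps $\USW$ unchanged, because agent $i$ gains exactly $1$ while agent $j$ loses exactly $1$ (as $v_j(A_j\setminus\{o\})=|A_j|-1$). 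Thus the invariant is preserved under such transfers.

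For termination and the EF1 guarantee, use the potential $\Phi(A)=\sum_i|A_i|^2$. The transfer above changes $\Phi$ by $2\bigl(|A_i|-|A_j|+1\bigr)$, which is $\le 0$ since $|A_i|<v_i(A_j)\le|A_j|$. The key point is that whenever $A$ fails EF1 there is a witnessing pair $(i,j)$ for which the change is \emph{strictly} negative: if $i$ envies $j$ and $v_i(A_i)<v_i(A_j\setminus\{o\})$ for every $o\in A_j$, then $A_j$ cannot be independent in $\mathcal{M}_i$ with $|A_j|=|A_i|+1$ (that would force $v_i(A_j\setminus\{o\})=|A_i|$ for all $o$), so necessarily $|A_i|\le|A_j|-2$ and $\Phi$ drops by at least $2$. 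Since $\Phi$ is a nonnegative integer at most $m^2$, the loop ``while $A$ is not EF1, pick a witnessing pair $(i,j)$ and transfer an item of marginal gain $1$ from $A_j$ to $A_i$'' halts after $O(m^2)$ transfers, each implementable with polynomially many oracle calls (scan the $n^2$ ordered pairs to detect an EF1 violation, and scan $A_j$ for an item of marginal gain $1$). The returned allocation is, by construction of the exit condition and the preserved invariant, a clean utilitarian optimal EF1 allocation.

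The main obstacle is precisely this potential analysis: one must observe that an EF1 violation is always ``strong'' enough ($|A_i|\le|A_j|-2$) that the balancing transfer makes quantifiable progress in $\Phi$, whereas a transfer prompted merely by EF1-respecting envy (where $|A_i|=|A_j|-1$) leaves $\Phi$ unchanged and could in principle cycle — so it is essential to transfer only on genuine EF1 violations. The transfer lemma itself (and the checks that it preserves cleanness and $\USW$) is then a routine consequence of the matroid axioms once Propositions~\ref{prop:binmargprops} and~\ref{prop:clean_size} are available.
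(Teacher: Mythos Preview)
Your proposal is correct and follows essentially the same approach as the paper: start from a clean utilitarian optimal allocation (the paper obtains this via matroid intersection, you via matroid union, but these are equivalent formulations here), then repeatedly transfer a marginal-gain-$1$ item from an envied to an envious agent while tracking the potential $\sum_i |A_i|^2 = \sum_i v_i(A_i)^2$, using that an EF1 violation forces a size gap of at least $2$ (the paper packages this as Lemma~\ref{lem:envy_size}). Your justification of the gap-$2$ claim is slightly compressed --- you should make explicit that envy already forces $|A_j|\ge|A_i|+1$, and that $|A_j|=|A_i|+1$ together with $v_i(A_j)>|A_i|$ forces $A_j$ independent in $\mathcal{M}_i$, which then contradicts the EF1 violation --- but the argument is sound.
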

Our result is constructive: we provide a way of computing the above allocation in Algorithm~\ref{alg_bin}. The proof of Theorem~\ref{thm:bin_all} and those of the latter theorems utilize Lemmas~\ref{lem:revoc_realloc} and~\ref{lem:envy_size} which shed light on the interesting interaction between envy and \bsub valuations. 

\begin{lemma}[Transferability property]\label{lem:revoc_realloc}
For monotone submodular valuation functions, if agent $i$ envies agent $j$ under an allocation $A$, then there is an item $o \in A_j$ for which $i$ has a positive marginal gain with respect to $A_i$. 
\end{lemma}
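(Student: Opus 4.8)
The plan is to argue by contradiction via submodularity. Suppose agent $i$ envies agent $j$ under $A$, so $v_i(A_i) < v_i(A_j)$, but $i$ has zero marginal gain for \emph{every} item $o \in A_j$ with respect to $A_i$, i.e. $\Delta_i(A_i;o) = 0$ for all $o \in A_j$. I want to conclude that $v_i(A_i \cup A_j) = v_i(A_i)$, which together with monotonicity ($v_i(A_i \cup A_j) \ge v_i(A_j)$) yields $v_i(A_i) \ge v_i(A_j)$, contradicting envy.

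The key step is to upgrade ``zero marginal gain for each single item of $A_j$ with respect to $A_i$'' to ``zero marginal gain for the whole set $A_j$ with respect to $A_i$''. This is exactly the standard telescoping argument already used in the proof of Proposition~\ref{prop:binmargprops}: enumerate $A_j \setminus A_i = \{o_1, \dots, o_r\}$, set $S_0 = \emptyset$ and $S_t = \{o_1,\dots,o_t\}$, and write
\[
v_i(A_i \cup A_j) - v_i(A_i) = \sum_{t=1}^{r} \Delta_i(A_i \cup S_{t-1}; o_t).
\]
By submodularity, for each $t$ we have $A_i \subseteq A_i \cup S_{t-1}$, hence $\Delta_i(A_i \cup S_{t-1}; o_t) \le \Delta_i(A_i; o_t) = 0$ (the last equality by the contradiction hypothesis, since $o_t \in A_j$). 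Since valuations are monotone, each term is also $\ge 0$, so every term vanishes and $v_i(A_i \cup A_j) = v_i(A_i)$, completing the contradiction.

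I expect the only thing to watch is bookkeeping around items in $A_j \cap A_i$: since $A$ is an allocation its bundles are disjoint, so in fact $A_j \cap A_i = \emptyset$ and $A_j \setminus A_i = A_j$, which slightly simplifies the indexing — but writing it with $A_j \setminus A_i$ costs nothing and is robust. Note the argument does \emph{not} use binary marginal gains, only monotonicity and submodularity, matching the stated generality of the lemma; the main (and really only) obstacle is simply invoking submodularity in the right direction inside the telescoping sum, which is routine.
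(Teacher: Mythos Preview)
Your proposal is correct and follows essentially the same argument as the paper: both proceed by contradiction, write $v_i(A_i \cup A_j) - v_i(A_i)$ as a telescoping sum of marginals, bound each term above by $\Delta_i(A_i; o_t) = 0$ via submodularity, and then invoke monotonicity to obtain $v_i(A_j) \le v_i(A_i \cup A_j) = v_i(A_i)$, contradicting envy. The only cosmetic difference is that you enumerate $A_j \setminus A_i$ and then remark that disjointness of bundles makes this equal to $A_j$, whereas the paper enumerates $A_j$ directly.
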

\begin{proof}
Assume that agent $i$ envies agent $j$ under an allocation $A$, i.e. $v_i(A_i)<v_i(A_j)$, but no item $o \in A_j$ has a positive marginal gain, i.e., $\Delta_i(A_i;o)=0$ for each $o \in A_j$. Let $A_j=\{o_1,o_2,\ldots,o_r\}$. 
As in the proof of Proposition~\ref{prop:binmargprops}, if we define $S_0=\emptyset$ and $S_t=\{o_1,o_2,\ldots,o_{t}\}$ for each $t \in [r]$, we can write the following telescoping series: 
$$v_i(A_i \cup A_j) - v_i(A_i) =\sum^r_{t=1}\Delta_i(A_i \cup S_{t-1};o_t).$$
However, submodularity implies that for each $t \in [r]$,
$\Delta_i(A_i \cup S_{t-1};o_t) \leq \Delta_i(A_i;o_t)=0,$
meaning that 
$$v_i(A_i \cup A_j) - v_i(A_i)=\sum^r_{t=1}\Delta_i(A_i \cup S_{t-1};o_t) = 0.$$ 
Together with monotonicity, this yields $v_i(A_j) \leq v_i(A_i \cup A_j) =v_i(A_i)<v_i(A_j)$, a contradiction. 
\end{proof}
Note that Lemma~\ref{lem:revoc_realloc} holds for submodular functions with arbitrary real-valued marginal gains, and is trivially true for (non-negative) additive valuations. 
However, there exist non-submodular valuation functions that violate the transferability property, even when they have binary marginal gains --- see Example~\ref{ex:nonsubmod} in Appendix~\ref{app:ex:nonsubmod}. %; the example is omitted owing to space constraints and deferred to a full version of the paper. as Example~\ref{ex:nonsubmod} illustrates.
\appendixproof{Example of a non-submodular valuation function that violates the transferability property}{\label{app:ex:nonsubmod}
\begin{example}\label{ex:nonsubmod}
Agent $1$ wants to have a pair of matching shoes; her current allocated bundle is a single red shoe, whereas agent $2$ has a matching pair of blue shoes. Agent $1$ clearly envies agent $2$, but cannot increase the value of her bundle by taking any one of agent $2$'s items. More formally, suppose $N=[2]$ and $O=\{r_L,b_L,b_R\}$; agent $1$'s valuation function is: $v_1(S) = 1$ only if $\{b_L,b_R\} \subseteq S$, $v_1(S) = 0$ otherwise. Under the allocation $A_1 =\{r_L\}$ and $A_2 = \{b_L,b_R\}$, $v_1(A_1) < v_1(A_2)$ but $\Delta_1(A_1;o) = 0$ for all $o \in A_2$. \egend
\end{example}
}

Below, we show that if $i$'s envy towards $j$ under a clean allocation cannot be eliminated by removing one item from the latter's bundle, then the two agents' valuations for their respective bundles differ by at least two (in fact, we establish a stronger version of the result that does not require the envious agent $i$'s bundle to be clean). 
Formally, we say that agent $i$ envies $j$ up to more than $1$ item if $A_j \neq \emptyset$ and $v_i(A_i) < v_i(A_{j} \setminus \{o\})$ for every $o \in A_j$. 

\begin{lemma}\label{lem:envy_size}
For submodular functions with binary marginal gains, if agent $i$ envies agent $j$ up to more than $1$ item under an allocation $A$ and $j$'s bundle $A_j$ is clean, then $v_j(A_{j}) \ge v_i(A_i)+2$.
\end{lemma}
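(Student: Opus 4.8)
The plan is to prove the statement by contradiction, relying only on two facts already established: that binary-marginal-gain valuations are integer-valued with $v_i(S)\le|S|$ (Proposition~\ref{prop:binmargprops}), and that a bundle is clean for an agent exactly when its value equals its size (Proposition~\ref{prop:clean_size}).

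First I would extract the trivial half of the bound. Since $A_j\neq\emptyset$, pick any $o\in A_j$; monotonicity gives $v_i(A_j)\ge v_i(A_j\setminus\{o\})$, and the hypothesis that $i$ envies $j$ up to more than one item gives $v_i(A_j\setminus\{o\})>v_i(A_i)$, so by integrality $v_i(A_j)\ge v_i(A_i)+1$. Because $A_j$ is clean for $j$, Proposition~\ref{prop:clean_size} gives $v_j(A_j)=|A_j|$, and $|A_j|\ge v_i(A_j)$ by Proposition~\ref{prop:binmargprops}; hence $v_j(A_j)\ge v_i(A_i)+1$. It remains only to rule out equality.

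Then I would assume toward a contradiction that $v_j(A_j)=v_i(A_i)+1$. The chain above then collapses: $v_i(A_i)+1=v_j(A_j)=|A_j|\ge v_i(A_j)\ge v_i(A_i)+1$, which forces $|A_j|=v_i(A_j)$, i.e. $A_j$ is clean for $i$ as well. Now for any $o\in A_j$, binary marginal gains give $v_i(A_j)-v_i(A_j\setminus\{o\})\le 1$, i.e. $v_i(A_j\setminus\{o\})\ge v_i(A_j)-1$, while Proposition~\ref{prop:binmargprops} gives $v_i(A_j\setminus\{o\})\le|A_j\setminus\{o\}|=|A_j|-1=v_i(A_j)-1$. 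Hence $v_i(A_j\setminus\{o\})=v_i(A_j)-1=v_i(A_i)$, contradicting $v_i(A_i)<v_i(A_j\setminus\{o\})$ from the hypothesis.

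The main (and essentially only) subtlety is the observation that the ``envy up to more than one item'' hypothesis, applied to just a single removed item, already pins down $v_i$ on all of $A_j$ once $A_j$ happens to be clean for $i$; everything else is bookkeeping with integrality and the size/value identity for clean bundles. Note that the argument never uses $A_i$ beyond its scalar value $v_i(A_i)$ — which is precisely why cleanness of the envious agent's bundle is not required, matching the strengthening advertised just before the statement. In particular, the transferability property of Lemma~\ref{lem:revoc_realloc} is not needed here.
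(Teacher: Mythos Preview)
Your proof is correct, and it uses exactly the same two ingredients as the paper (Propositions~\ref{prop:binmargprops} and~\ref{prop:clean_size} plus integrality). However, your route is noticeably more roundabout than necessary. The paper dispenses with the detour through $v_i(A_j)$ and the contradiction-on-equality step entirely: pick any $o\in A_j$ and chain directly
\[
v_i(A_i) < v_i(A_j\setminus\{o\}) \le |A_j\setminus\{o\}| = |A_j|-1 = v_j(A_j)-1,
\]
using Proposition~\ref{prop:binmargprops} for the second inequality and cleanness of $A_j$ for $j$ (Proposition~\ref{prop:clean_size}) for the last equality; integrality then gives $v_j(A_j)\ge v_i(A_i)+2$ in one stroke. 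Your intermediate observation that $A_j$ must also be clean for $i$ in the equality case is true but not needed --- the strict inequality $v_i(A_i)<v_i(A_j\setminus\{o\})$ already does all the work once you bound $v_i(A_j\setminus\{o\})$ by $|A_j|-1$ rather than first bounding $v_i(A_j)$ by $|A_j|$.
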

\begin{proof}
From definition: $A_j \neq \emptyset$ and  $v_i(A_i) < v_i(A_{j} \setminus \{o\})$ for every $o \in A_j$. Consider one such $o$.
From Proposition~\ref{prop:binmargprops}, $v_i(A_j \setminus \{o\}) \leq |A_j \setminus \{o\}|= |A_j| - 1$.  
Since $A_j$ is a clean bundle for $j$, Proposition~\ref{prop:clean_size} implies that $v_j(A_j)=|A_j|$. 
Combining these, we get 
\[
v_i(A_i) < v_i(A_j \setminus \{o\}) \leq |A_j| -1 = v_j(A_j)-1 \qquad \Rightarrow \qquad v_j(A_j) > v_i(A_i)+1,
\] 
which proves the theorem statement since all valuations are integers.
\end{proof}

Next, we show that under \bsub valuations, utilitarian social welfare maximization is polynomial-time solvable (\ref{thm:matroid}). %To prove that this algorithm is correct and that it can be implemented efficiently, 

\begin{theorem}\label{thm:matroid}
If all agents have submodular functions with binary marginal gains, one can compute a clean utilitarian optimal allocation in polynomial time. 
\end{theorem}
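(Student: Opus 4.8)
The plan is to reduce the problem of maximizing $\USW$ to a single matroid optimization problem over a ground set that encodes agent--item assignments, and then invoke the polynomial-time solvability of matroid union (or, equivalently, matroid intersection). First I would recall that, since each $v_i$ is a matroid rank function, there is a matroid $M_i = (O, \calI_i)$ on the item set whose independent sets are exactly the clean bundles for agent $i$, and $v_i(S) = r_i(S)$ is the rank of $S$ in $M_i$. The key reformulation: a clean allocation $(A_1,\dots,A_n)$ with $\USW = \sum_i |A_i|$ maximal corresponds to choosing disjoint independent sets $A_i \in \calI_i$ whose total size is as large as possible. This is precisely the \emph{matroid union} problem for the matroids $M_1,\dots,M_n$: the maximum size of a set that can be partitioned into one independent set from each $M_i$ equals the rank of the union matroid $M_1 \vee \cdots \vee M_n$, and by the matroid union theorem of Nash-Williams, this rank can be computed — and, more importantly, an optimal partition (hence an optimal allocation) recovered — in polynomial time using only rank (equivalently, value) oracle calls.

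The steps, in order, would be: (i) make precise the correspondence between clean allocations and partitions into independent sets, using Proposition~\ref{prop:clean_size} to identify $v_i(A_i) = |A_i|$ for clean bundles, so that $\USW$ of a clean allocation equals the total number of allocated items; (ii) observe that any utilitarian optimal allocation can be cleaned without decreasing $\USW$ (cleaning only removes zero-marginal items), so the maximum $\USW$ over \emph{all} allocations equals the maximum over clean allocations, which equals the rank of the union matroid; (iii) run the standard matroid union / matroid partition algorithm, which takes each $M_i$ via its independence oracle — and an independence oracle for $M_i$ is available in polynomial time because $S \in \calI_i \iff v_i(S) = |S|$, checkable with one oracle call — and returns in polynomial time a maximum-size common independent set together with its decomposition $A_1,\dots,A_n$; (iv) note the output allocation is clean by construction (each $A_i$ is independent in $M_i$, i.e. clean for $i$) and utilitarian optimal by step (ii).

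An alternative, more self-contained route avoids citing matroid union as a black box: model the problem as matroid intersection on the ground set $E = N \times O$, where one matroid enforces ``at most one copy of each item across agents'' (a partition matroid) and the other is the direct sum of the $M_i$ lifted to the relevant copies; a maximum common independent set is a maximum-weight clean allocation, computable in polynomial time by the matroid intersection algorithm. Either way, the main obstacle — and the only place real care is needed — is step (ii)/(iii): verifying that the reduction is \emph{exact}, i.e. that restricting to clean allocations loses nothing in $\USW$ (handled by Proposition~\ref{prop:clean_size} and the cleaning operation), and confirming that the matroid algorithm accesses the $M_i$ only through operations implementable with polynomially many value-oracle calls. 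The combinatorial machinery itself is classical and can be cited; the burden is purely in setting up the correspondence correctly and checking the oracle-complexity bookkeeping.
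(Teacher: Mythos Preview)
Your proposal is correct and matches the paper's approach closely. The paper takes exactly your ``alternative, more self-contained route'': it sets $E=\{\,\{o,i\}\mid o\in O,\ i\in N\,\}$, forms the union matroid of the lifted $\calI_i$'s on $E$, intersects it with the partition matroid enforcing that each item is used at most once, and invokes polynomial-time matroid intersection; your primary route via matroid union/partition directly on $O$ is an equally valid (and arguably cleaner) framing of the same reduction.
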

\begin{proof}
We prove the claim by a reduction to the matroid intersection problem. Let $E$ be the set of pairs of items and agents, i.e., $E=\{\, \{o,i\} \mid o \in O \land i \in N \,\}$. 
For each $i \in N$ and $X \subseteq E$, we define $X_i$ to be the set of edges incident to $i$, i.e., $X_i=\{\, \{o,i\} \in X \mid o \in O \,\}$. Note that taking $E=X$, $E_i$ is the set of all edges in $E$ incident to $i \in N$. 
For each $i \in N$ and for each $X \subseteq E$, we define $r_i(X)$ to be the valuation of $i$, under function $v_i(\cdot)$, for the items $o \in O$ such that $\{o,i\} \in X_i$; namely,
\[
r_i(X)=v_i(\{\, o \in O \mid \{o,i\} \in X_i \,\}).
\]  
Clearly, $r_i$ is also a submodular function with binary marginal gains; combining this with Proposition \ref{prop:binmargprops} and the fact that $r_i(\emptyset)=0$, it is easy to see that each $r_i$ is a rank function of a matroid. Thus, the set of clean bundles for $i$, i.e $\calI_i=\{\, X \subseteq E \mid r_i(X)=|X| \,\}$, is the set of independent sets of a matroid. 
Taking the union $\calI=\calI_1 \cup \cdots \cup \calI_n$, the pair $(E,\calI)$ is known to form a matroid \cite{Korte2006}, often referred to as a {\em union matroid}. 
By definition, $\calI = \{\, \bigcup_{i \in N} X_i \mid X_i \in {\calI}_i  \land i \in N \,\}$, so any independent set in $\calI$ corresponds to a union of clean bundles for each $i \in N$ and vice versa. To ensure that each item is assigned at most once (i.e. bundles are disjoint), we will define another matroid $(E,\calO)$ where the set of independent sets is given by
$$
\calO=\{\, X \subseteq E \mid |X \cap E_o| \leq 1, \forall o \in O \,\}.
$$
Here, $E_o=\{\, e=\{o,i\}\mid i \in N\,\}$ for $o \in O$. The pair $(E,\calO)$ is known as a {\em partition matroid} \cite{Korte2006}. 

Now, observe that a common independent set of the two matroids $X \in \calO \cap \calI$ corresponds to a clean allocation $A$ of our original instance where each agent $i$ receives the items $o$ with $\{o,i\} \in X$; indeed, each item $o$ is allocated at most once because $|E_o \cap X| \leq 1$, and each $A_i$ is clean because the realized valuation of agent $i$ under $A$ is exactly the size of the allocated bundle. Conversely, any clean allocation $A$ of our instance corresponds to an independent set $X=\bigcup_{i \in N}X_i \in \calI \cap \calO$, where $X_i = \{\, \{o,i\} \mid o \in A_i \,\}$: for each $i\in N$, $r_i(X_i)=|X_i|$ by Proposition \ref{prop:clean_size}, and hence $X_i \in \calI_i$, which implies that $X \in \calI$; also, $|X \cap E_o| \leq 1$ as $A$ is an allocation, and hence $X \in \calO$. 

Thus, the maximum utilitarian social welfare is the same as the size of a maximum common independent set in $\calI \cap \calO$. It is well known that one can find a largest common independent set in two matroids in time $O(|E|^3 \gamma)$ where $\gamma$ is the maximum complexity of the two independence oracles \cite{Edmonds1979}. 
Since the maximum complexity of checking independence in two matroids $(E,\calO)$ and $(E,\calI)$ is bounded by $O(mnF)$ where $F$ is the maximum complexity of the value query oracle, we can find a set $X \in \calI \cap \calO$ with maximum $|X|$ in time $O(|E|^3 mnF)$. 
%This completes the proof. 
\end{proof}

Finally, we are ready to prove Theorem~\ref{thm:bin_all}.
\begin{proof}[Proof of Theorem~\ref{thm:bin_all}]
Consider Algorithm~\ref{alg_bin}. This algorithm maintains optimal $\USW$ as an invariant and terminates on an EF1 allocation. 
Specifically, we first compute a clean allocation that maximizes the utilitarian social welfare. 
The $\EIT$ subroutine in the algorithm iteratively diminishes envy by transferring an item from the envied bundle to the envious agent; Lemma \ref{lem:revoc_realloc} ensures that there is always an item in the envied bundle for which the envious agent has a positive marginal gain.  

\begin{algorithm}
	\DontPrintSemicolon
	\caption{Algorithm for finding utilitarian optimal EF1 allocation}\label{alg_bin}
	Compute a clean, utilitarian optimal allocation $A$.\label{MM}\\					
			   \textbf{/*Envy-Induced Transfers ($\EIT$)*/}\\
				\While{there are two agents $i,j$ such that $i$ envies $j$ more than $1$ item}
						{
						Find item $o \in A_j$ with $\Delta_i(A_i;o)=1$.\\
						$A_j \leftarrow A_j \setminus \{o\}$; $A_i \leftarrow A_i \cup \{o\}$.
						}			
\end{algorithm}

\emph{Correctness}: Each $\EIT$ step maintains the optimal utilitarian social welfare as well as cleanness: an envied agent's valuation diminishes exactly by $1$ while that of the envious agent increases by exactly $1$. Specifically, recall that for matroid rank valuations, an allocation $A$ is clean if and only if $v_i(A_i)=|A_i|$ for all $i \in N$ by Proposition \ref{prop:clean_size}. This means that if the previous allocation $A$ is clean, then we have $v_i(A_i \cup \{o\})=|A_i \cup \{o\}|$, and $v_j(A_j \setminus \{o\})=|A_j \setminus \{o\}|$. Hence the new allocation after each $\EIT$ step remains clean. Thus, if the algorithm terminates, the $\EIT$ subroutine retains the initial (optimal) $\USW$ and, by the stopping criterion, induces the EF1 property. 

To show that the algorithm terminates (in polynomial time), we define the potential function $\Phi(A) \triangleq \sum_{i \in N}v_i(A_i)^2$.
At each step of the algorithm, $\Phi(A)$ strictly decreases by $2$ or a larger integer. To see this, let $A'$ denote the resulting allocation after reallocation of item $o$ from agent $j$ to $i$. Since $A$ is clean, we have $v_i(A'_i)=v_i(A_i)+1$ and $v_j(A'_j)=v_j(A_j)-1$; since all other bundles are untouched, $v_k(A'_k)=v_k(A_k)$ for every $k \in N\setminus \{i,j\}$.
Also, since $i$ envies $j$ up to more than one item under allocation $A$, $v_i(A_i) +2 \leq v_j(A_j)$ by Lemma \ref{lem:envy_size}. 
Combining these, we get
\begin{align*}
\Phi(A')-\Phi(A)&=(v_i(A_i)+1)^2+ (v_j(A_j)-1)^2 -v_i(A_i)^2-v_j(A_j)^2 \\
&=2(1+v_i(A_i)-v_j(A_j))\\
&\leq 2(1-2) = -2. 
\end{align*}
%Hence, at each step of the algorithm, $\phi(A)$ strictly decreases by $2$. 

\emph{Complexity}:
%It remains to analyze the running time of the algorithm. 
By Theorem \ref{thm:matroid}, a clean utilitarian optimal allocation can be computed in polynomial time. The value of the non-negative potential function has a polynomial upper bound: $\sum_{i \in N}v_i(A_i)^2 \leq (\sum_{i \in N}v_i(A_i))^2 \leq m^2$. Thus, Algorithm \ref{alg_bin} terminates in polynomial time. 
\end{proof}

An interesting implication of the above analysis is that a utilitarian optimal allocation that minimizes $\sum_{i \in N}v_i(A_i)^2$ is always EF1. %\emph{regardless of whether the allocation is clean}.

\begin{corollary}\label{cor:sumsquares}
	For matroid rank valuations, any clean, utilitarian optimal allocation $A$ that minimizes $$\Phi(A)\triangleq \sum_{i \in N}v_i(A_i)^2$$ among all utilitarian optimal allocations is EF1.
\end{corollary}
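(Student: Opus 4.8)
The plan is to argue by contradiction, recycling the potential-function bookkeeping already carried out in the proof of Theorem~\ref{thm:bin_all}. Suppose $A$ is a clean, utilitarian optimal allocation that minimizes $\Phi(A)=\sum_{i \in N}v_i(A_i)^2$ among all utilitarian optimal allocations, but $A$ is \emph{not} EF1. Then there are agents $i,j$ such that $i$ envies $j$ up to more than one item. First I would invoke Lemma~\ref{lem:envy_size}: since $A_j$ is clean, this gives $v_j(A_j) \ge v_i(A_i)+2$. Next, by the transferability property (Lemma~\ref{lem:revoc_realloc}), there is an item $o \in A_j$ with $\Delta_i(A_i;o)=1$; let $A'$ be the allocation obtained from $A$ by moving $o$ from $j$'s bundle to $i$'s bundle.

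The next step is to verify that $A'$ is again clean and utilitarian optimal. Because $A$ is clean, Proposition~\ref{prop:clean_size} gives $v_i(A_i)=|A_i|$ and $v_j(A_j)=|A_j|$; combined with $\Delta_i(A_i;o)=1$ and the fact that deleting an item from a clean bundle leaves a clean bundle whose value drops by exactly one (binary marginal gains), we get $v_i(A'_i)=|A_i|+1=v_i(A_i)+1$ and $v_j(A'_j)=|A_j|-1=v_j(A_j)-1$, while every other bundle is untouched. Hence $\USW(A')=\USW(A)$, so $A'$ is still utilitarian optimal, and $A'$ is clean. Then the computation from Theorem~\ref{thm:bin_all} applies verbatim:
\[
\Phi(A')-\Phi(A)=(v_i(A_i)+1)^2+(v_j(A_j)-1)^2-v_i(A_i)^2-v_j(A_j)^2=2\bigl(1+v_i(A_i)-v_j(A_j)\bigr)\le 2(1-2)=-2,
\]
so $A'$ is a utilitarian optimal allocation with strictly smaller $\Phi$, contradicting the minimality of $\Phi(A)$. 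Therefore $A$ must be EF1.

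The only subtle point — and hence the ``main obstacle'', such as it is — is making sure the single-item transfer exactly preserves the utilitarian social welfare, so that $A'$ genuinely competes with $A$ for the $\Phi$-minimum among \emph{utilitarian optimal} allocations; this is precisely where cleanness of $A$ is used, since it forces the envied agent's value to drop by exactly one and the envious agent's to rise by exactly one. Everything else is an immediate reuse of Lemmas~\ref{lem:revoc_realloc} and~\ref{lem:envy_size} together with Proposition~\ref{prop:clean_size}, so the corollary is essentially the observation that a $\Phi$-minimizing clean utilitarian optimal allocation admits no $\EIT$ step.
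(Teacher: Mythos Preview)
Your proposal is correct and follows essentially the same contradiction argument as the paper: assume an EF1 violation, use Lemma~\ref{lem:revoc_realloc} to locate a transferable item and Lemma~\ref{lem:envy_size} to get the gap $v_j(A_j)\ge v_i(A_i)+2$, then reuse the potential computation from Theorem~\ref{thm:bin_all} to contradict $\Phi$-minimality. If anything, you spell out more carefully than the paper does why the transferred allocation $A'$ remains clean and utilitarian optimal, which is exactly the point needed for the contradiction to land.
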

\appendixproof{Proof of Corollary \ref{cor:sumsquares}}{\label{app:proof_cor:sumsquares}
	\begin{stmnt*}
		For matroid rank valuations, any clean, utilitarian optimal allocation $A$ that minimizes $\phi(A)\triangleq \sum_{i \in N}v_i(A_i)^2$ among all utilitarian optimal allocations is EF1.
	\end{stmnt*}
	\begin{proof}
	   % We first note that any utilitarian optimal allocation can be made clean, without changing the realized valuation for each agent. 
		Let $A$ be a clean utilitarian optimal allocation that minimizes the sum of squares of the realized valuations among all utilitarian optimal allocations.
		We will show that $A$ is EF1. Assume towards a contradiction that $A$ is not EF1. Then, there is a pair of agents $i,j$ such that $i$ envies $j$ up to more than $1$ item. By Lemma~\ref{lem:revoc_realloc}, there is an item $o \in A_j$ such that $\Delta_i(A_i;o)=1$. Let $A^*$ be the allocation achieved by transferring $o$ from $j$ to $i$, everything else remaining the same. 
		By Lemma \ref{lem:envy_size} and the fact that $A_j$ is clean, we have
		\[
		v_i(A_i)+2 \leq v_j(A_j), 
		\]
		which implies $\sum_{i \in N} v_i(A^*_i)^2<\sum_{i \in N} v_i(A_i)^2$ proceeding exactly as in the proof of Theorem~\ref{thm:bin_all} --- another contradiction. Hence, $A$ must be EF1. 
	\end{proof}
}

\begin{remark}[Choice of the potential function]\label{rm:potfunc}
	\normalfont	In the proof of Theorem~\ref{thm:bin_all}, we used the sum of squared valuations as the potential function to prove termination in polynomial time mainly for ease of exposition. Additionally, it shows that the $\EIT$ subroutine terminates after $O(m^2)$ iterations. However, any \textit{symmetric, strictly convex,\footnote{See Section~\ref{sec:binsub_mnwlm} for the definition of a symmetric, strictly convex function. For the proof of Theorem~\ref{thm:bin_all}, it suffices for the function to be strictly convex only over the non-negative orthant since valuations are always non-negative.} polynomial} function $\Phi$ of the realized valuations strictly decreases with each $\EIT$ step and, as such, it is sufficient to use any such function as our potential function if we just wish to establish termination in a polynomial number of iterations. Moreover, Corollary~\ref{cor:sumsquares} holds for any such function $\Phi$ as well --- we elaborate on this theme in Section~\ref{sec:binsub_mnwlm}.  \remend
\end{remark}
%Since each item can contribute at most $1$ towards the utilitarian social welfare of an allocation under \bsub valuations, $\max_{A} \USW(A) \le m$ for any problem instance with $m$ items (depending on the valuation functions). If $\max_{A} \USW(A) < m$, then the output of Algorithm~\ref{alg_bin} is 
Despite its simplicity, Algorithm~\ref{alg_bin} significantly generalizes that of \citet{benabbou2019fairness}'s Theorem 4 (which ensures the existence of a non-wasteful EF1 allocation for \boxs valuations) to \bsub valuations. 
We note, however, that the resulting allocation may be neither MNW nor leximin even when agents have \boxs valuations: Example \ref{runningex} below illustrates this and also shows that the converse of Corollary~\ref{cor:sumsquares} does not hold. 
%\appendixproof{Example where the output of Algorithm~\ref{alg_bin} may not be either MNW or leximin}{
\begin{example}\label{runningex}
The instance we use is identical to Example $1$ in \citet{benabbou2019fairness}. There are two groups (i.e. agents with \boxs valuations) and six items $o_1,o_2,o_3,o_4,o_5,o_6$. The first group $N_1$ (identical to agent $1$) contains four members $a_1,a_2,a_3,a_4$ and the second group $N_2$ (identical to agent $2$) contains four members $b_1,b_2,b_3,b_4$; each individual has utility (weight) $1$ for an item $o$ if and only if she is adjacent to $o$ in the graph depicted in Figure~\ref{fig:Alg1Example}:
\begin{figure}[htb]
\centering
\begin{tikzpicture}[scale=0.85, transform shape, every node/.style={minimum size=6mm, inner sep=1pt}]
	\node[draw, circle,fill=gray!30](1) at (0,2) {$a_1$};
	\node[draw, circle,fill=gray!30](2) at (0,1) {$a_2$};
	\node[draw, circle,fill=gray!30](3) at (0,0) {$a_3$};
	\node[draw, circle,fill=gray!30](4) at (0,-1) {$a_4$};

	\node[draw, circle](11) at (2,2) {$o_1$};
	\node[draw, circle](12) at (2,1) {$o_2$};
	\node[draw, circle](13) at (2,0) {$o_3$};
	\node[draw, circle](14) at (2,-1) {$o_4$};
	\node[draw, circle](15) at (2,-2) {$o_5$};
	\node[draw, circle](16) at (2,-3) {$o_6$};

	\node[draw, circle,fill=gray!30](21) at (4,2) {$b_1$};
	\node[draw, circle,fill=gray!30](22) at (4,1) {$b_2$};
	\node[draw, circle,fill=gray!30](23) at (4,0) {$b_3$};
	\node[draw, circle,fill=gray!30](24) at (4,-1) {$b_4$};
	
	\draw[-, >=latex] (1)--(11) (1)--(16);
	\draw[-, >=latex] (2)--(12) (2)--(14);
	\draw[-, >=latex] (3)--(13);
	\draw[-, >=latex] (4)--(15);
	
	\draw[-, >=latex] (21)--(13);
	\draw[-, >=latex] (22)--(14);
	\draw[-, >=latex] (23)--(15);
	\draw[-, >=latex] (24)--(16);
\end{tikzpicture}
\caption{An instance where Algorithm~\ref{alg_bin} produces an allocation that is not MNW or leximin.\label{fig:Alg1Example}}
\end{figure}
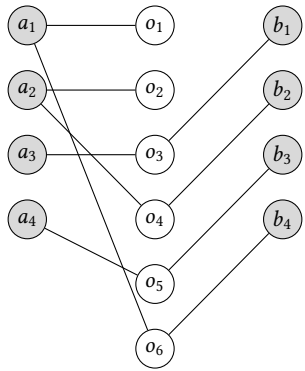

The valuation function of each group for any bundle $X$ is defined as the value (equivalently, the size) of a maximum-size matching of $X$ to the group's members. 
The algorithm may initially compute a utilitarian optimal allocation $A$ that assigns items $o_1,o_2,o_3,o_5$ to the group $N_1$ (with these items assigned to $a_1,a_2,a_3,a_4$ respectively), and the remaining items to group $N_2$ (with $o_4,o_6$ assigned to $b_2,b_4$ respectively). Then, $v_1(A_1)=4>2=v_1(A_2)$ and $v_2(A_2)=2=v_2(A_1)$, hence the allocation $A$ is EF1 --- in fact, envy-free! So, the $\EIT$ subroutine will not be invoked and the output of Algorithm~\ref{alg_bin} will be $A$. However, the (unique) leximin and MNW allocation assigns items $o_1,o_2,o_3$ to the first group, and the remaining items to the second group -- this is also the (unique) utilitarian optimal allocation with the minimum sum of squares of the agents' valuations. \egend
\end{example}
%}

\begin{remark}[EFX allocation]\label{remark:EFX}
	\normalfont	It is worthwhile at this point to comment on the implications of our results for a stronger version of the EF1 property that has received considerable attention in recent literature: envy-freeness up to \emph{any} item, often called the EFX condition. There are two definitions in the literature: 
	\begin{enumerate}
		\item \label{efxplus} \citet{caragiannis2016unreasonable} who introduced this concept (for additive valuations) called it \emph{envy-freeness up to the least (positively) valued good}; we can naturally extend their definition to general valuations as follows: an allocation $A$ is EFX if, for every pair of agents $i,j \in N$ such that $i$ envies $j$, $v_i(A_i) \ge v_i (A_j \setminus \{o\})$ for every item $o \in A_j$ satisfying $\Delta_i(A_j \setminus \{o\}) > 0$. We will call this property EFX$_+$ for clarity (see Remark~\ref{rem:efxplus} in Appendix~\ref{app:rem:efxplus} for issues with defining EFX$_+$ beyond additive valuations). 
	\item \label{efx0} \citet{plaut2018almost} defined an allocation $A$ to be EFX if, for every pair of agents $i,j$, $v_i(A_i) \ge v_i (A_j \setminus \{o\})$ $\forall o \in A_j$ or equivalently $v_i(A_i) \ge \max_{o \in A_j} v_i (A_j \setminus \{o\})$ --- this stronger definition favors allocations where more agents are envy-free of others since $v_i(A_j \setminus \{o\})=v_i(A_j)$ whenever $o$ is of zero marginal value to agent $i$ with respect to the bundle $A_j$: the authors show that under this definition, no EFX allocation can be Pareto optimal even for two agents with additive valuations or general but identical valuations.\footnote{However, both their examples establishing negative results for these sets of conditions on the valuation functions involve eliminating  items with zero marginal value; their second example (for identical valuations) uses a non-submodular valuation function.} \citet{caragiannis2019envy} and \citet{chaudhury2020little} use this definition as well. Following \citet{kyropoulou2019almost}, who studied both the (above) weaker and (this) stronger variants of approximate envy-freeness under a different valuation model, we call this stronger property EFX$_0$.
	\end{enumerate}             
	For \bsub valuations, all items with non-zero marginal values for an agent are also valued identically at $1$, hence EF1 trivially implies EFX$_+$; Theorem~\ref{thm:bin_all} and Corollary~\ref{cor:sumsquares} further guarantee the existence of an EFX$_+$ and PO allocation for any instance under this valuation class. However, we demonstrate with  Example~\ref{ex:ef1po_notefx} with \boxs valuations in Appendix~\ref{app:ex:ef1po_notefx} that even an EF1 and utilitarian optimal (hence PO) allocation may not satisfy the EFX$_0$ condition. 
	\remend
\end{remark}%\todo{MC: I added this remark after a discussion with Warut about the proper definition of EFX when an item can have zero (marginal) value to an agent.}
\appendixproof{Example of an EF1 and utilitarian optimal allocation violating EFX$_0$}{\label{app:ex:ef1po_notefx}
\begin{example}\label{ex:ef1po_notefx}
	 There are two groups and four items $o_1,o_2,o_3,o_4$. The first group $N_1$ has two members $a_1,a_2$ and the second group $N_2$ has three $b_1,b_2,b_3$; each individual has utility $1$ for an item if and only if she is adjacent to it in the graph in Figure~\ref{fig:not_efx0}:
	\begin{figure}[htb]
		\centering
		\begin{tikzpicture}[scale=0.85, transform shape, every node/.style={minimum size=6mm, inner sep=1pt}]
		\node[draw, circle,fill=gray!30](1) at (0,2) {$a_1$};
		\node[draw, circle,fill=gray!30](2) at (0,1) {$a_2$};

		\node[draw, circle](11) at (2,2) {$o_1$};
		\node[draw, circle](12) at (2,1) {$o_2$};
		\node[draw, circle](13) at (2,0) {$o_3$};
		\node[draw, circle](14) at (2,-1) {$o_4$};

		\node[draw, circle,fill=gray!30](21) at (4,2) {$b_1$};
		\node[draw, circle,fill=gray!30](22) at (4,1) {$b_2$};
		\node[draw, circle,fill=gray!30](23) at (4,0) {$b_3$};

		\draw[-, >=latex] (1)--(11) (1)--(12);
		\draw[-, >=latex] (2)--(13);

		\draw[-, >=latex] (21)--(12);
		\draw[-, >=latex] (22)--(13);
		\draw[-, >=latex] (23)--(14);
		\end{tikzpicture}
		\caption{An instance that admits an EF1, utilitarian optimal allocation that is not EFX$_0$\label{fig:not_efx0}}
	\end{figure}
	
	The \boxs valuation functions of groups $N_1$ and $N_2$ are denoted by $v_1(\cdot)$ and $v_2(\cdot)$ respectively. The allocation $A$ where $A_1=\{o_1\}$ and $A_2=\{o_2,o_3,o_4\}$ is utilitarian optimal; it is also EF1 since $v_1(A_1)=1=v_1(A_2 \setminus \{o\})$ for $o \in \{o_2,o_3\}$, with $\Delta_1(A_2\setminus \{o_4\};o_4)=0$, and $v_2(A_2)=3>0=v_2(A_1)$. $A$ could be the output of Algorithm~\ref{alg_bin} and is clean and complete. However, $A$ is not EFX$_0$ since $v_1(A_2 \setminus \{o_4\}) = 2 > 1 = v_1(A_1)$.
	\egend
\end{example}
}
\appendixproof{Remark on the definition of EFX$_+$ for general valuations}{\label{app:rem:efxplus}
\begin{remark}[EFX$_+$ allocation]\label{rem:efxplus}
%\paragraph{The significance of the withheld set}
	\normalfont In this remark, we will point out an issue with extending the definition of envy-freeness up to the least positively valued good to settings with general (possibly non-additive) valuation functions. Recall that in Remark~\ref{remark:EFX} in Section~\ref{sec:binsub}, we defined an allocation $A$ of indivisible items to be EFX$_+$ if for every pair of agents $i$,$j$ such that $i$ envies $j$, $v_i(A_i) \ge v_j(A_j \setminus {o})$ \textit{for every} $o \in A_j$ such $i$'s marginal valuation of $o$ given $A_j$ is \textit{strictly positive}, i.e. $\Delta_i(A_j \setminus \{o\}) > 0$. This definition can make certain non-EF1 allocations trivially EFX$_+$, even under matroid rank valuations, if there is no single item in the envied agent's bundle for which the envious agent has positive marginal valuation. Consider two agents with matroid rank valuations, $v_1(S) = \min\{|S|,2\}$ and $v_2(S) = |S|$, and any allocation $A$ where agent $1$ gets one item and agent $2$ three items. Agent $2$ does not envy agent $1$ but $v_1(A_2 \setminus \{o\}) = 2 > 1 = v_1(A_1)$ for every $o \in A_2$, hence the allocation is not EF1. However, the condition for EFX$_+$ is vacuously satisfied since $\Delta_i(A_j \setminus \{o\}) = 0$ for every $o \in A_2$. This is inconsistent with the property that EFX is stronger than (i.e. implies) EF1, which holds for additive valuations. Note that, as we argued in Remark~\ref{remark:EFX}, EF1 implies EFX$_+$ under matroid rank valuations; we just showed with the above example that the converse is not true, implying that EF1 is a stronger property than EFX$_+$ for this valuation class. This is problematic since the EFX property was originally introduced as a strengthening of EF1 and it is reasonable to want to retain this relation beyond additive valuations! 

A possible way of fixing this issue is by requiring that, for an allocation $A$ to be EFX$_+$ under general valuations, it must first be EF1, i.e. if $i$ is envious of $j$ under $A$, then $i$ must be EF1 of $j$ and additionally the envy of $i$ towards $j$ can be eliminated by removing $i$'s least positively marginally valued item from $j$'s bundle. However, more fundamentally, this issue calls into question the applicability of EFX$_+$ as a fairness concept for non-additive valuations.

On the other hand, EFX$_0$ is stronger than EF1 for any monotone valuation $v(\cdot)$ with $v(\emptyset)=0$. Example~\ref{ex:ef1po_notefx} in Appendix~\ref{app:ex:ef1po_notefx} already demonstrates that an EF1 allocation may not be EFX$_0$ even under \boxs valuations. Conversely, if an allocation $A$ is EFX$_0$, then for any pair of agents $i$ and $j$, there is always an item $o \in A_j$ regardless of $\Delta_i(A_j \setminus \{o\})$ such that $v_i(A_i) \ge v_i(A_j \setminus \{o\})$, i.e. EFX$_0$ always implies EF1.
\remend
\end{remark}
}

Also, see Remark~\ref{rem:wthldset} in Appendix~\ref{app:rem:wthldset} for connections to fair allocation ``with charity" \cite{caragiannis2019envy,chaudhury2020little}.
\appendixproof{Remark on cleanness and fair allocation ``with charity"}{\label{app:rem:wthldset}
\begin{remark}[The withheld set]\label{rem:wthldset}
%\paragraph{The significance of the withheld set}
	\normalfont A remark about the withheld set is in order. The output $A$ of Algorithm~\ref{alg_bin} is clean by construction and hence may be incomplete (Corollary~\ref{cor:cleancomp}), leaving us with a non-empty withheld set $A_0$. If completeness is not a requirement (i.e. there is free disposal and/or no stipulation of the form "All items must go!"), we can view the withheld set as ``surplus'' that can be set aside for future use or ``charity.'' This is similar in spirit to recent work on \emph{EFX-with-charity} \cite{caragiannis2019envy} and \emph{EFX-with-bounded-charity} \cite{chaudhury2020little}. 
	
	For additive valuations, \citet{caragiannis2019envy} design an algorithm that computes an EFX$_0$, \emph{partial} allocation with at least half the optimal Nash welfare, running in polynomial time if it has oracle access to an MNW allocation; \citet{chaudhury2020little} consider general valuations and provide a way to set aside a charity bundle of size strictly less than the number of agents such that the allocation of the remaining items to the agents is EFX$_0$ and every agent weakly prefers her own bundle to the charity bundle --- this allocation retains the high Nash welfare guarantee of \citet{caragiannis2019envy} for additive valuations.   
	
	It is easy to identify similar desirable properties in the output $A$ of Algorithm~\ref{alg_bin}: the (potentially partial) allocation to agents has optimal utilitarian social welfare and is EF1 (although not necessarily EFX$_0$); utilitarian optimality further dictates that every agent $i \in N$ weakly prefers its allocated bundle to the withheld set, i.e. $v_i(A_i) \ge v_i(A_0)$, and also has zero marginal value for any subset of withheld items given its bundle, i.e. $v_i(A_i \cup S) = v_i(A_i)$ $\forall S \subseteq A_0$; $|A_0|=m-U^*$, where $U^*$ is the optimal social welfare which is equal to the number of items allocated under any clean, utilitarian optimal allocation (by Proposition~\ref{prop:clean_size}) .
	
	Although we did not start with the problem of withholding items to satisfy desiderata, this outcome emerged from our search for EF1+PO allocations beyond additive valuations. \remend
\end{remark}
}

\subsection{MNW and Leximin Allocations}\label{sec:binsub_mnwlm}

We saw in Section~\ref{sec:utiloptef1} that under \bsub valuations, a simple iterative procedure allows us to reach an EF1 allocation while preserving utilitarian optimality. 
However, as we previously noted, such allocations are not necessarily leximin or MNW. In this subsection, we characterize the set of leximin and MNW allocations under \bsub valuations. 
We start by showing that Pareto optimal allocations coincide with utilitarian optimal allocations when agents have \bsub valuations. 
Intuitively, if an allocation is not utilitarian optimal, one can always find an `augmenting' path that makes at least one agent happier but no other agent worse off. 

In the subsequent proof, we will use the following notions and results from matroid theory: Given a matroid $(E,\calI)$, the sets in $2^E \setminus \calI$ are called {\em dependent}, and a minimal dependent set of a matroid is called a {\em circuit}. The following is a crucial property of circuits. 
\begin{lemma}[\citet{Korte2006}]\label{lem:circuit}
Let $(E,\calI)$ be a matroid, $X \in \calI$, and $y \in E \setminus X$ such that $X \cup \{y\} \notin \calI$. Then the set $X \cup \{y\}$ contains a unique circuit.
\end{lemma}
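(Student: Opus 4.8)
The plan is to argue directly from the rank-function axioms (R1)--(R3) (together with the independence axiom (I2)), rather than invoking any circuit-exchange property, which the paper has not introduced. Existence of a circuit inside $X \cup \{y\}$ is the easy half: since $X \cup \{y\} \notin \calI$ it is dependent, and because $E$ is finite, $X \cup \{y\}$ contains an inclusion-minimal dependent subset, which is by definition a circuit. So essentially all the work lies in proving uniqueness.

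Before that I would establish two preliminary facts. First, any circuit $C$ has $r(C) = |C| - 1$: it is dependent, so $r(C) \le |C| - 1$ by (R1) together with dependence; and for any $f \in C$, minimality makes $C \setminus \{f\}$ independent, so $r(C) \ge r(C \setminus \{f\}) = |C| - 1$ by monotonicity (R2). Second, any circuit $C \subseteq X \cup \{y\}$ must contain $y$, since otherwise $C \subseteq X$ would be a dependent subset of the independent set $X$, contradicting (I2). Now suppose toward a contradiction that $C_1 \ne C_2$ are two circuits in $X \cup \{y\}$. Being distinct inclusion-minimal dependent sets, neither contains the other, so $C_1 \cap C_2$ is a proper subset of $C_1$ and is therefore independent, giving $r(C_1 \cap C_2) = |C_1 \cap C_2|$. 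Submodularity (R3) applied to $C_1$ and $C_2$ then yields
\[
r(C_1 \cup C_2) \le r(C_1) + r(C_2) - r(C_1 \cap C_2) = (|C_1| - 1) + (|C_2| - 1) - |C_1 \cap C_2| = |C_1 \cup C_2| - 2 .
\]
Since $y \in C_1 \cap C_2 \subseteq C_1 \cup C_2$ and $y \notin X$, monotonicity (R2) gives $r\big((C_1 \cup C_2) \setminus \{y\}\big) \le r(C_1 \cup C_2) \le |C_1 \cup C_2| - 2 = \big|(C_1 \cup C_2) \setminus \{y\}\big| - 1$, so $(C_1 \cup C_2) \setminus \{y\}$ is dependent. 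But $(C_1 \cup C_2) \setminus \{y\} \subseteq X$, and every subset of the independent set $X$ is independent by (I2) --- a contradiction. Hence $X \cup \{y\}$ contains exactly one circuit.

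I expect the only mildly delicate point to be the step $r(C_1 \cap C_2) = |C_1 \cap C_2|$, i.e. the observation that two distinct circuits are never nested; this is where minimality of circuits is genuinely used, and it is precisely what makes the submodularity bound strong enough ($|C_1 \cup C_2| - 2$ rather than $|C_1 \cup C_2| - 1$) to force $(C_1 \cup C_2) \setminus \{y\}$ to be dependent. Everything else is routine bookkeeping with (R1)--(R3) and (I2). An alternative route would be to first derive the circuit-elimination axiom from the rank axioms and then quote it, but that derivation is essentially the same computation, so there is nothing to gain from the detour.
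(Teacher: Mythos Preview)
Your proof is correct. Note, however, that the paper does not actually prove this lemma: it is stated as a citation to \citet{Korte2006} and used as a black box in the proof of Theorem~\ref{thm:PO}, so there is no ``paper's own proof'' to compare against.

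That said, your argument is clean and, importantly, self-contained within the axiom system the paper has set up: you use only (R1)--(R3) and (I2), never the circuit-elimination axiom or any other unproved matroid fact. The key inequality $r(C_1 \cup C_2) \le |C_1 \cup C_2| - 2$ via submodularity is exactly the right lever, and your observation that both circuits must contain $y$ (so that removing $y$ drops the cardinality by exactly one while the rank cannot increase) is what closes the argument. The standard textbook proof typically goes through circuit elimination instead; your rank-based route is arguably more in keeping with the paper's presentation, since the rank axioms are the ones it actually states.
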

Given a matroid $(E,\calI)$, we denote by $C({\calI},X,y)$ the unique circuit contained in $X \cup \{y\}$ for any $X \in \calI$ and $y \in E \setminus X$ such that $X \cup \{y\} \not \in \calI$. 

\begin{theorem}\label{thm:PO}
For \bsub valuations, any Pareto optimal allocation is utilitarian optimal. 
\end{theorem}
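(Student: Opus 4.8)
The plan is to prove the contrapositive: every \emph{clean} allocation which is not utilitarian optimal is Pareto dominated. Since cleaning an allocation leaves every realized valuation unchanged, it preserves both Pareto optimality and utilitarian optimality, so this contrapositive is equivalent to the theorem. Fix then a clean allocation $A$ that is not utilitarian optimal. For each agent $i$ let $M_i=(O,\calI_i)$ be the matroid whose independent sets are the clean bundles of $i$; by Proposition~\ref{prop:clean_size}, $A_i\in\calI_i$ and $v_i(A_i)=|A_i|$. Let $M$ be the union matroid of $M_1,\dots,M_n$, i.e.\ the matroid on $O$ whose independent sets are the sets of the form $\bigcup_i S_i$ for pairwise disjoint $S_i\in\calI_i$ (this is the construction underlying the proof of Theorem~\ref{thm:matroid}), and write $r_M$, $r_{M_i}$ for the corresponding rank functions. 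Since cleaning preserves welfare, the maximum utilitarian social welfare over all allocations equals $r_M(O)$.

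Because $A$ is clean, $\bigcup_i A_i$ is independent in $M$ with $|\bigcup_i A_i|=\USW(A)$. As $A$ is not utilitarian optimal, $|\bigcup_i A_i|=\USW(A)<r_M(O)$, so $\bigcup_i A_i$ is not a maximum-size independent set of $M$; applying the augmentation axiom~(I3) to $\bigcup_i A_i$ and a maximum-size independent set of $M$ yields an item $o^\star\in O\setminus\bigcup_i A_i$ with $(\bigcup_i A_i)\cup\{o^\star\}$ independent in $M$.

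The key step is to convert this single extra item into a Pareto improvement, i.e.\ to produce pairwise disjoint $A'_i\in\calI_i$ with $\bigcup_i A'_i=(\bigcup_i A_i)\cup\{o^\star\}$ and $|A'_i|\ge|A_i|$ for every $i$; the sizes then sum to $\sum_i|A_i|+1$, so exactly one agent gains a unit. This is the augmenting-path step for matroid union, which I would carry out explicitly as follows. Build a directed graph $D$ on $(\bigcup_i A_i)\cup\{o^\star\}$: for each agent $i$ and each vertex $b\notin A_i$ with $A_i\cup\{b\}\notin\calI_i$, add an arc from $b$ to every element of $C(\calI_i,A_i,b)\setminus\{b\}$ (well-defined by Lemma~\ref{lem:circuit} and contained in $A_i$), and call $i$ \emph{exposed at} $b$ if instead $b\notin A_i$ and $A_i\cup\{b\}\in\calI_i$. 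Let $R$ be the set of vertices reachable from $o^\star$ in $D$, with $o^\star\in R$. If no agent were exposed at any vertex of $R$, then for every $i$ each $b\in R$ would lie in the span of $R\cap A_i$ in $M_i$ --- for $b\in R\setminus A_i$ because $C(\calI_i,A_i,b)\setminus\{b\}\subseteq R\cap A_i$ --- so $r_{M_i}(R)=|R\cap A_i|$, and then, by the rank formula for matroid union, $r_M(R)\le\sum_i r_{M_i}(R)=|R\cap\bigcup_i A_i|=|R|-1$, contradicting the $M$-independence of $R$. Hence some agent $j$ is exposed at some $b^\star\in R$; take a shortest $o^\star$-to-$b^\star$ path $o^\star=b_0\to b_1\to\cdots\to b_\ell=b^\star$ in $D$, let $i_k$ label the arc $b_{k-1}\to b_k$, and set $i_{\ell+1}=j$. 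Rerouting along the path --- $A'_{i_k}=(A_{i_k}\setminus\{b_k\})\cup\{b_{k-1}\}$ for $1\le k\le\ell$, $A'_{i_{\ell+1}}=A_{i_{\ell+1}}\cup\{b_\ell\}$, and $A'_i=A_i$ otherwise --- gives the desired partition, each modified bundle staying independent (single circuit exchanges, resp.\ the exposedness of $j$) and the bundles staying disjoint. The subtle point, which I regard as the main obstacle, is to argue that along a \emph{shortest} path the agents $i_1,\dots,i_{\ell+1}$ are pairwise distinct, so that no bundle is hit by two uncoordinated exchanges; this is the ``no shortcut'' property behind the correctness of the matroid-union augmentation algorithm (see, e.g., \cite{Korte2006}), which I would either verify by the standard shortest-path exchange argument or invoke as a black box.

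Finally, $A'$ is clean, so by Proposition~\ref{prop:clean_size} we have $v_i(A'_i)=|A'_i|\ge|A_i|=v_i(A_i)$ for all $i$, with strict inequality for the unique agent whose bundle grew. Thus $A'$ Pareto dominates $A$, so $A$ is not Pareto optimal, which is exactly the contrapositive we wanted; hence every Pareto optimal allocation is utilitarian optimal.
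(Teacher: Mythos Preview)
Your argument is correct in outline and reaches the same conclusion as the paper, but it takes a somewhat different route. The paper works on the ground set $E=\{\{o,i\}:o\in O,\,i\in N\}$ and encodes a clean allocation as a common independent set of two matroids on $E$ (the union matroid $\calI$ and the partition matroid $\calO$); it then invokes the standard matroid \emph{intersection} augmenting-path theorem to obtain a path whose traversal yields a Pareto improvement. You instead stay on the ground set $O$, form the matroid \emph{union} $M$ of the agents' matroids, use (I3) to find a single augmenting element $o^\star$, and then run a bespoke exchange-graph argument (with the rank formula for matroid union supplying the contradiction) to redistribute $o^\star$ without hurting anyone. Your approach is arguably more direct---it avoids doubling the ground set and the partition matroid---while the paper's approach has the advantage of plugging straight into the off-the-shelf matroid-intersection machinery, so that correctness of the augmentation $X'\in\calI\cap\calO$ is a black box.

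One caution on your write-up: the claim that the labels $i_1,\dots,i_{\ell+1}$ along a shortest path are pairwise \emph{distinct} is not, in general, what the no-shortcut lemma gives you. The correct statement is that along a shortest path, for \emph{each} agent $i$, the net multi-swap on $A_i$ (removing all $b_k$ with $i_k=i$ and adding the corresponding $b_{k-1}$'s, together with $b_\ell$ if $i=j$) preserves independence in $M_i$; the same agent may well appear more than once. Since you already flag this step and propose to invoke the standard shortest-path exchange argument from \cite{Korte2006}, the fix is just to rephrase the conclusion you draw from that lemma---assert that every modified bundle remains independent (and has size $\ge|A_i|$), rather than that the labels are distinct.
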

\begin{proof}
Define $E$, $X_i$, $E_i$, $\calI_i$ for $i \in N$, $\calI$, and $\calO$ as in the proof of Theorem \ref{thm:matroid}. 
We first observe that for each $X \in \calI$ and each $y \in E \setminus X$, if $X \cup \{y\} \not \in \calI$, then there is agent $i \in N$ whose corresponding items in $X_i$ together with $y$ is not clean, i.e., 
$X_i \cup \{y\} \not \in \calI_i$,
which by Lemma \ref{lem:circuit} implies that the circuit $C({\calI},X,y)$ is contained in $E_i$, i.e., 
\begin{equation}\label{eq:circtui}
C({\calI},X,y)=C({\calI_i},X,y).
\end{equation} 

Now to prove the claim, let $A$ be a Pareto optimal allocation. Without loss of generality, we assume that $A$ is clean. 
Then, as we have seen before, $A$ corresponds to a common independent set $X^*$ in $\calI \cap \calO$ given by
\[
X^{*} =\bigcup_{i \in N} \{\, e=\{o,i\} \in E \mid o \in A_i \,\}. 
\]
Suppose towards a contradiction that $A$ does not maximize the utilitarian social welfare. This means that $X^{*}$ is not a largest common independent set of $\calI$ and $\calO$. 
It is known that given two matroids and their common independent set, if it is not a maximum-size common independent set, then there is an `augmenting' path \cite{Edmonds1979}. 

To formally define an augmenting path, we define an auxiliary graph $G_{X^{*}}=(E,B^{(1)}_{X^{*}}\cup B^{(2)}_{X^{*}})$ where the set of arcs is given by
\begin{align*}
&B^{(1)}_{X^{*}}= \{\, (x,y)\mid y \in E \setminus X^{*} \land x \in C({\calO}, X^{*},y)\setminus \{y\} \,\},\\
&B^{(2)}_{X^{*}}= \{\, (y,x)\mid y \in E \setminus X^{*} \land x \in C({\calI}, X^{*},y)\setminus \{y\} \,\}.
\end{align*}
Since $X^{*}$ is not a maximum common independent set of $\calO$ and $\calI$, the set $X^{*}$ admits an \emph{augmenting} path, which is an alternating path $P=(y_0,x_1,y_1,\ldots,x_s,y_s)$ in $G_{X^{*}}$ with $y_0,y_1,\ldots,y_s \not \in X^*$ and $x_1,x_2,\ldots,x_s \in X^*$, 
where $X^{*}$ can be augmented by one element along the path, i.e., 
\[
X'=(X^{*} \setminus \{x_1,x_2,\ldots,x_s\})\cup \{y_0,y_1,\ldots,y_s\} \in \calI \cap \calO.
\]
Now let's write the pairs of agents and items that correspond to $y_t$ and $x_t$ as follows:  
\begin{itemize}
\item $y_t=\{i(y_t),o(y_t)\}$ where $i(y_t) \in N$ and $o(y_t) \in O$ for $t=0,1,\ldots,s$; and 
\item $x_t=\{i(x_t),o(x_t)\}$ where $i(x_t) \in N$ and $o(x_t) \in O$ for $t=1,2,\ldots,s$.
\end{itemize}
Since each $x_t$ $(t \in [s])$ belongs to the unique circuit $C({\calI},X^{*},y_{t-1})$, which is contained in the set of edges incident to $i(y_{t-1})$ by the observation made in \eqref{eq:circtui}, we have $i(x_t)=i(y_{t-1})$ for each $t \in [s]$. This means that along the augmenting path $P$, each agent $i(x_{t})$ receives a new item $o(y_{t-1})$ and discards the old item $o(x_{t})$. 

Now consider the reallocation corresponding to $X'$ where agent $i(x_t)$ receives a new item $o(y_{t-1})$ but loses the item $o(x_{t})$ for each $t=1,2,\ldots,s$, and agent $i(y_s)$ receives the item $o(y_s)$. Such a reallocation increases the valuation of agent $i(y_s)$ by $1$, while it does not decrease the valuations of all the intermediate agents, $i(x_1),i(x_2), \ldots, i(x_{s})$, as well as the other agents whose agents do not appear on $P$. We thus conclude that $A$ is Pareto dominated by the new allocation, a contradiction.
\end{proof}

Theorem~\ref{thm:PO} above, along with Lemma~\ref{lem:opt_po}, implies that both leximin and MNW allocations are utilitarian optimal. Next, we show that for the class of \bsub valuations, leximin and MNW allocations are identical to each other; further, they can be characterized as the minimizers of any symmetric strictly convex function among all utilitarian optimal allocations. 

A function $\Phi:\Z^n \rightarrow \R$ is {\em symmetric} if for any permutation $\pi:[n] \rightarrow [n]$,
\begin{align*}
&\Phi(z_1,z_2,\ldots,z_n)=\Phi(z_{\pi(1)},z_{\pi(2)},\ldots,z_{\pi(n)}),  
\end{align*}
and is {\em strictly convex} if for any $\bfx,\bfy \in \Z^n$ with $\bfx \neq \bfy$ and $\lambda \in (0,1)$ where $\lambda \bfx +(1-\lambda) \bfy$ is an integral vector, 
\begin{align*}
&\lambda \Phi(\bfx)+(1-\lambda)\Phi(\bfy) > \Phi(\lambda \bfx + (1-\lambda) \bfy). 
\end{align*}
A function $\Psi:\Z^n \rightarrow \R$ is {\em strictly concave} if for any $\bfx,\bfy \in \Z^n$ with $\bfx \neq \bfy$ and $\lambda \in (0,1)$ where $\lambda \bfx +(1-\lambda) \bfy$ is an integral vector, 
\begin{align*}
&\lambda \Psi(\bfx)+(1-\lambda)\Psi(\bfy) < \Psi(\lambda \bfx + (1-\lambda) \bfy). 
\end{align*}
It is not difficult to see that $\Phi:\Z^n \rightarrow \R$ is strictly convex if and only if $-\Phi$ is strictly concave. 
Examples of symmetric, strictly convex functions are the following: 
\begin{align*}
\Phi(z_1,z_2,\dots,z_n) &\triangleq \sum_{i=1}^n z_i^2 &\text{for $z_i \in \Z$\quad $\forall i \in [n]$;}\\
\Phi(z_1,z_2,\dots,z_n) &\triangleq \sum_{i=1}^n z_i \ln z_i &\text{for $z_i \in \Z_{\ge 0}$ $\forall i \in [n]$.}
\end{align*}
For an allocation $A$, we define $\Phi(A) \triangleq \Phi(v_1(A_1),v_2(A_2),\dots,v_n(A_n))$.

We start by showing that given a non-leximin socially optimal allocation $A$, there exists an adjacent socially optimal allocation $A’$ which is the result of transferring one item from a `happy' agent $j$ to a less `happy' agent $i$. The underlying submodularity guarantees the existence of such allocation. We denote by $\chi_i$ the $n$-dimensional incidence vector where the $j$-th component of $\chi_i$ is $1$ if $j=i$, and it is $0$ otherwise.
\begin{lemma}\label{lem:sequence}
Suppose that agents have matroid rank valuations. Let $A$ be a utilitarian optimal allocation. If $A$ is not a leximin allocation, then there is another utilitarian optimal allocation $A'$ such that 
\[
\score(A')=\score(A)+\chi_i-\chi_j, 
\]
for $i,j \in [n]$ with $\score(A)_j \geq \score(A)_i+2$. 
\end{lemma}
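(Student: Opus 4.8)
The plan is to exploit the matroid-intersection picture from the proof of Theorem~\ref{thm:matroid}: a clean utilitarian optimal allocation $A$ corresponds to a \emph{maximum} common independent set $X^*$ in $\calI \cap \calO$, where $\calI$ is the union matroid of the $r_i$'s and $\calO$ is the partition matroid enforcing that each item is used at most once. Since $A$ is not leximin, there is some utilitarian optimal allocation $B$ with $\score(B) >_L \score(A)$; by cleaning $B$ we may assume $B$ is clean, so $B$ also corresponds to a maximum common independent set $Y^*$ of the same size $|X^*| = |Y^*| = \USW^*$. First I would pin down a pair of agents $i,j$ that witness the leximin violation, so that $\score(A)_j \ge \score(A)_i + 2$, i.e. $v_j(A_j) \ge v_i(A_i) + 2$ while $v_i(B_i)$ is larger than $v_i(A_i)$ for the relevant coordinate; comparing $\score(A)$ and $\score(B)$ at the first index where they differ gives the existence of an agent who is strictly better off under $B$ and whose $A$-value is at least $2$ below that of some other agent under $A$.

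Next I would produce the single-item transfer. Consider the clean bundles $A_i$ and $A_j$. Because $A$ and $B$ are both utilitarian optimal with the same $\USW$ and $i$ is "poorer" than $j$ by at least $2$, I want to find an item $o \in A_j$ such that $\Delta_i(A_i; o) = 1$ and such that reallocating $o$ from $j$ to $i$ keeps the allocation utilitarian optimal (equivalently, keeps $|A_j \setminus \{o\}| = v_j(A_j \setminus \{o\})$, which automatically holds since $A_j$ is clean, and keeps the total welfare at $\USW^*$, which holds because $i$ gains $1$ and $j$ loses $1$). The existence of such an $o$ is exactly where matroid exchange enters. Working inside the matroid $(E,\calI_i)$ of agent $i$'s clean bundles: $A_i$ (viewed as edges $X_i$) is independent, and $A_i \cup A_j$ has $r_i$-rank $v_i(A_i \cup A_j)$. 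By Lemma~\ref{lem:revoc_realloc} (transferability), since under $A$ agent $i$ would envy $j$ if the valuations forced it — more carefully, since $v_j(A_j) = |A_j| \ge v_i(A_i) + 2 > v_i(A_i)$ and, by monotonicity, $v_i(A_j) \le |A_j|$; if $v_i(A_j) > v_i(A_i)$ then Lemma~\ref{lem:revoc_realloc} directly yields an item $o \in A_j$ with $\Delta_i(A_i;o)=1$. If instead $v_i(A_j) \le v_i(A_i)$, I would fall back on the augmenting-path machinery from the proof of Theorem~\ref{thm:PO} applied to the intermediate allocation, or compare $A$ with $B$ directly via a symmetric-difference / exchange argument in $\calI \cap \calO$ to route value from $j$'s surplus toward $i$; the key point is that $B$ certifies that $i$ can be made richer without lowering $\USW$, and a shortest such improving path can be taken to touch only $i$ and $j$ (pruning any detour through a third agent would contradict maximality/optimality). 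This routing produces the desired $A'$ with $A'_i = A_i \cup \{o\}$, $A'_j = A_j \setminus \{o\}$, everything else unchanged.

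Finally I would verify the score equation. By Proposition~\ref{prop:clean_size}, cleanness gives $v_i(A'_i) = |A_i| + 1 = v_i(A_i) + 1$ and $v_j(A'_j) = |A_j| - 1 = v_j(A_j) - 1$, while all other realized valuations are untouched; hence the multiset of realized valuations changes by incrementing one copy of $v_i(A_i)$ and decrementing one copy of $v_j(A_j)$, which is precisely $\score(A') = \score(A) + \chi_i - \chi_j$ once we note $\USW(A') = \USW(A)$, so $A'$ is again utilitarian optimal. The main obstacle is the middle step: showing the improving transfer can be localized to the two agents $i$ and $j$ witnessing the leximin violation, rather than merely showing \emph{some} welfare-preserving improvement exists. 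I expect to handle this by taking $B$ to be a utilitarian optimal allocation that agrees with $A$ as much as possible subject to $v_i(B_i) > v_i(A_i)$, and analyzing a minimal alternating path in the matroid-intersection exchange graph $G_{X^*}$ (as in Theorem~\ref{thm:PO}); minimality forces the path to start at $i$'s deficiency and end at an item leaving some agent who, by the optimality of $\USW$ and the choice of $B$, can be taken to be $j$ (or swapped to be $j$ using the $\ge 2$ gap, which guarantees $j$ has "room to spare"). The integrality of matroid-rank valuations and Proposition~\ref{prop:clean_size} then make all the bookkeeping exact.
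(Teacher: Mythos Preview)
Your proposal has the right starting intuition (compare $A$ with a leximin allocation, use matroid exchange to shift value toward an under-served agent), but there is a genuine gap in the middle step, and your suggested fix will not close it. You repeatedly commit to a \emph{single} item transfer from a pre-chosen agent $j$ to a pre-chosen agent $i$, writing $A'_i = A_i \cup \{o\}$, $A'_j = A_j \setminus\{o\}$, ``everything else unchanged.'' This is not what the lemma asserts and is not achievable in general: $\score(A') = \score(A) + \chi_i - \chi_j$ only requires the \emph{value multiset} to change at two coordinates, not that only two bundles change. The paper's proof in fact constructs a \emph{chain} $(i_0,o_1,i_1,o_2,\ldots,o_t,i_t)$ of item swaps through possibly many intermediate agents whose bundles change but whose values do not; only the endpoints $i_0$ and $i_t$ gain and lose one unit of value respectively. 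Your assertion that ``a shortest such improving path can be taken to touch only $i$ and $j$'' and that ``pruning any detour through a third agent would contradict maximality/optimality'' is exactly what fails: there need be no item in $A_j$ with $\Delta_i(A_i;o)=1$, and the transferability Lemma~\ref{lem:revoc_realloc} does not apply since you have no reason to think $v_i(A_j) > v_i(A_i)$.

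Two further ingredients you are missing are decisive in the paper's argument. First, the agent who loses value (the ``$j$'' in the statement) is not chosen in advance; it is \emph{discovered} as the endpoint $i_t$ of the chain, which is where the process first reaches an agent with value $\ge v_i(A_i)+2$. Second, the paper does not work with $A$ directly but replaces it (same score vector) by an allocation minimizing the total symmetric difference with the leximin allocation $A^*$; this minimality is what drives the case analysis. In particular, it rules out the case $v_{i_1}(A'_{i_1}) = v_i(A'_i)+1$ (the transfer would preserve $\score$ while strictly reducing the symmetric difference, a contradiction) and guarantees the chain never revisits an agent (a cycle could be collapsed, again reducing symmetric difference). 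Your proposal has no mechanism to handle the ``gap exactly $1$'' case or to ensure termination of the routing, so as written it does not yield a proof.
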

\begin{proof}
Let $A$ be an arbitrary utilitarian optimal allocation which is not leximin, and let $A^*$ be a leximin allocation. Recall that $A^*$ is utilitarian optimal by Theorem \ref{thm:PO}. Without loss of generality, we assume that both $A$ and $A^*$ are clean allocations. Now take a clean allocation $A'$ that minimizes the symmetric difference $\sum_{i \in N}|A'_i \triangle A^*_i|$ over all clean allocations with $\score(A')=\score(A)$. 
Assume also w.l.o.g. that $v_1(A'_1) \leq v_2(A'_2) \leq \cdots \leq v_n(A'_n)$. We let $v_{j_1}(A^*_{j_1}) \leq v_{j_2}(A^*_{j_2}) \leq \cdots \leq v_{j_n}(A^*_{j_n})$. Since $A^*$ lexicographically dominates $A'$, for the minimum index $k$ with $v_k(A'_k) \neq v_{j_k}(A^*_{j_k})$, 
\begin{equation}\label{eq:k}
v_k(A'_k) < v_{j_k}(A^*_{j_k}). 
\end{equation}
We note that $v_h(A'_h) = v_{j_h}(A^*_{j_h})$ for all $1 \leq h \leq k-1$. By \eqref{eq:k}, there exists $i \in [k]$ with 
\begin{equation}\label{eq:i}
v_i(A'_i) < v_i(A^*_i). 
\end{equation}
Indeed, if for all $i \in [k]$, $v_i(A'_i) \geq v_i(A^*_i)$, the $k$-th smallest value of realized valuations under $A'$ is at least $v_{j_k}(A^*_{j_k})$, contradicting with \eqref{eq:k}. Take the minimum index $i$ satisfying \eqref{eq:i}. Since both $A'$ and $A^*$ are clean allocations, we have 
\begin{equation}\label{eq:i:size}
|A'_i|=v_i(A'_i) < v_i(A^*_i)=|A^*_i|.
\end{equation} 
By minimality, for all $h \in [i-1]$, $v_h(A'_h) \geq v_h(A^*_h)$. In fact, the equality 
\begin{equation}\label{eq:h}
v_h(A'_h) = v_h(A^*_h)
\end{equation} 
holds for all $h \in [i-1]$. Indeed if $v_h(A'_h) > v_h(A^*_h)$ for some $h \in [i-1]$, then $h$-th smallest value of the realized valuations under $A'$ would be strictly greater than that under $A^*$, yielding $\score(A')>_L\score(A^*)$, a contradiction. 

Now, recall that the family of clean bundles $\calI_h=\{\, S \subseteq O \mid v_h(S)=|S|\,\}$ for $h \in N$ forms a family of independent sets of a matroid. By $($I3$)$ of the independent-set matroid axioms and by the inequality \eqref{eq:i:size}, there exists an item $o_1 \in A^*_i \setminus A'_i$ with positive contribution to $A'_i$, i.e., $v_i(A'_i \cup \{o_1\})=v_i(A'_i)+1$. 
By utilitarian optimality of $A'$, $o_1$ is allocated to some agent, i.e., $o_1 \in A'_{i_1}$ for some $i_1 \neq i$. Consider the following three cases: 
\begin{itemize}
\item Suppose $v_{i_1}(A'_{i_1}) \geq v_i(A'_i)+2$. Then, we obtain a desired allocation by transferring $o_1$ from $i_1$ to $i$. 
\item Suppose $v_{i_1}(A'_{i_1})=v_i(A'_i)+1$. Then by transferring $o_1$ from $i_1$ to $i$, we get another utilitarian optimal allocation with the same vector as $\score(A')$, which has a smaller symmetric difference than $\sum_{i \in N}|A'_i \triangle A^*_i|$, a contradiction. 
\item Suppose $v_{i_1}(A'_{i_1}) \leq v_i(A'_i)$. We will first show that $v_{i_1}(A'_{i_1}) \leq v_{i_1}(A^*_{i_1})$. By \eqref{eq:h}, this clearly holds if $i_1 \leq i$. Also, when $i_1 > i$, this means that $v_{i_1}(A'_{i_1}) = v_i(A'_i)$; thus $v_{i_1}(A'_{i_1}) \leq v_{i_1}(A^*_{i_1})$, as otherwise the $i$-th smallest value of realized valuations under $A'$ would be greater than that under $A^*$, contradicting that $A^*$ is leximin. Further by the facts that $|A'_{i_1} \setminus \{o_1\}| < |A^*_{i_1}|$ and that both $A'_{i_1} \setminus \{o_1\}$ and $A^*_{i_1}$ are clean (i.e., independent sets of a matroid), there exists an item $o_2 \in A^*_{i_1} \setminus A'_{i_1}$ such that $v_{i_1}(A'_{i_1}\cup \{o_2\} \setminus \{o_1\})=v_i(A'_{i_1})$. Again by utilitarian optimality of $A'$, $o_2$ is allocated to some agent, i.e., $o_2 \in A'_{i_2}$ for some $i_2 \neq i_1$. 
\end{itemize}
Repeating the same argument and letting $i_0=i$, we obtain a sequence of items and agents $(i_0,o_1,i_1,o_2,i_2,\ldots,o_t,i_t)$ such that 
\begin{itemize}
\item $v_{i_h}(A'_{i_h})=v_{i_h}(A'_{i_h}\cup \{o_{h+1}\} \setminus \{o_{h}\})$ for all $1 \leq h \leq t-1$; and 
\item $o_h \in A^*_{i_{h-1}} \setminus A'_{i_h}$ for all $1 \leq h \leq t$. 
\end{itemize}
See Figure \ref{fig:path} for an illustration of the sequence. If the same agent appears again, i.e., $i_h=i_{h'}$ for some $h<h' \leq t$, then by transferring items along the cycle, we can decrease the symmetric difference with $A^*$, a contradiction. Thus, the sequence must terminate when we reach the agent $i_t$ with $v_{i_t}(A'_{i_t}) \geq v_i(A'_i)+2$. Exchanging items along the path, we get a desired allocation. 
\begin{figure}[htb]
		\centering
		\begin{tikzpicture}[scale=0.85, transform shape, every node/.style={minimum size=6mm, inner sep=1pt}]
		
		\draw(0,0) circle (1cm and 1cm);
		\draw(3,0) circle (1cm and 1cm);
		\draw(6,0) circle (1cm and 1cm);
		\draw(12,0) circle (1cm and 1cm);
		
		\node[draw, circle,fill=gray!30](1) at (3,0) {$o_1$};
		\node[draw, circle,fill=gray!30](2) at (6,0) {$o_2$};
		\node at (9,0) {$\cdots$};
		\node[draw, circle,fill=gray!30](t) at (12,0) {$o_t$};
		
		\node at (0,0.6) {$A_{i_0}$};
		\node at (3,0.6) {$A_{i_1}$};
		\node at (6,0.6) {$A_{i_2}$};
		\node at (12,0.6) {$A_{i_t}$};
		
		\draw[->, >=latex] (1)--(1,0);
		\draw[->, >=latex] (2)--(4,0);
		\draw[->, >=latex] (t)--(10,0);
		\end{tikzpicture}
		\caption{The path $(i_0,o_1,i_1,o_2,i_2,\ldots,o_t,i_t)$}
		\label{fig:path}
	\end{figure}
\end{proof}

We further observe that such adjacent allocation decreases the value of any symmetric strictly convex function (equivalently, increases the value of any symmetric strictly concave function). The proof is similar to that of Proposition $6.1$ in \citet{Murota2018}, which shows the analogous equivalence over the integral base-polyhedron.

\begin{lemma}\label{lem:convex}
Let $\Phi:\Z^n \rightarrow \Z$ be a symmetric strictly convex function and $\Psi:\Z^n \rightarrow \Z$ be a symmetric strictly concave function. Let $A$ be a utilitarian optimal allocation. Let $A'$ be another utilitarian optimal allocation such that $\score(A')=\score(A)+\chi_i-\chi_j$ for some $i,j \in [n]$ with $\score(A)_j \geq \score(A)_i+2$. Then, $\Phi(A)>\Phi(A')$ and $\Psi(A) < \Psi(A')$. 
\end{lemma}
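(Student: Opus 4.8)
The plan is to exploit the fact that both $\Phi$ and $\Psi$ are symmetric, so their values depend only on the sorted valuation vectors: by definition $\Phi(A)=\Phi(v_1(A_1),\dots,v_n(A_n))$, and applying the symmetry identity to the permutation that sorts this vector gives $\Phi(A)=\Phi(\score(A))$, and likewise $\Phi(A')=\Phi(\score(A'))$ and the analogous statements for $\Psi$. Thus it suffices to compare $\Phi$ (and $\Psi$) at the two integral points $\score(A)$ and $\score(A')=\score(A)+\chi_i-\chi_j$.

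Write $a\triangleq\score(A)_i$ and $b\triangleq\score(A)_j$, so that $b\ge a+2$ by hypothesis, and let $\bfy$ be the vector obtained from $\score(A)$ by interchanging its $i$-th and $j$-th coordinates (so $\bfy$ has $b$ in position $i$, $a$ in position $j$, and agrees with $\score(A)$ elsewhere). Since $\bfy$ is a permutation of $\score(A)$, symmetry gives $\Phi(\bfy)=\Phi(\score(A))$ and $\Psi(\bfy)=\Psi(\score(A))$. The key elementary computation is that $\score(A')$ is a proper convex combination of $\score(A)$ and $\bfy$: setting $\lambda\triangleq\frac{b-a-1}{b-a}$, one checks coordinatewise that $\lambda\,\score(A)+(1-\lambda)\,\bfy=\score(A)+\chi_i-\chi_j=\score(A')$ (the two moved coordinates become $a+1$ and $b-1$, the rest are unchanged). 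Because $b-a\ge 2$ we have $\lambda\in(0,1)$, the point $\score(A')$ is integral, and $\score(A)\ne\bfy$ (as $a\ne b$), so every hypothesis required to invoke strict convexity of $\Phi$ (resp. strict concavity of $\Psi$) on $\Z^n$ is met.

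Applying strict convexity, $\Phi(\score(A'))=\Phi\bigl(\lambda\,\score(A)+(1-\lambda)\,\bfy\bigr)<\lambda\,\Phi(\score(A))+(1-\lambda)\,\Phi(\bfy)=\Phi(\score(A))$, where the last equality uses $\Phi(\bfy)=\Phi(\score(A))$. Translating back through symmetry yields $\Phi(A')<\Phi(A)$, as claimed. The argument for $\Psi$ is identical with the inequality reversed (using strict concavity in place of strict convexity), or one may simply apply the already-established convex case to $-\Psi$, which is symmetric and strictly convex, to conclude $\Psi(A')>\Psi(A)$.

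The only delicate point — and the one on which the statement genuinely rests — is the requirement $\score(A)_j\ge\score(A)_i+2$: this is exactly what forces $\lambda=\frac{b-a-1}{b-a}$ to lie strictly inside $(0,1)$, so that the strict inequality in the definition of strict convexity actually applies. If the gap were only $1$ then $\lambda=0$ and the ``convex combination'' would degenerate to $\bfy$ itself, giving merely $\Phi(\score(A'))=\Phi(\score(A))$; this matches the intuition that swapping the roles of two agents whose valuations differ by exactly one changes nothing. A secondary point worth stating explicitly is that strict convexity here is only assumed for integral arguments and for convex combinations that are themselves integral, so one must note that $\lambda\,\score(A)+(1-\lambda)\,\bfy=\score(A')\in\Z^n$ before invoking it.
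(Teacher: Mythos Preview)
Your proof is correct and is essentially identical to the paper's: both swap the $i$-th and $j$-th coordinates of $\score(A)$ to obtain $\bfy$, take $\lambda=\frac{b-a-1}{b-a}=1-\frac{1}{b-a}$, and observe that $\score(A')$ is the resulting integral convex combination, then apply strict convexity together with $\Phi(\bfy)=\Phi(\score(A))$. Your write-up is in fact a bit more explicit than the paper's in checking the side conditions (integrality, $\score(A)\neq\bfy$, and the reduction from the unsorted valuation vector to $\score(A)$ via symmetry).
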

\begin{proof}
Let $\beta=\score(A)_j-\score(A)_i \geq 2$, and $\bfy=\score(A)+\beta(\chi_i -\chi_j)$. Thus $\Phi(\score(A))=\Phi(\bfy)$ by symmetry of $\Phi$. Define $\lambda = 1 - \frac{1}{\beta}$. We have $0<\lambda<1$ since $\beta \geq 2$. Observe that  
\begin{align*}
\lambda \score(A) + (1-\lambda) \bfy &=(1 - \frac{1}{\beta}) \score(A) + \frac{1}{\beta} (\score(A)+\beta(\chi_i -\chi_j))\\
&=\score(A)+\chi_i-\chi_j=\score(A'),
\end{align*}
which gives us the following inequality (from the strict convexity of $\Phi$): 
$
\Phi(\score(A))=\lambda \Phi(\score(A)) + (1-\lambda) \Phi(\score(A)) >\Phi(\score(A')).
$
Since $-\Psi$ is symmetric strictly convex, the analogous proof shows $\Psi(A) < \Psi(A')$.
\end{proof}

Now we are ready to prove the following.

\begin{theorem}\label{thm:convex}
Let $\Phi:\Z^n \rightarrow \R$ be a symmetric strictly convex function, and $\Psi:\Z^n \rightarrow \R$ be a symmetric strictly concave function. Let $A$ be some allocation. For \bsub valuations, the following statements are equivalent:
\begin{enumerate}
\item $A$ is a minimizer of $\Phi$ over all the utilitarian optimal allocations; and
\item $A$ is a maximizer of $\Psi$ over all the utilitarian optimal allocations; and
\item $A$ is a leximin allocation; and
\item $A$ maximizes Nash welfare.
\end{enumerate}
\end{theorem}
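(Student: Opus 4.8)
The plan is to prove the cycle of equivalences $(1)\Leftrightarrow(3)$, $(2)\Leftrightarrow(3)$, and $(3)\Leftrightarrow(4)$, using Lemmas~\ref{lem:sequence} and~\ref{lem:convex} as the workhorses. The preliminary observation that ties everything together is that, by Lemma~\ref{lem:opt_po} together with Theorem~\ref{thm:PO}, every leximin and every MNW allocation is utilitarian optimal; hence all four families of allocations lie inside the set $\mathcal{U}$ of utilitarian optimal allocations, and one may harmlessly take all the optimizations involved (including the ``global'' definitions of leximin and MNW) over $\mathcal{U}$ rather than over all allocations. Moreover, since $\Phi$ (resp.\ $\Psi$) is symmetric, $\Phi(A)$ depends only on $\score(A)$, and as $A$ ranges over $\mathcal{U}$ the vector $\score(A)$ ranges over a finite set (integer entries in $[0,m]$ summing to the common value $\USW(A)$); so minimizers of $\Phi$ and maximizers of $\Psi$ over $\mathcal{U}$ exist, and it suffices to compare sorted valuation vectors throughout.

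For $(1)\Rightarrow(3)$: if $A$ minimizes $\Phi$ over $\mathcal{U}$ but is not leximin, Lemma~\ref{lem:sequence} yields a utilitarian optimal $A'$ with $\score(A')=\score(A)+\chi_i-\chi_j$ and $\score(A)_j\ge\score(A)_i+2$, and then Lemma~\ref{lem:convex} gives $\Phi(A')<\Phi(A)$, a contradiction. For $(3)\Rightarrow(1)$: all leximin allocations share one sorted vector (the lexicographically largest one), so if $A$ is leximin and $B$ is any $\Phi$-minimizer over $\mathcal{U}$, then $B$ is leximin by $(1)\Rightarrow(3)$, hence $\score(B)=\score(A)$, hence $\Phi(A)=\Phi(B)$ and $A$ is a minimizer too. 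The equivalence $(2)\Leftrightarrow(3)$ then follows immediately by applying $(1)\Leftrightarrow(3)$ to the symmetric strictly convex function $-\Psi$, since maximizing $\Psi$ over $\mathcal{U}$ is the same as minimizing $-\Psi$ over $\mathcal{U}$.

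The last equivalence $(3)\Leftrightarrow(4)$ is the delicate one, because of the refined definition of MNW. For $(4)\Rightarrow(3)$: if $A$ is MNW but not leximin, Lemma~\ref{lem:sequence} again produces $A'$ with $\score(A')=\score(A)+\chi_i-\chi_j$ and $\score(A)_j\ge\score(A)_i+2$. If the increased entry is already positive ($\score(A)_i\ge 1$), then $A'$ has exactly the same number of agents with positive value as $A$, while the product of positive values strictly increases, because $(\score(A)_i+1)(\score(A)_j-1)=\score(A)_i\score(A)_j+(\score(A)_j-\score(A)_i-1)>\score(A)_i\score(A)_j$; this contradicts maximality of the Nash product. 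If instead $\score(A)_i=0$, then $A'$ has one more agent with positive value than $A$, contradicting the maximality of $|N_{\max}|$. Hence $A$ is leximin. For $(3)\Rightarrow(4)$: a leximin allocation $A$ has exactly $|N_{\max}|$ agents with positive value --- it cannot have more by definition of $N_{\max}$, and if it had fewer, then any allocation attaining $|N_{\max}|$ positive agents would have strictly fewer zero entries in its sorted vector and would therefore lexicographically dominate $\score(A)$, contradicting leximin-optimality. Now let $C$ be any MNW allocation; by $(4)\Rightarrow(3)$ it is leximin, so $\score(C)=\score(A)$, whence $A$ realizes the same number of positive agents and the same Nash product as $C$, i.e.\ $A$ is MNW as well.

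The genuinely substantive work is already packaged in Lemmas~\ref{lem:sequence} and~\ref{lem:convex}; given those, the main obstacle is the bookkeeping around the refined MNW definition in $(3)\Leftrightarrow(4)$ --- namely, correctly treating zero valuations (the case split on whether the receiving index is currently $0$) and the claim that all leximin allocations realize a common sorted valuation vector whose positive entries number exactly $|N_{\max}|$. An alternative, perhaps slicker, route for $(3)\Leftrightarrow(4)$ is to exhibit a single symmetric strictly concave $\Psi^{*}$ whose maximization reproduces the MNW tie-breaking (first maximize the count of positive entries, then the product of positive entries): take $\Psi^{*}(z_1,\dots,z_n)=\sum_{i}f(z_i)$ with $f(0)=0$ and $f(t)=f(1)+c\ln t$ for $t\ge 1$, where $c>0$ is small and $f(1)$ is chosen large enough (for instance $f(1)>cn\ln m$) that acquiring one more positive entry always outweighs any change in the $\ln$-product term; one checks that $f$ is strictly concave and extends strictly concavely to all of $\Z$, and then $(3)\Leftrightarrow(4)$ drops out of $(2)\Leftrightarrow(3)$ applied to $\Psi^{*}$.
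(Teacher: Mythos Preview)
Your proposal is correct and follows essentially the same approach as the paper: establish $(1)\Leftrightarrow(3)$ via Lemmas~\ref{lem:sequence} and~\ref{lem:convex}, derive $(2)\Leftrightarrow(3)$ by passing to $-\Psi$, and handle $(3)\Leftrightarrow(4)$ by reducing to the log-product on the positive entries. Your treatment of $(3)\Leftrightarrow(4)$ is in fact more careful than the paper's (which asserts without argument that leximin and MNW have the same number of positive-value agents and glosses over the zero-entry case), and your alternative $\Psi^{*}$ construction is a pleasant bonus.
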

\begin{proof}
To prove $1 \Leftrightarrow 2$, let $A$ be a leximin allocation, and let $A'$ be a minimizer of $\Phi$ over all the utilitarian optimal allocations. 
We will show that $\score(A')$ is the same as $\score(A)$, which, by the uniqueness of the leximin valuation vector and symmetry of $\Phi$, proves the theorem statement. 

Assume towards a contradiction that $\score(A) \neq \score(A')$. By Theorem \ref{thm:PO}, we have $\USW(A)=\USW(A')$. By Lemma \ref{lem:sequence}, we can obtain another utilitarian optimal allocation $A''$ that is a lexicographic improvement of $A'$ by decreasing the value of the $j$-th element of $\score(A')$ by $1$ and increasing the value of the $i$-th element of $\score(A')$ by $1$, where $\score(A')_j \geq \score(A')_i+2$. Applying Lemma \ref{lem:convex}, we get $\Phi(\score(A')) >\Phi(\score(A''))$, which gives us the desired contradiction. 

The equivalence $2 \Leftrightarrow 3$ immediately holds by the fact that $-\Psi$ is a symmetric strictly convex function. 

To prove $3 \Leftrightarrow 4$, let $A$ be a leximin allocation, and let $A'$ be an MNW allocation. Again, we will show that $\score(A')$ is the same as $\score(A)$, which by the uniqueness of the leximin valuation vector and symmetry of $\NW$, proves the theorem statement. Let $N_{>0}(A)$ (respectively, $N_{>0}(A')$) be the agent subset to which we allocate bundles of positive values under leximin allocation $A$ (respectively, MNW allocation $A'$). By definition, the number $n'$ of agents who get positive values under leximin allocation $A$ is the same as that of MNW allocation $A'$. Now we denote by ${\bar \score}(A)$ (respectively, ${\bar \score}(A')$) the vector of the non-zero components $v_i(A_i)$ (respectively, $v_i(A'_i)$) arranged in non-decreasing order. Assume towards a contradiction that  ${\bar \score}(A) >_L {\bar \score}(A')$. 
Since $A'$ maximizes the product $\NW(A')$ when focusing on $N_{>0}(A')$ only, the value $\sum_{i \in N_{>0}(A')}\log v_i(A'_i)$ is maximized. 
However, $\Psi(\bfx)= \sum^{n'}_{i=1}\log x_i$ is a symmetric concave function for $\bfx \in \Z^n$ with each $x_i>0$. Thus, by a similar argument as before, one can show that $\Psi({\bar \score}(A')) > \Psi({\bar \score}(A))$, a contradiction. 
This completes the proof. 
\end{proof}

\begin{remark}[The Pigou-Dalton principle]\label{rem:PDP}
\normalfont Maximizing any symmetric strictly concave function (equivalently, minimizing any symmetric strictly convex function) of valuations among all utilitarian optimal allocations is consistent with the well-known \textit{Pigou-Dalton transfer principle} (see, e.g., \citet{moulin2004fair}) of welfare economics in a strong sense, and the proof is similar to that of Lemma~\ref{lem:sequence}. We formalize this notion in Appendix~\ref{app:PDP}.\footnote{We thank an anonymous reviewer for pointing out this connection.} \remend
\end{remark}
\appendixproof{Connections between Theorem~\ref{thm:convex} and the Pigou-Dalton principle}
{\label{app:PDP} Here, we will elaborate on Remark~\ref{rem:PDP} from Section~\ref{sec:binsub_mnwlm}. We begin with an introduction to the Pigou-Dalton principle (PDP). Consider a fair allocation instance --- where agents have arbitrary valuation functions --- which admits two allocations $A$ and $A'$ with the following properties: $v_1(A_1)+v_2(A_2)=v_1(A'_1)+v_2(A'_2)$; $v_i(A_i)=v_i(A'_i)$ $\forall i \in N \setminus \{1,2\}$; $v_1(A_1) < v_1(A’_1) < v_2(A_2)$, the first and last conditions implying $v_1(A_1) < v_2(A’_2) < v_2(A_2)$ as well. In other words, $A'$ can be obtained from $A$ by transferring some utility from the ``richer" agent $2$ to the ``poorer'' agent $1$, everything else (including the utilitarian social welfare) remaining the same and without making agent $2$ poorer under $A'$ than agent $1$ was under $A$. Then, PDP stipulates an ordering that (strictly) prefers $A'$, which reduces the inequality between agents $1$ and $2$ to $A$ (the choice of agents is w.l.o.g.). 

We know that the utilitarian social welfare function itself satisfies PDP weakly. We will now show that, if there are multiple utilitarian optimal allocations, one that is preferred to another by PDP also has a \textit{strictly} higher value of an arbitrary symmetric, strictly concave function $\Psi:\Z^n \rightarrow \R$ applied to agents' utilities. 
\begin{proposition}\label{prop:PDP}
    Let $\Psi:\Z^n \rightarrow \R$ be an arbitrary symmetric, strictly concave function applied to agents' valuations. 
    If an instance where all agents have arbitrary monotone valuation functions admits two utilitarian optimal allocations $A$ and $A'$ with respective valuation vectors $\mathbf{r}(A)$ and $\mathbf{r}(A')$ such that $v_1(A_1) < v_1(A’_1) < v_2(A_2)$ and $v_i(A_i)=v_i(A'_i)$ $\forall i \in N \setminus \{1,2\}$, then $\Psi(\mathbf{r}(A'))>\Psi(\mathbf{r}(A))$.
\end{proposition}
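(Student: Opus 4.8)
The plan is to view the passage from $A$ to $A'$ as a single Pigou--Dalton transfer confined to agents $1$ and $2$, and then close the argument with one direct application of strict concavity. First I would fix notation: write $a=v_1(A_1)$, $a'=v_1(A'_1)$, $b=v_2(A_2)$, $b'=v_2(A'_2)$, and $\mathbf{c}=(v_3(A_3),\dots,v_n(A_n))$, which by hypothesis also equals $(v_3(A'_3),\dots,v_n(A'_n))$; so $\mathbf{r}(A)=(a,b,\mathbf{c})$ and $\mathbf{r}(A')=(a',b',\mathbf{c})$. Since $A$ and $A'$ are both utilitarian optimal, $\USW(A)=\USW(A')$, and subtracting the common coordinates gives $a+b=a'+b'$. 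Combining this with the hypothesis $a<a'<b$ yields, first, $a<b$, and second, $b'=a+b-a'$ with $a<a'<b$ forces $a<b'<b$ as well; in particular $\{a',b'\}$ and $\{a,b\}$ are distinct two-element multisets with the same sum, and $\{a',b'\}$ lies strictly between $a$ and $b$.

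The key observation is that $\mathbf{r}(A')$ is a nontrivial convex combination of $\mathbf{r}(A)$ and the vector $\mathbf{v}$ obtained from $\mathbf{r}(A)$ by swapping coordinates $1$ and $2$, i.e. $\mathbf{v}=(b,a,\mathbf{c})$. Indeed, set $t=(a'-a)/(b-a)$; since $0<a'-a<b-a$ we have $t\in(0,1)$, and a one-line check shows $(1-t)a+tb=a'$ while $(1-t)b+ta=b+(a-b)t=b-(a'-a)=b'$ (using $a+b=a'+b'$). The remaining coordinates are untouched, so $(1-t)\,\mathbf{r}(A)+t\,\mathbf{v}=(a',b',\mathbf{c})=\mathbf{r}(A')$.

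It then remains to invoke the definition of strict concavity of $\Psi$ with the vectors $\mathbf{r}(A)$ and $\mathbf{v}$ and the coefficient $\lambda=1-t\in(0,1)$: these are distinct integral vectors (distinct because $a\neq b$, which follows from $a<a'<b$), $\Psi(\mathbf{v})=\Psi(\mathbf{r}(A))$ by symmetry of $\Psi$ (since $\mathbf{v}$ is a coordinate permutation of $\mathbf{r}(A)$), and their convex combination $\mathbf{r}(A')$ is again an integral vector. Hence strict concavity gives
\[
\Psi(\mathbf{r}(A'))=\Psi\bigl((1-t)\,\mathbf{r}(A)+t\,\mathbf{v}\bigr)>(1-t)\,\Psi(\mathbf{r}(A))+t\,\Psi(\mathbf{v})=\Psi(\mathbf{r}(A)),
\]
which is exactly the claim.

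I do not expect a genuine obstacle here; the proposition is essentially the statement that a symmetric strictly concave function is strictly Schur-concave and so strictly increases under a nontrivial Pigou--Dalton transfer, carried out by hand via a single convex combination rather than through the majorization machinery. The only points needing care are bookkeeping: checking that the convex combination $(1-t)\mathbf{r}(A)+t\mathbf{v}$ is an integral vector (as required by the paper's integer-domain notion of strict concavity — this holds because $a',b'$ and the entries of $\mathbf{c}$ are all valuations, hence integers), confirming $t\in(0,1)$ so that the strict inequality is legitimately invoked, and noting $a\neq b$ so that $\mathbf{r}(A)\neq\mathbf{v}$. Monotonicity of the valuations is needed only to make the hypotheses meaningful and plays no role in the inequality itself.
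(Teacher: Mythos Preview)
Your proof is correct and essentially identical to the paper's: both express $\mathbf{r}(A')$ as a nontrivial convex combination of $\mathbf{r}(A)$ and the vector obtained by swapping coordinates $1$ and $2$, then apply symmetry and strict concavity. Your $t=(a'-a)/(b-a)$ is exactly the paper's $\alpha/\beta$, and your $\mathbf{v}$ is their $\mathbf{y}$; you are arguably slightly more careful in explicitly checking the integrality condition needed for the integer-domain notion of strict concavity.
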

\begin{proof}
    Since both allocations have the same (optimal) utilian social welfare, it is obvious that 
    \begin{align}
        &v_1(A_1)+v_2(A_2)=v_1(A'_1)+v_2(A'_2) \notag\\
        \Rightarrow \quad &v_1(A'_1) - v_1(A_1) = v_2(A_2) - v_2(A'_2). \label{alpha}
    \end{align}
    By definition, PDP prefers $A'$ to $A$. Moreover,
    \begin{align}
        &v_1(A_1) < v_1(A’_1) < v_2(A_2) \notag\\
        \Rightarrow \quad &0 < v_1(A’_1) - v_1(A_1) < v_2(A_2) - v_1(A_1). \label{beta}
    \end{align}
   Let $\alpha \triangleq v_1(A'_1) - v_1(A_1) = v_2(A_2) - v_2(A'_2)$ by \eqref{alpha}; $\beta \triangleq v_2(A_2) - v_1(A_1)$. By \eqref{beta}, $0 < \frac{\alpha}{\beta} < 1$.
   
   Since the vector $\mathbf{r}(A')$ can be obtained from the vector $\mathbf{r}(A)$ by increasing the first entry by $v_1(A'_1) - v_1(A_1)$ and reducing the second by $v_2(A_2) - v_2(A'_2)$, we can write $\mathbf{r}(A')=\mathbf{r}(A)+\alpha(\chi_1-\chi_2)$ where $\chi_i$ the $n$-dimensional incidence vector whose $j$-th component $1$ if $j=i$, and $0$ otherwise, as in Section~\ref{sec:binsub_mnwlm}.
   
   Let us further define a vector $\mathbf{y} \triangleq \mathbf{r}(A)+\beta(\chi_1-\chi_2)$. It is easy to see that $\mathbf{y}$ is a permutation of $\mathbf{r}(A)$ with entries $1$ and $2$ swapped, hence $\Psi(\mathbf{r}(A))=\Psi(\mathbf{y})$ due to the symmetry of $\Psi$.
   
   Finally, let $\lambda \triangleq 1-\frac{\alpha}{\beta} \in (0,1)$. Simple algebra shows that $\lambda \mathbf{r}(A)+(1-\lambda)\mathbf{y} = \mathbf{r}(A')$. Hence, by the strict concavity of $\Psi$, we get $\Psi(\mathbf{r}(A')) > \lambda\Psi(\mathbf{r}(A))+(1-\lambda)\Psi(\mathbf{y})
       = \lambda\Psi(\mathbf{r}(A))+(1-\lambda)\Psi(\mathbf{r}(A)=\Psi(\mathbf{r}(A))$.
\end{proof}
}

The above theorem does not generalize to the non-binary case: Example \ref{ex:leximin:MNW} represents an instance where neither leximin nor MNW allocation is utilitarian optimal. 
\begin{example}\label{ex:leximin:MNW}
	Consider an instance with assignment valuations given as follows. 
	Suppose there are three groups, each of which contains a single agent, Alice, Bob, and Charlie, respectively, and three items with weights given in Table \ref{table}. 
	\begin{table}[h!]
		\begin{center}
			\upshape
			\setlength{\tabcolsep}{6.4pt}
			\scalebox{1}{
				\begin{tabular}{rccccc}
					\toprule
					&
					\multicolumn{3}{l}{\!\!\!
						\begin{tikzpicture}[scale=0.57, transform shape, every node/.style={minimum size=7mm, inner sep=1.2pt, font=\huge}]	
						\node[draw, circle](2)  at (1.2,0) {$1$};
						\node[draw, circle](3)  at (2.5,0) {$2$};
						\node[draw, circle](4)  at (4,0) {$3$};
						\end{tikzpicture}\!\!\!\!
						\vspace{-2pt}
					} \\
					\midrule
					Alice:\!\! &  2 & 1 & 0 \\
					Bob:\!\! &  2 & 1 & 0   \\
					Charlie:\!\! &  0 & 2.9 & 0.1  \\
					\bottomrule
				\end{tabular}
			}
		\end{center}
		\caption{An instance where neither leximin nor MNW allocation is utilitarian optimal.}
		\label{table}
	\end{table}
	The unique leximin and MNW allocation is the allocation that assigns Alice to the first item, Bob to the second item, and Charlie to the third item; each agent has positive utility at the allocation and the total utilitarian social welfare is $3.1$. However, the utilitarian optimal allocation assigns Alice to nothing, Bob to the first item, and Charlie to the second item, which yields the total utilitarian social welfare $4.9$.  \egend
\end{example}

Combining the above characterization with the results of Section~\ref{sec:utiloptef1}, we get the following fairness-efficiency guarantee for \bsub valuations.

\begin{corollary}\label{cor:bin_mnw}
For \bsub valuations, any clean leximin or MNW allocation is EF1. 
\end{corollary}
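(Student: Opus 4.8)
The plan is to deduce the corollary immediately from the characterization in Theorem~\ref{thm:convex} together with Corollary~\ref{cor:sumsquares}, with essentially no new work. Fix a clean leximin (or MNW) allocation $A$. The first step is to observe that $A$ is utilitarian optimal: matroid rank valuations are monotone (Proposition~\ref{prop:binmargprops}), so Lemma~\ref{lem:opt_po} gives that $A$ is Pareto optimal, and Theorem~\ref{thm:PO} then upgrades Pareto optimality to utilitarian optimality under this valuation class.

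The second step is to instantiate Theorem~\ref{thm:convex} with the particular symmetric strictly convex function $\Phi(z_1,\dots,z_n) = \sum_{i=1}^n z_i^2$, which is explicitly listed as an example of such a function just before that theorem. Since $A$ is leximin, the equivalence between statements (3) and (1) of Theorem~\ref{thm:convex} tells us that $A$ is a minimizer of $\Phi$ over all utilitarian optimal allocations; if instead $A$ is an MNW allocation, the equivalence between statements (4) and (1) yields the same conclusion.

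The third and final step is to invoke Corollary~\ref{cor:sumsquares}: by hypothesis $A$ is clean, by the first step it is utilitarian optimal, and by the second step it minimizes $\sum_{i \in N} v_i(A_i)^2$ among all utilitarian optimal allocations — precisely the hypotheses of that corollary — so $A$ is EF1.

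I do not expect a genuine obstacle, since all the substantive content is already carried out in Theorem~\ref{thm:PO}, Theorem~\ref{thm:convex}, and Corollary~\ref{cor:sumsquares}. The only point that requires care is that Corollary~\ref{cor:sumsquares} really does need the cleanness hypothesis (it is stated for clean utilitarian optimal minimizers of $\Phi$), which is why the present corollary is restricted to \emph{clean} leximin or MNW allocations rather than arbitrary ones.
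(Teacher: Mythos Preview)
Your proposal is correct and follows essentially the same approach as the paper: use Pareto optimality (Lemma~\ref{lem:opt_po}) together with Theorem~\ref{thm:PO} to get utilitarian optimality, then apply Theorem~\ref{thm:convex} with the sum-of-squares function, and finish with Corollary~\ref{cor:sumsquares}. Your explicit remark on why the cleanness hypothesis is needed is accurate and matches the paper's reasoning.
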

\begin{proof}
	Since both leximin and MNW allocations are %always 
	Pareto-optimal, they maximize the utilitarian social welfare,
	by Theorem \ref{thm:PO}. 
	By Theorem \ref{thm:convex} and the fact that the function $\Phi(A) \triangleq \sum_{i \in N}v_i(A_i)^2$ is a symmetric strictly convex function, any leximin or MNW allocation is a utilitarian optimal allocation that minimizes $\Phi(A)$ among all utilitarian optimal allocations; hence, if such an allocation is clean, it must be EF1 by Corollary \ref{cor:sumsquares}.
\end{proof}

\section{Assignment valuations with binary gains}\label{sec:boxs}
We now consider the special but practically important case when valuations come from maximum matchings. 
For this class of valuations, we show that invoking Theorem \ref{thm:PO}, one can find a leximin or MNW allocation in polynomial time, by a reduction to the network flow problem. 
%We note that the complexity of the problem remains open for general submodular valuations with binary marginal gains. 

\begin{theorem}\label{thm:leximin:poly}
For assignment valuations with binary marginal gains, one can find a leximin or MNW allocation in polynomial time. 
\end{theorem}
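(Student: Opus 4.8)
The plan is to reduce the problem to computing a maximum flow of minimum convex cost in an auxiliary network, and then invoke the characterization of Theorem~\ref{thm:convex}. By that theorem, for assignment valuations with binary marginal gains an allocation is leximin (equivalently, MNW) if and only if it is utilitarian optimal and minimizes the fixed symmetric strictly convex function $\Phi(A)=\sum_{h\in N}v_h(A_h)^2$ over all utilitarian optimal allocations; so it suffices to compute such a minimizer in polynomial time.

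To this end I would build the following network. Add a source $s$ and sink $t$. For each agent (group) $h\in N$ create a node $a_h$; for each member $i\in N_h$ and each item $o\in O$ create a node. Put an arc $(a_h,i)$ of capacity $1$ for every $i\in N_h$, an arc $(i,o)$ of capacity $1$ whenever $u_{i,o}=1$, an arc $(o,t)$ of capacity $1$ for every $o\in O$, and from $s$ to each $a_h$ a bundle of $|N_h|$ parallel unit-capacity arcs with costs $1,3,5,\dots,2|N_h|-1$; all other arcs have cost $0$. An integral $s$--$t$ flow corresponds exactly to a matching of items to members, i.e. to a clean allocation $A$ in which agent $h$ receives the items routed through its members; the flow across the $s$--$a_h$ arcs equals $|A_h|=v_h(A_h)$ (cleanness: each matched member gets a distinct approved item, so $v_h(A_h)\ge|A_h|$, and $v_h(A_h)\le|A_h|$ always, by Proposition~\ref{prop:binmargprops}). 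The flow value equals $\USW(A)$, so maximum flows correspond precisely to utilitarian optimal allocations. Moreover, since a minimum-cost flow saturates the cheapest available parallel $s$--$a_h$ arcs first, a flow with $v_h(A_h)=c_h$ incurs cost $\sum_{k=1}^{c_h}(2k-1)=c_h^2$ on the $s$--$a_h$ arcs, so its total cost is exactly $\Phi(A)$.

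Hence a maximum $s$--$t$ flow of minimum cost yields a utilitarian optimal allocation minimizing $\Phi$, which by Theorem~\ref{thm:convex} is simultaneously a leximin and an MNW allocation (and by Corollary~\ref{cor:sumsquares} also EF1, and clean by construction). On a network with integral capacities and with costs polynomially bounded in the input, such a min-cost maximum flow can be computed in polynomial time by standard min-cost flow algorithms; equivalently, the sought flow is precisely a \emph{balanced maximum flow} in the sense of \citet{Murota2018}, whose polynomial-time computability can be invoked directly and whose "balancedness'' is exactly the symmetric-convex-minimization property we need. The MNW refinement (first maximizing the set of positively-valued agents, then the product) is automatically handled, because Theorem~\ref{thm:convex} already equates leximin with MNW for this valuation class.

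The step I expect to be the main obstacle is making the network encoding airtight: verifying the bijection between integral $s$--$t$ flows and clean allocations so that ``maximum flow'' faithfully means ``utilitarian optimal'' and the $s$--$a_h$ flow faithfully tracks $v_h(A_h)$, and confirming that the parallel-arc gadget with odd costs makes a min-cost maximum flow genuinely minimize the strictly convex objective $\sum_h v_h(A_h)^2$ over utilitarian optimal allocations. Once that is in place, everything downstream — leximin $=$ MNW, EF1, and polynomial running time — follows immediately from Theorem~\ref{thm:convex}, Corollary~\ref{cor:sumsquares}, and standard combinatorial-optimization results.
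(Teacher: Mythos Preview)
Your proposal is correct and takes essentially the same approach as the paper: both reduce to a network-flow problem on the natural bipartite-matching network (source $\to$ group $\to$ member $\to$ item $\to$ sink with unit capacities), establish that integral flows correspond to clean allocations with flow value equal to $\USW$, and then invoke Theorem~\ref{thm:convex} to conclude leximin $=$ MNW. The only technical variation is that the paper directly appeals to the polynomial-time computability of an \emph{integral balanced flow} (lexicographically maximal source-outflow vector) due to \citet{Murota2019}, obtaining the leximin property from the definition of balancedness, whereas you encode the quadratic objective $\sum_h v_h(A_h)^2$ via a parallel-arc gadget with odd costs and solve a min-cost maximum flow, then apply Theorem~\ref{thm:convex} for both leximin and MNW; your approach is slightly more elementary since min-cost max flow is textbook, while the paper's is more direct for the leximin part.
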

\begin{proof}
The problem of finding a leximin allocation under the \boxs valuation class can be reduced to that of finding an integral balanced flow (or increasingly-maximal integer-valued flow) in a network, which has been recently shown to be polynomial-time solvable \cite{Murota2019}. 

Specifically, for a network $D=(V,A)$ with source $s$, sink $t$, and a capacity function $c:A \rightarrow \Z$, a {\em balanced flow} is a maximum integral feasible flow where the out-flow vector from the source $s$ to the adjacent vertices $h$ is lexicographically maximized among all maximum integral feasible flows; that is, the smallest flow-value on the edges $(s,h)$ is as large as possible, the second smallest flow-value on the edges $(s,h)$ is as large as possible, and so on. \citet{Murota2019} show that one can find a balanced flow in strongly polynomial time (see Section $7$ in \citet{Murota2019}).

Now, given an instance of assignment valuations with binary marginal gains, we build the following instance $(V,A)$ of a network flow problem. Let $N_h$ denote the set of members in each group $h$. We first create a source $s$ and a sink $t$. We create a vertex $h$ for each group $h$, a vertex $i$ for each member $i$ of some group, and a vertex $o$ for each item $o$. We construct the edges of the network as follows:  
\begin{itemize}
\item for each group $h$, create an edge $(s,h)$ with capacity $m$; and
\item for each group $h$ and member $i$ in group $h$, create an edge $(h,i)$ with unit capacity; and
\item for each member $i$ of some group and item $o$ for which $i$ has positive weight $u_{io}$ (i.e. $u_{io}=1$), create an edge $(i,o)$ with unit capacity; and
\item for each item $o$, create an edge $(o,t)$ with unit capacity. 
\end{itemize}
\begin{figure}[htb]
	\centering
	\begin{tikzpicture}[scale=0.9, transform shape, every node/.style={minimum size=6mm, inner sep=1pt}]
	\node[draw, circle](1) at (0,0) {$s$};
	\node at (2,2.6) {Groups};
	\node at (4,2.7) {Members};
	\node at (6,2.7) {Items};
	
	\node[draw, circle](2) at (2,1) {};
	\node[draw, circle](3) at (2,0) {$h$};
	\node[draw, circle](4) at (2,-1) {};
	\draw[->, >=latex] (1)--(2);
	\node at (0.95,0.66) {$m$};
	\draw[->, >=latex] (1)--(3);
	\node at (0.95,0.14) {$m$};
	\draw[->, >=latex] (1)--(4);
	\node at (0.95,-0.31) {$m$};
	
	\node[draw, circle](5) at (4,2) {};
	\node[draw, circle](6) at (4,1) {};
	\node[draw, circle](7) at (4,0) {$i$};
	\node[draw, circle](8) at (4,-1) {};
	\node[draw, circle](9) at (4,-2) {};
	
	\draw[->, >=latex] (2)--(5); \draw[->, >=latex] (2)--(6); \draw[->, >=latex] (3)--(7); \draw[->, >=latex](4)--(8); \draw[->, >=latex] (4)--(9);
	
	%\draw[gray] (4,1.5) ellipse (16pt and 28pt);
	%\draw[gray] (4,0) ellipse (16pt and 13pt);
	%\draw[gray] (4,-1.5) ellipse (16pt and 28pt);
	
	\node[draw, circle](10) at (6,2) {};
	\node[draw, circle](11) at (6,1) {};
	\node[draw, circle](12) at (6,0) {$j$};
	\node[draw, circle](13) at (6,-1) {};
	\node[draw, circle](14) at (6,-2) {};
	
	\draw[->, >=latex] (5)--(10); \draw[->, >=latex]  (5)--(11); \draw[->, >=latex]  (5)--(13);
	\draw[->, >=latex] (6)--(11);
	\draw[->, >=latex] (7)--(11); \draw[->, >=latex]  (7)--(14);
	\draw[->, >=latex] (8)--(13);
	\draw[->, >=latex] (9)--(12); \draw[->, >=latex]  (9)--(14);
	
	\node[draw, circle](15) at (8,0) {$t$};
	\draw[->, >=latex] (10)--(15); \draw[->, >=latex]  (11)--(15); \draw[->, >=latex]  (12)--(15); \draw[->, >=latex]  (13)--(15); \draw[->, >=latex]  (14)--(15);
	\end{tikzpicture}
	\caption{An illustrative network flow instance constructed in the proof of Theorem \ref{thm:leximin:poly}: each edge is either labeled with its capacity or has unit capacity. \label{fignet}}
\end{figure}
See Figure~\ref{fignet} %in the supplementary material 
for an illustration of the network. We will show that an integral balanced flow $f:A \rightarrow \Z$ of the constructed network corresponds to a leximin allocation. Consider an allocation $A^f$ where each group receives the items $o$ for which some member $i$ of the group has positive flow $f(i,o)>0$. 

It is easy to see that the allocation $A^f$ maximizes the utilitarian social welfare since the flow $f$ is a maximum integral feasible flow. Thus, by Theorem \ref{thm:PO}, $A^f$ has the same utilitarian social welfare as any leximin allocation. To see balancedness, observe that the amount of flow from the source $s$ to each group $h$ is the valuation of $h$ for bundle $A^f_h$, i.e., $f(s,h)=\sum_{i \in N_h}f(h,i)=v_h(A^f_h)$. Indeed if $v_h(A^f_h)>f(s,h)$, then it would contradict the optimality of the flow $f$; and if $v_h(A^f_h)<f(s,h)$, it would contradict the fact that $v_h(A^f_h)$ is the value of a maximum-size matching between $A^f_h$ and $N_h$. Thus, among all utilitarian optimal allocations, $A^f$ lexicographically maximizes the valuation of each group, and hence $A_f$ is a leximin allocation. By Theorem \ref{thm:convex}, the leximin allocation $A_f$ is also MNW.
\end{proof}
%\appendixproof{The network flow instance constructed in the proof of Theorem \ref{thm:leximin:poly}.}{
%}

In contrast with assignment valuations with binary marginal gains, the problem of computing a leximin or MNW allocation becomes intractable for weighted assignment valuations even when there are only two agents, the following theorem shows (the proof is in Appendix~\ref{app:proof_hardness:leximin}). 

\begin{theorem}\label{hardness:leximin}
For two agents with general assignment valuations, it is NP-hard to compute a leximin or MNW allocation.
\end{theorem}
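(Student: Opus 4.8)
The plan is to reduce from the NP-complete \textsc{Partition} problem: given positive integers $a_1,\dots,a_k$ with $\sum_{j=1}^{k}a_j = 2S$ and $k\ge 2$, decide whether some $I\subseteq[k]$ satisfies $\sum_{j\in I}a_j = S$. From such an instance I would build a fair-division instance with two agents (groups) $N=\{1,2\}$ and $k$ items $o_1,\dots,o_k$: each group $h$ has $k$ members, and the $j$-th member of group $h$ has weight $a_j$ for item $o_j$ and weight $0$ for every other item. Since the $j$-th member of a group is the only one interested in $o_j$, a maximum-weight matching of a bundle $S'$ to a group matches member $j$ to $o_j$ exactly when $o_j\in S'$, so $v_1(S')=v_2(S')=\sum_{o_j\in S'}a_j$; thus both agents have the same additive valuation, which is a special case of an assignment valuation, and the reduction is clearly polynomial.

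The next step is to read leximin and MNW allocations of this instance directly off the subset sums of $(a_j)_j$. Because the valuations are monotone and every item has strictly positive value to both agents, moving a withheld item into a bundle strictly increases that agent's utility; hence for both criteria there is an optimal allocation that is complete, and in any complete allocation $A$ we have $v_1(A_1)+v_2(A_2)=2S$ with $v_1(A_1)$ equal to some subset sum of $(a_j)_{j\in[k]}$. Let $s^{\star}=\max\{\,s : s \text{ is a subset sum of }(a_j)_{j}\text{ and } s\le S\,\}$. A short argument shows $\max_A\min\{v_1(A_1),v_2(A_2)\}=s^{\star}$, attained by a complete allocation, so the leximin value vector is $\score=(s^{\star},\,2S-s^{\star})$. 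For MNW, since $k\ge 2$ we can give each agent a positively valued item, so $N_{\max}=\{1,2\}$ and an MNW allocation maximises $v_1(A_1)v_2(A_2)=s(2S-s)=S^{2}-(S-s)^{2}$ over subset sums $s\in(0,2S)$, which is maximised at the subset sum $\hat s$ minimising $|S-s|$. Consequently the \textsc{Partition} instance is a ``yes''-instance iff $s^{\star}=S$ iff the leximin allocation has $\score=(S,S)$ iff $\hat s=S$ iff the MNW allocation has Nash welfare $S^{2}$; in a ``no''-instance we have $s^{\star}\le S-1$ and $\hat s\ne S$, so $\score\ne(S,S)$ and the maximum Nash welfare is strictly below $S^{2}$.

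Finally, a polynomial-time algorithm returning a leximin (respectively, MNW) allocation of the constructed instance would decide \textsc{Partition} in polynomial time, simply by testing whether the smaller realised valuation equals $S$ (respectively, whether the Nash welfare equals $S^{2}$); NP-hardness follows. I do not expect a genuine obstacle here — the construction is the canonical two-agent, numbers-as-weights encoding. The only point that needs care is turning the hardness of a search problem into that statement cleanly, i.e.\ justifying that complete allocations are without loss of generality optimal for both criteria and handling the $N_{\max}$-refinement in the definition of MNW; with that done, the subset-sum correspondence is routine. One could reduce equally well from \textsc{Subset Sum} or from makespan minimisation on two identical machines.
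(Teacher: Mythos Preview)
Your proposal is correct and follows essentially the same approach as the paper: a Turing reduction from \textsc{Partition} that builds two agents with identical additive valuations encoded as assignment valuations, then checks whether the returned leximin/MNW allocation splits the total evenly. The only cosmetic difference is the encoding of the OXS structure---the paper gives every member of a group weight $w_j$ for item $j$ (identical members, enough of them to saturate any bundle), whereas you give the $j$-th member weight $a_j$ for $o_j$ only; both yield the same additive valuation $v_h(S')=\sum_{o_j\in S'}a_j$, and your extra care with completeness and the $N_{\max}$ refinement is just added rigor.
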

%The proof is omitted owing to space constraints and deferred to a full version of the paper. To summarize, the reduction is similar to the hardness reduction for two agents with identical additive valuations \cite{Nguyen2013,Ramezani2010}: we give a Turing reduction from {\sc Partition}.
\appendixproof{Proof of Theorem \ref{hardness:leximin}}{\label{app:proof_hardness:leximin}
	\begin{stmnt*}
		For two agents with general assignment valuations, it is NP-hard to compute a leximin or MNW allocation.
	\end{stmnt*}
\begin{proof}
The reduction is similar to the hardness reduction for two agents with identical additive valuations \cite{Nguyen2013,Ramezani2010}. 
We give a Turing reduction from {\sc Partition}. Recall that an instance of {\sc Partition} is given by a set of positive integers $W=\{w_1,w_2,\ldots,w_m\}$; it is a `yes'-instance if and only if it can be partitioned into two subsets $S_1$ and $S_2$ of $W$ such that the sum of the numbers in $S_1$ equals the sum of the numbers in $S_2$.

Consider an instance of {\sc Partition} $W=\{w_1,w_2,\ldots,w_m\}$. We create $m$ items $1,2, \ldots,m$, two groups $1$ and $2$, and $m$ individuals for each group where every individual has a weight $w_j$ for item $j$. Observe that for each group, the value of each bundle $X$ is the sum $\sum_{w_j \in X}w_j$: the number of members in the group exceeds the number of items in $X$, and thus one can fully assign each item to each member of the group. 

Suppose we had an algorithm which finds a leximin allocation. Run the algorithm on the allocation problem constructed above to obtain a leximin allocation $A$. It can be easily verified that the instance of {\sc Partition} has a solution if and only if $v_1(A_1)=v_2(A_2)$. 
Similarly, suppose we had an algorithm which finds an MNW allocation, and run the algorithm to find an MNW allocation $A'$. Since the valuations are identical, the utilitarian social welfare of the MNW allocation is the sum $\sum_{w_j \in W}w_j$, which means that the product of the valuations is maximized when both groups have the same realized valuation. Thus, the instance of {\sc Partition} has a solution if and only if $v_1(A'_1)=v_2(A'_2)$.
\end{proof}
}

\section{Discussion}\label{sec:disc}
We studied allocations of indivisible goods under submodular valuations with binary marginal gains in terms of the interplay among envy, efficiency, and various welfare concepts. We showed that three seemingly disjoint outcomes --- minimizers of arbitrary symmetric strictly convex functions among utilitarian optimal allocations, the leximin allocation, and the MNW allocation --- coincide in this class of valuations. 
%In particular, Theorem~\ref{thm:convex} reduces the problem of finding a leximin/MNW allocation for \bsub valuations to the minimization of the sum of squared valuations subject to utilitarian optimality, which can be solved efficiently \emph{in practice} using standard solvers for convex programming. 
Since the class of matroid rank functions is rather broad, our results can be applied to settings where agents' valuations are induced by a matroid structure. Beyond the domains described in this work, these include several others. For example, {\em partition matroids} model instances where agents' have access to different item types, but can only hold a limited number of each type (their utility is the total number of items they hold); a variety of other domains, such as spanning trees, independent sets of vectors, coverage problems and more admit a matroid structure (see \citet{oxley2011matroid} for an overview). Indeed, a well-known result in combinatorial optimization states that {\em any} agent valuation structure where the greedy algorithm can be used to find the (weighted) optimal bundle, is induced by some matroid \cite[Theorem 1.8.5]{oxley2011matroid}.

We will conclude with additional implications of this work, some work in progress, and directions for further research.

\paragraph{Complete vs clean EF1, PO allocations} In Section~\ref{sec:utiloptef1}, we showed that focusing on \textit{clean} allocations helps us design an elegantly simple and polynomial-time algorithm (Algorithm~\ref{alg_bin}) that computes an EF1 and utilitarian optimal (but potentially \textit{incomplete}) allocation under \bsub valuations --- in other words, cleaning followed by further processing of a utilitarian optimal allocation is \emph{sufficient} for achieving the EF1+PO combination. It is still an interesting open problem whether cleaning (and hence withholding some items) is \textit{necessary} for achieving the desired fairness-efficiency combination. Intuitively, a complete, utilitarian optimal allocation may induce \emph{avoidable} envy levels among agents\footnote{The observation that retaining items with zero marginal gain in an agent's bundle can make other agents envious is also a barrier to extending our results to a setting without \textit{free disposal}, i.e. a setting where an agent incurs a cost for having an item removed from her bundle.} --- agent $j$'s bundle $A_j$ might include a subset $S$ that contributes nothing to the overall welfare, i.e. $v_j(A_j\setminus S)=v_j(A_j)$, but makes another agent $i$ envious of $j$ up to more than $1$ items, i.e. $v_i(A_i) < v_i(A_j \setminus \{o\})$ for every $o \in A_j$ but $v_i(A_i) \ge v_i(A_j \setminus (S \cup \{o'\}))$ for some $o' \in A_j \setminus S$. However, an instance that admits no complete, EF1, utilitarian optimal allocation (thus proving the necessity of cleaning in general) remains elusive. Please refer to Appendix~\ref{proof_prop:clean_size} for further comments on the interplay between cleanness and completeness.

\paragraph{More general valuation functions} 
An imperative line of future work is investigating which of our findings extend to more general valuation functions. There are several known extensions to matroid structures, with deep connections to submodular optimization \cite[Chapter 11]{oxley2011matroid}. Matroid rank functions are submodular functions with binary marginal gains; however, general submodular functions (i.e. those with non-negative real marginal gains) admit some matroid structure which may potentially be used to extend our results to more general settings.

An obvious generalization of the \bsub valuation function class is the class of submodular valuation functions with \emph{subjective} binary marginal gains, i.e. $\Delta_i(S;o) \in \{0,\lambda_i\}$ for some agent-specific constant $\lambda_i > 0$, for every $i \in N$. For this valuations class that we call $(0,\lambda_i)$-\textsc{SUB}, we can show that any clean, MNW allocation is still EF1 (clean bundles being defined the same way as for \bsub valuations) but the leximin and MNW allocations no longer coincide and leximin no longer implies EF1 (the details are in Appendix~\ref{sec:subjbin}). %(see Section~\ref{sec:subjbin} of the supplementary material).

For general assignment valuations (i.e. members have positive real weights for items), we have no theoretical guarantees yet. However, we ran experiments on a real-world data set, %(see Section~\ref{sec:assval} of the supplementary material) 
comparing the performance of a heuristic extension of Algorithm~\ref{alg_bin} (Section~\ref{sec:utiloptef1}) to real-valued individual-item utilities (weights) with \citet{lipton2004approximately}'s envy graph algorithm in terms of the number of items \emph{wasted} (left unassigned or assigned to individuals with zero utility for it although another agent has positive utility for the item). These experiments, described in detail in Appendix~\ref{sec:assval}, suggest that approximate envy-freeness can often be achieved in practice simultaneously with good efficiency guarantees even for this larger valuation class.

%An obvious generalization of the \bsub class gives us submodular valuations with \emph{subjective} binary valuations, i.e. $\Delta_i(S;o) \in \{0,\lambda_i\}$ for any bundle $S$ and item $o$ for some agent-specific positive constant $\lambda_i$ for each $i \in N$. Even for this class, the equivalence between utilitarian optimality and  Pareto-optimality no longer holds and the leximin allocation may not be EF1 (although the MNW allocation is still EF1). 

It is important to note that the class of rank functions of matroids is a subclass of the well-known \emph{gross substitutes} (GS) valuations \cite{gul1999grosseq,kelso1982grosssubs}. 
%the assignment function class includes additive (i.e. $v_i(S)=\sum_{o \in S} v_i(\{o\})$) and 
%unit-demand (i.e. $v_i(S)=\max_{o \in S} v_i(\{o\})$) valuations; 
%it is itself subsumed by 
% which also includes \bsub valuations as well as capacity-constrained additive valuations (i.e. $v_i(S)=\max_{S \subseteq O: |S| \le k_i} \sum_{o \in S} v_i(\{o\})$ for some positive integer $k_i \in [m]$). 
%Under additive valuations, strong results can be obtained for the special cases of two agents and multiple agents with identical valuations \cite{plaut2018almost}. 
A promising research direction is to investigate PO+EF1 existence for GS valuations.

\paragraph{Other fairness criteria} The fairness concept we consider here is (approximate) envy-freeness. An obvious next step is to explore other criteria such as \emph{proportionality} (each agent gets at least $\nicefrac{1}{n}$ of her valuation of the full collection of goods $O$), the \emph{maximin share guarantee} or MMS (each agent gets at least as much value as she would realize if allowed to partition $O$ completely among all agents knowing that she would receive her least favorite part), \emph{equitability} (all agents have equal realized valuations), etc. (see, e.g. \citet{caragiannis2016unreasonable,freeman2019equitable} and references therein for further details) %formal definitions and known results) 
for \bsub valuations. We present our results from a preliminary exploration of these questions in Appendix~\ref{sec:other_fairness}. %\todo{MC: Appendix~\ref{sec:other_fairness} is mostly new; it would be nice if you could take a look.} 
It is worthwhile to summarize here one of these results that extends a recent paper by \citet{freeman2019equitable}.
%Note that for any valuation function with binary marginal gains, EF1 trivially implies the stronger fairness concept of EFX, i.e. \emph{envy-freeness up to the least positively (marginally) valued item} \cite{caragiannis2016unreasonable}; hence, for \bsub valuations, the leximin allocation and any output of Algorithm~\ref{alg_bin} are all EFX.
%Since an exact MMS allocation is guaranteed to exist for binary additive valuations \cite{bouveret2016characterizing} and a leximin allocation satisfies the MMS guarantee for this class of valuations, our results (Corollary~\ref{cor:bin_mnw} and Theorem~\ref{thm:leximin:poly}) imply the existence and computational tractability of a PO, EF1, and MMS allocation for this subclass of \boxs valuations. 
%Since the EF1 property implies the MMS guarantee for binary additive valuations\footnote{Assume that agents have binary additive valuations. If under EF1 allocation $A$, the value of some agent $i$'s bundle is strictly lower than his maximin fair share $\alpha \in \Z$, there is another agent $j$ whose bundle has value strictly greater than $\alpha$ for $i$, which means that $i$'s envy towards $j$ is not bounded up to one item.}, our results (Corollary~\ref{cor:bin_mnw} and Theorem~\ref{thm:leximin:poly}) trivially imply the existence and computational tractability of a PO, EF1, and MMS allocation for this subclass of \boxs valuations. 
This paper shows that an allocation that is equitable up to one item or EQ1 (a relaxation of equitability in the same spirit as EF1) and PO may not exist even for binary additive valuations; however, for this valuation class,  it can be verified in polynomial time whether an EQ1, EF1 and PO allocation exists and, whenever it does exist, it can also be computed in polynomial time (for the time complexity result, they show that such an allocation is MNW). We can generalize this result to \boxs valuations: we first show that any EQ1 and PO allocation under the \bsub valuation class, if it exists, is leximin, then invoke Corollary~\ref{cor:bin_mnw} to conclude that it must be EF1, and finally Theorem~\ref{thm:leximin:poly} to establish its polynomial-time complexity for the \boxs class (the full proof is in Appendix~\ref{sec:approx_eq}). %(see Section~\ref{sec:approx_eq} of the supplementary material for details).

 \paragraph{Implications for diversity} Finally, the analysis of submodular valuations ties in with existing works on diversity in various fields from biology to machine learning (see, e.g. \citet{jost2006entropy,celis2016fair,celis2018fair}). A popular measurement for how diverse a solution is is to apply one of several concave functions called \emph{diversity indices} to the proportions of the different entities/attributes (with respect to which we wish to be diverse) in the solution, e.g. the Shannon entropy and the Gini-Simpson index: if we denote the maximum $\USW$ of one of the problem instances studied in this paper by $U^*$ and agent $i$'s realized valuation in a utilitarian optimal allocation as $u_i$, then the above two indices can be expressed as $-\sum_{i \in N} (\nicefrac{u_i}{U^*}) \ln (\nicefrac{u_i}{U^*})$ and $1-\sum_{i \in N} (\nicefrac{u_i}{U^*})^2$ respectively such that $\sum_{i \in N} u_i =U^*$. Thus, Theorem~\ref{thm:convex} also shows that, for \bsub valuations, the MNW or leximin principle maximizes among all utilitarian optimal allocations commonly used \emph{diversity indices} applied to shares of the agents in the optimal $\USW$. It will be interesting to explore potential connections of this interpretation to recent work on {\em soft} diversity framed as convex function optimization \cite{ahmed2017diverse}.
 
 %In Section~\ref{sec:contrib}, we talked about promoting diversity by imposing hard caps on an underlying assignment problem: 
 
%\boxs valuations with {\em budgets} (a capacity on each agent's bundle size) correspond nicely to the public housing allocation problem studied by \citet{benabbou2018diversity}. Adding these caps results in \bsub valuations. That several fairness/justice criteria still hold under item quotas bodes well for ensuring fair allocation of goods under hard diversity constraints (e.g. Singapore's ethnic integration policy in public housing). 

 %one employs {\em soft} diversity constraints, which can be phrased as convex optimization problems \cite{ahmed2017diverse}, over \bsub valuations, other justice criteria are immediately satisfied under utilitarian social welfare maximization. 

\begin{acks}
	Benabbou was supported by the ANR project 14-CE24-0007-01-Cocorico-CoDec, Chakraborty and Zick by a Singapore MOE grant (no. R-252-000-625-133) and a Singapore NRF Research Fellowship (no. R-252-000-750-733), and Ayumi Igarashi by the KAKENHI Grant-in-Aid for JSPS Fellows
	no. 18J00997 and JST, ACT-X. Most of this work was done when Chakraborty and Zick were employed at the National University of Singapore (NUS), and Igarashi was a research visitor at NUS, supported by Singapore MOE Grant R-252-000-625-133.
	The authors would like to thank Edith Elkind, Warut Suksompong, Dominik Peters, and Tushant Jha for valuable insights and feedback on earlier versions of the paper. Thanks are also due to the anonymous reviewers of GAIW 2020, the Harvard CRCS Workshop on AI for Social Good 2020, and SAGT 2020 for their feedback on earlier versions of this paper.
\end{acks}
%%%%%%%%%%%%%%%%%%%%%%%%%%%%%%%%%%%%%%%%%%%%%%%%%%%%%%%%%%%%%%%%%%%%%%%%%%%%%%%%%%%%%%%%%%%%%%%%%%%%%%%%%
%% bibliography: see CFP for number of permitted pages
%\clearpage
\bibliographystyle{ACM-Reference-Format}  % do not change this line!
\bibliography{abb,TEF1refs}  % put name of your .bib file here

\clearpage
\appendix

\section*{Appendices}

\section{Omitted proofs, examples, and remarks}\label{app:egs}
\appendixProofText

\section{Submodularity with subjective binary gains}\label{sec:subjbin}
An obvious generalization of the \bsub valuation function class is the class of submodular valuation functions with \emph{subjective} binary marginal gains: agent $i$'s bundle-valuation function $v_i(\cdot)$ is said to have subjective binary marginal gains if $\Delta_i(S;o) \in \{0,\lambda_i\}$ for some agent-specific constant $\lambda_i > 0$, for every $i \in N$. We define clean bundles and clean allocations for this function class exactly as we did for \bsub valuations in Section~\ref{sec:prelims}.

Understandably, most of the properties of allocations under \bsub valuations do not extend to this more general setting. It is obvious that Pareto optimality does not imply utilitarian optimality (e.g. consider an instance with two agents and one item which the agents value at $1$ and $2$ respectively: assigning the item to agent $1$ is PO but not utilitarian optimal). Moreover, the leximin allocation may not be EF1, as shown by the following example where both agents have additive valuations.
\begin{example}\label{ex:lmin_nef1}
	Suppose $N=[2]$; $O=\{o_1,o_2,o_3,o_4\}$; the valuations are additive with $v_1(\{o_4\})=0$, $v_1(\{o\})=1$ $\forall o \in O \setminus \{o_1\}$, and $v_2(\{o\})=3$ $\forall o \in O$. It is straightforward to check that the unique leximin allocation is $A_1=\{o_1,o_2,o_3\}$, $A_2=\{o_4\}$. Under this allocation, %$v_1(A_2)=0<3=v_1(A_1)$, but
	$v_2(A_1\setminus\{o\})=6>3=v_2(A_2)$ for every $o \in A_1$ --- in fact, at least two (any two) items must be removed from $A_1$ for agent $2$ to stop envying agent $1$. \egend
\end{example}
Note another difference of this valuation class from the class of \bsub valuations that is also evidenced by Example~\ref{ex:lmin_nef1}: the leximin and MNW allocations may not coincide. In this example, any allocation $A$ that gives two of the items $\{o_2,o_3,o_4\}$ to agent $1$ and the rest to agent $2$ is MNW, with $v_1(A_1)=2$ and $v_2(A_2)=6$, so that $\NW(A)=12$; such an allocation is also EF1 (in fact, envy-free) since $v_1(A_2)=1<2=v_1(A_1)$ and $v_2(A_1)=6=v_2(A_2)$. This is not an accident, as the following theorem shows.
\begin{theorem}\label{thm:subjbin_mnw}
	For agents having submodular valuation functions with subjective binary marginal gains, any clean, MNW allocation is EF1.
\end{theorem}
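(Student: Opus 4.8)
The plan is to mimic the structure of the proof that any clean MNW allocation is EF1 under matroid rank valuations (Corollary~\ref{cor:bin_mnw}), but working with the rescaled valuations. First I would observe that for agent $i$ with $\Delta_i(S;o)\in\{0,\lambda_i\}$, the function $\hat v_i \triangleq v_i/\lambda_i$ has binary marginal gains, hence is a matroid rank function; moreover a bundle is clean for $v_i$ if and only if it is clean for $\hat v_i$, and by Proposition~\ref{prop:clean_size} applied to $\hat v_i$, a clean allocation $A$ satisfies $v_i(A_i)=\lambda_i|A_i|$ for every $i$. Thus the clean bundles of agent $i$ still form the independent sets of a matroid, and the analogue of Lemma~\ref{lem:envy_size} goes through: if agent $i$ envies $j$ up to more than one item under a clean allocation $A$, then for every $o\in A_j$ we have $v_i(A_i)<v_i(A_j\setminus\{o\})=\lambda_i\,\hat v_i(A_j\setminus\{o\})\le\lambda_i(|A_j|-1)$, while $v_i(A_j)=\lambda_i\hat v_i(A_j)\le\lambda_i|A_j|$; hence $v_i(A_j)-v_i(A_i)>\lambda_i$, i.e. the envy strictly exceeds one ``unit'' $\lambda_i$ for agent $i$.

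Next I would run the transfer argument. Suppose, towards a contradiction, that a clean MNW allocation $A$ is not EF1: some agent $i$ envies $j$ up to more than one item. By Lemma~\ref{lem:revoc_realloc} (which holds for all monotone submodular valuations) there is an item $o\in A_j$ with $\Delta_i(A_i;o)=\lambda_i>0$; move $o$ from $j$ to $i$, obtaining $A'$. Cleanness is preserved exactly as in the proof of Theorem~\ref{thm:bin_all} — the moved item has positive marginal gain for $i$ and its removal drops $j$'s realized value by exactly $\lambda_j$, leaving $A'_j$ clean. Now both $i$ and $j$ must lie in $N_{\max}$: for $j$ this is because $v_j(A'_j)=v_j(A_j)-\lambda_j>0$ would follow from $A_j$ having at least two items (which it does, since $i$ envies up to more than one item), and if instead $v_j(A'_j)=0$ then $v_j(A_j)=\lambda_j$, but then $A_j$ is a single item and the ``more than one item'' hypothesis collapses; for $i$, it already had positive value or now has value $\lambda_i>0$, and if $i\notin N_{\max}$ before the move then $i$ getting positive value while $j$ retains positive value contradicts maximality of $|N_{\max}|$. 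So I may compare the Nash welfare over the fixed set $N_{\max}$.

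It remains to show $\NW(A')>\NW(A)$, the desired contradiction. Write $a=v_i(A_i)$, $b=v_j(A_j)$; then $v_i(A'_i)=a+\lambda_i$, $v_j(A'_j)=b-\lambda_j$, and all other realized valuations are unchanged. The ratio of the two products is $\tfrac{(a+\lambda_i)(b-\lambda_j)}{a\,b}$, so it suffices to prove $(a+\lambda_i)(b-\lambda_j)>ab$, i.e. $\lambda_i b-\lambda_j a-\lambda_i\lambda_j>0$. Since $A$ is clean we have $a=\lambda_i|A_i|$ and $b=\lambda_j|A_j|$; substituting, the inequality becomes $\lambda_i\lambda_j\bigl(|A_j|-|A_i|-1\bigr)>0$, i.e. $|A_j|\ge|A_i|+2$. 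This is exactly the content of the rescaled Lemma~\ref{lem:envy_size}: $v_j(A_j)\ge v_i(A_i)+2$ there reads, after dividing by the respective $\lambda$'s in terms of sizes, as $|A_j|=\hat v_j(A_j)>\hat v_i(A_i)+1=|A_i|+1$, hence $|A_j|\ge|A_i|+2$ since sizes are integers. I expect the main obstacle to be the bookkeeping around $N_{\max}$ — making sure the MNW comparison is legitimate, i.e. that neither the donor nor the recipient is pushed to (or pulled from) zero value and that the optimal size $|N_{\max}|$ is not altered by the transfer; once that is nailed down, the strict increase of the product over a fixed agent set is the short computation above. A minor subtlety to flag is that the argument uses submodularity of each $v_i$ (equivalently of $\hat v_i$) only through Lemma~\ref{lem:revoc_realloc} and the matroid structure of clean bundles, both of which hold verbatim for the subjective-binary class.
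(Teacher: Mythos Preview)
Your proposal is correct and follows essentially the same route as the paper's proof: both arguments establish that cleanness gives $v_i(A_i)=\lambda_i|A_i|$, that envy up to more than one item forces $|A_j|\ge |A_i|+2$, and then show that the single-item transfer (supplied by Lemma~\ref{lem:revoc_realloc}) either strictly increases the product over $N_{\max}$ or enlarges $N_{\max}$, in each case contradicting MNW optimality. Your rescaling $\hat v_i=v_i/\lambda_i$ is a convenient way to import Propositions~\ref{prop:binmargprops}--\ref{prop:clean_size} and Lemma~\ref{lem:envy_size} rather than reproving them, but the substance is identical; note also that preservation of cleanness for $A'$ is not actually needed---you only use cleanness of the original $A$ to get $v_j(A'_j)=v_j(A_j)-\lambda_j$ and the size bound.
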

Since our valuation functions are still submodular, the transferability property (Lemma~\ref{lem:revoc_realloc}) still holds. Two other components of the proof of Theorem~\ref{thm:subjbin_mnw} are natural extensions of Propositions~\ref{prop:clean_size} and Lemma~\ref{lem:envy_size} --- Proposition~\ref{prop:subjbin_clean_size} and Lemma~\ref{lem:subjbin_envy_size} below, respectively:
\begin{proposition}\label{prop:subjbin_clean_size}
	For submodular valuations with subjective binary marginal gains defined by agent-specific positive constants $\lambda_i$ $\forall i \in N$, $A$ is a clean allocation if and only if $v_i(A_i)=\lambda_i|A_i|$ for each $i \in N$.
\end{proposition}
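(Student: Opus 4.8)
The plan is to mimic the proofs of Proposition~\ref{prop:binmargprops} and Proposition~\ref{prop:clean_size}, carrying the agent-specific scaling factor $\lambda_i$ through the telescoping arguments. First I would record the analogue of Proposition~\ref{prop:binmargprops}: for a submodular $v_i$ whose marginal gains lie in $\{0,\lambda_i\}$, the function $v_i$ is monotone and $v_i(S)\le \lambda_i|S|$ for every bundle $S$. This follows exactly as before: writing $S=\{o_1,\dots,o_r\}$, $S_0=\emptyset$, and $S_t=\{o_1,\dots,o_t\}$, the telescoping identity $v_i(S)=\sum_{t=1}^r\Delta_i(S_{t-1};o_t)$ together with $0\le\Delta_i(S_{t-1};o_t)\le\lambda_i$ yields both claims.

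For the ``if'' direction I would argue by contraposition. If $S$ is not clean for $i$, then there is some $o\in S$ with $\Delta_i(S\setminus\{o\};o)=0$; hence $v_i(S)=v_i(S\setminus\{o\})\le \lambda_i|S\setminus\{o\}|=\lambda_i(|S|-1)<\lambda_i|S|$ by the bound just established. So $v_i(S)=\lambda_i|S|$ forces $S$ to be clean for $i$.

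For the ``only if'' direction, suppose $S=\{o_1,\dots,o_r\}$ is clean for $i$, so by definition $\Delta_i(S\setminus\{o_t\};o_t)\neq 0$, which means $\Delta_i(S\setminus\{o_t\};o_t)=\lambda_i$ for every $t\in[r]$. Since $S_{t-1}\subseteq S\setminus\{o_t\}$, submodularity gives $\Delta_i(S_{t-1};o_t)\ge \Delta_i(S\setminus\{o_t\};o_t)=\lambda_i$; as the marginal gains lie in $\{0,\lambda_i\}$, this forces $\Delta_i(S_{t-1};o_t)=\lambda_i$ for every $t$. Summing the telescoping series gives $v_i(S)=\sum_{t=1}^r\Delta_i(S_{t-1};o_t)=r\lambda_i=\lambda_i|S|$. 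Applying this to each allocated bundle $A_i$ of $A$ establishes the proposition.

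I do not expect any genuine obstacle here: the argument is a routine re-run of the matroid rank case, the only bookkeeping subtlety being to keep track of the constant $\lambda_i$ and to use that the marginal-gain set is $\{0,\lambda_i\}$ (rather than $\{0,1\}$) when collapsing the submodularity inequality $\Delta_i(S_{t-1};o_t)\ge\lambda_i$ to an equality. The same reasoning that worked for binary marginal gains goes through verbatim with $1$ replaced by $\lambda_i$.
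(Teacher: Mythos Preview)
Your proposal is correct and follows essentially the same approach as the paper's own proof: both establish the bound $v_i(S)\le\lambda_i|S|$ via telescoping, then handle the ``only if'' direction identically (cleanness plus submodularity forces each telescoping marginal to equal $\lambda_i$). The only cosmetic difference is that for the ``if'' direction you argue by contraposition whereas the paper computes directly that $\Delta_i(A_i\setminus\{o\};o)\ge\lambda_i|A_i|-\lambda_i(|A_i|-1)=\lambda_i>0$; these are two sides of the same inequality and neither buys anything over the other.
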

\begin{proof}
	Consider an arbitrary bundle $S \subseteq O$ such that $S=\{o_1,o_2, \dots, o_r\}$ for some $r \in [m]$ w.l.o.g. Let $S_0=\emptyset$ and $S_t=S_{t-1} \cup \{o_t\}$ for every $t \in [r]$. Then, an arbitrary agent $i$'s valuation of bundle $S$ under marginal gains in $\{0,\lambda_i\}$ is 
	\begin{align}
	 v_i(S)=\sum_{t = 1}^r \Delta_i (S_{t-1};o_t) \le \sum_{t = 1}^r \lambda_i = \lambda_i r = \lambda_i |S|.\label{ineq_val_size}
	 \end{align}
	Now, if agent $i$'s allocated bundle under an allocation $A$ has a valuation $v_i(A_i) = \lambda_i|A_i|$, then her marginal gain for any item in $o \in A_i$ is given by
	\begin{align*}
	v_i(A_i) - v_i(A_i \setminus \{o\})&=\lambda_i|A_i|-v_i(A_i \setminus \{o\})\\
	&\ge \lambda_i|A_i| - \lambda_i(|A_i|-1)\\
	&=\lambda_i
	>0,
	\end{align*} 
	where the first inequality follows from Inequality~\eqref{ineq_val_size} and the fact that $|A_i \setminus \{o\}|=|A_i|-1$. This means that the bundle $A_i$ is clean and, since this holds for every $i$, the allocation is clean. This completes the proof of the ``if" part.
	
	If allocation $A$ is clean, then we must have $\Delta_i(A_i\setminus \{o\}; o) > 0$ for every $o \in A_i$ for every $i \in N$. Let us define an arbitrary agent $i$'s bundle $A_i$  as $S$ above, so that $|A_i|=r$. Then, since $S_{t-1} \subseteq A_i\setminus \{o_t\}$ for every $t \in [r]$, submodularity dictates that 
	\begin{align*}
	\Delta_i (S_{t-1};o_t)
	\ge \Delta_i (A_i\setminus \{o_t\};o_t)>0 \quad \forall t \in [r].
	\end{align*} 
	Since $\Delta_i (S_{t-1};o_t) \in \{0,\lambda_i\}$ with $\lambda_i > 0$, the above inequality implies that $\Delta_i (S_{t-1};o_t) = \lambda_i$ $\forall t \in [r]$. Hence,
	\[v_i(A_i)=\sum_{t = 1}^r \Delta_i (S_{t-1};o_t) = \sum_{t = 1}^r \lambda_i = \lambda_i r = \lambda_i |A_i|.\]
	This completes the proof of the ``only if" part.
\end{proof}
\begin{lemma}\label{lem:subjbin_envy_size}
	For submodular functions with subjective binary marginal gains, if agent $i$ envies agent $j$ up to more than $1$ item under clean allocation $A$, then $|A_{j}| \ge |A_i|+2$.
\end{lemma}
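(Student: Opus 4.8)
The plan is to mimic the proof of Lemma~\ref{lem:envy_size} almost verbatim, replacing the bound $v_i(S)\le|S|$ valid for matroid rank valuations by the bound $v_i(S)\le\lambda_i|S|$ that holds for \emph{every} bundle $S$ under subjective binary marginal gains. Note that the latter bound is exactly Inequality~\eqref{ineq_val_size} established inside the proof of Proposition~\ref{prop:subjbin_clean_size}, and crucially it does not require $S$ to be clean for anybody.

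First I would unpack the hypothesis: ``$i$ envies $j$ up to more than $1$ item under $A$'' means $A_j\neq\emptyset$ and $v_i(A_i)<v_i(A_j\setminus\{o\})$ for every $o\in A_j$; fix one such $o$. Applying Inequality~\eqref{ineq_val_size} to agent $i$ and the bundle $A_j\setminus\{o\}$ gives $v_i(A_j\setminus\{o\})\le\lambda_i\,|A_j\setminus\{o\}|=\lambda_i(|A_j|-1)$. On the other hand, since $A$ is clean, in particular $A_i$ is clean for $i$, so Proposition~\ref{prop:subjbin_clean_size} yields $v_i(A_i)=\lambda_i|A_i|$.

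Chaining these, $\lambda_i|A_i|=v_i(A_i)<v_i(A_j\setminus\{o\})\le\lambda_i(|A_j|-1)$. Dividing through by $\lambda_i>0$ gives $|A_i|<|A_j|-1$, and since bundle sizes are integers this is equivalent to $|A_j|\ge|A_i|+2$, as required. There is essentially no obstacle here; the only subtlety worth flagging is that the upper bound on $v_i$ of a sub-bundle of $A_j$ must come from the generic inequality $v_i(S)\le\lambda_i|S|$ rather than from cleanness of $A_j$ (which is clean for $j$, not necessarily for $i$), whereas it is cleanness of the envious agent's own bundle $A_i$ that turns the strict valuation inequality into the desired statement about cardinalities.
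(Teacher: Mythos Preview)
Your proof is correct and essentially identical to the paper's own argument: both invoke Inequality~\eqref{ineq_val_size} to bound $v_i(A_j\setminus\{o\})\le\lambda_i(|A_j|-1)$, use cleanness of $A_i$ via Proposition~\ref{prop:subjbin_clean_size} to write $v_i(A_i)=\lambda_i|A_i|$, and then divide by $\lambda_i>0$ and appeal to integrality. Your remark that the upper bound on $v_i(A_j\setminus\{o\})$ comes from the generic inequality rather than from cleanness of $A_j$ for $i$ is a helpful clarification that the paper leaves implicit.
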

\begin{proof}
	Since $i$ envies $j$ under $A$ up to more than $1$ item, we must have $A_j \neq \emptyset$ and  $v_i(A_i) < v_i(A_{j} \setminus \{o\})$ for every $o \in A_j$. Consider one such $o$.
	From Inequality~\eqref{ineq_val_size} in the proof of Proposition~\ref{prop:subjbin_clean_size}, $v_i(A_j \setminus \{o\}) \leq \lambda_i|A_j \setminus \{o\}|= \lambda_i(|A_j| - 1)$.  
	Since $A$ is clean, $v_i(A_i)=\lambda_i|A_i|$. Combining these, we get 
	\[
	\lambda_i|A_i|=v_i(A_i) < v_i(A_j \setminus \{o\}) \leq \lambda_i(|A_j| -1).
	\] 
	Since $\lambda_i > 0$, we have $|A_i| < |A_j| -1$, i.e. $|A_i| \le |A_j| -2$ because $|A_i|$ and $|A_j|$ are integers. 
\end{proof}
We are now ready to prove Theorem~\ref{thm:subjbin_mnw}.
\begin{proof}[Proof of Theorem~\ref{thm:subjbin_mnw}]
		Our proof non-trivially extends that of Theorem 3.2 of \citet{caragiannis2016unreasonable}. We will first address the case when it is possible to allocate items in such a way that each agent has a positive realized valuation for its bundle, i.e. $N_{\max} = N$ in the definition of an MNW allocation, and then tackle the scenario $N_{\max} \subsetneq N$. 
		
		Consider a pair of agents $1,2 \in N$ w.l.o.g. such that $1$ envies $2$ up to two or more items, if possible, under an MNW allocation $A$. Since every agent has a positive realized valuation under $A$, we have $v_i(A_i)=\lambda_i |A_i| > 0$, i.e. $|A_i| > 0$ for each $i \in \{1,2\}$. From Lemma~\ref{lem:revoc_realloc}, we know that there is an item in $A_2$ for which agent $1$ has positive marginal utility -- consider any one such item $o \in A_2$. Thus, $\Delta_1(A_1;o) > 0$, i.e. $\Delta_1(A_1;o) = \lambda_1$; also, since $A_2$ is a clean bundle, $\Delta_2(A_2 \setminus \{o\};o) > 0$, i.e. $\Delta_2(A_2 \setminus \{o\};o) = \lambda_2$.
		
		Let us convert $A$ to a new allocation $A'$ by only transferring this item $o$ from agent $2$ to agent $1$.  Hence, $v_1(A'_1)=v_1(A_1)+\Delta_1(A_1;o)=v_1(A_1)+\lambda_1$, $v_2(A'_2)=v_2(A_2)-\Delta_2(A_2 \setminus \{o\};o)=v_2(A_2)-\lambda_2$, $v_i(A'_i) = v_i(A_i)$ for each $i \in N \setminus \{1,2\}$. $\NW(A)$ is positive since $A$ is MNW and $N_{\max} = N$. Hence,
		\begin{align}
		\frac{\NW(A')}{\NW(A)}
		&= \left[\frac{v_1(A_1)+\lambda_1}{v_1(A_1)}\right]\left[\frac{v_2(A_2)-\lambda_2}{v_2(A_2)}\right]\notag \\
		&= \left[1+\frac{\lambda_1}{v_1(A_1)}\right]\left[1-\frac{\lambda_2}{v_2(A_2))}\right]\notag \\
		&=\left[1+\frac{\lambda_1}{\lambda_1|A_1|}\right]\left[1-\frac{\lambda_2}{\lambda_2 |A_2|}\right]\notag \\
		&=\left[1+\frac{1}{|A_1|}\right]\left[1-\frac{1}{ |A_2|}\right]\notag \\
		&=1+\frac{|A_2| - |A_1| - 1}{|A_1||A_2|}, \notag \\
		&\ge 1+\frac{(|A_1|+2) - |A_1| - 1}{|A_1||A_2|}, \notag \\
		&\ge 1+\frac{1}{|A_1||A_2|},  \notag \\
		&> 1. \notag
		\end{align}
		Here, the third equality comes from  Proposition~\ref{prop:subjbin_clean_size} since $A$ is clean, and the first inequality from Lemma~\ref{lem:subjbin_envy_size} due to our assumption.
		But $\NW(A') > \NW(A)$ contradicts the optimality of $A$, implying that any agent can envy another up to at most $1$ item under $A$. 
		
		 This completes the proof for the $N_{\max}=N$ case. The rest of the proof mirrors the corresponding part of the proof of \citet{caragiannis2016unreasonable}'s Theorem 3.2. If $N_{\max} \subsetneq N$, it is easy to see that there can be no envy towards any $i \not\in N_{\max}$: this is because we must have $v_i(A_i)=0$ for any such $i$ from the definition of $N_{\max}$, which in turn implies that $A_i = \emptyset$ since $A$ is clean; hence, $v_j(A_i)=0$ for every $j \in N$. Also, for any $i,j \in N_{\max}$, we can show exactly as in the proof for the $N_{\max}=N$ case above that there cannot be envy up to more than one item between them, since $A$ maximizes the Nash welfare over this subset of agents $N_{\max}$. Suppose for contradiction that an agent $i \in N \backslash N_{\max}$ envies some $j \in N_{\max}$ up to more than one item under $A$. Then, from Lemma~\ref{lem:revoc_realloc}, there is one item $o_1 \in A_j$ w.l.o.g. such that $v_i(\{o_1\})=\Delta_i(\emptyset;o_1)=\Delta_i(A_i;o_1)>0$. Moreover, since $A$ is clean, 
		 \begin{align*}
		 v_j(A_j \setminus \{o_1\})&=v_j(A_j) - \Delta_j (A_j \setminus \{o_1\};o_1)\\
		 &= \lambda_j |A_j| -\lambda_j\\
		 &= \lambda_j (|A_j|-1)\\
		 &\ge \lambda_j (|A_i|+1)\\
		 &= \lambda_j > 0,
		 \end{align*} 
		 where the first inequality comes from Lemma~\ref{lem:subjbin_envy_size}. Thus, if we  transfer $o_1$ from $j$ to $i$ and leave all other bundles unchanged, then every agent in $N_{\max} \cup \{i\}$ will have a positive valuation under the new allocation. This contradicts the maximality of $N_{\max}$. Hence, any $i \in N\backslash N_{\max}$ must be envy-free up to one item towards any $j \in N_{\max}$.
\end{proof}

\section{General assignment valuations}\label{sec:assval}
In this section, we address the fair and efficient allocation of items to agents who have general assignment or OXS valuations, as defined in Section~\ref{subsec:defs}. Recall that an agent with such a valuation function is equivalent to a group with multiple members each having an arbitrary non-negative weight for each item. As such, we will henceforth use the terms ``group" and ``agent" interchangeably.

We know that, for arbitrary non-negative monotone valuations, the classic \textit{envy graph algorithm} due to \citet{lipton2004approximately} produces a complete, EF1 allocation that does not, however, come with any efficiency guarantee (except completeness, of course). The trick is to iterate over the items and allocate each to an agent that is currently not envied by any other agent (the existence of such an unenvied agent can be guaranteed by \textit{de-cycling}, if necessary, the graph induced by a directed edge from every envious agent to every agent that it envies: see \citet{lipton2004approximately} for details).

 \citet{benabbou2019fairness} focus on fair allocation to \textit{types} that are, in fact, agents/groups with OXS valuations; they use a natural extension of this procedure that they denote by Algorithm \textbf{H}. In an iteration of Algorithm \textbf{H}, we do not give an arbitrary unallocated item to an arbitrary unenvied agent; instead, we find an item-agent pair having the maximum marginal utility among all currently unenvied agents and all unallocated items (breaking further ties uniformly at random, say), and allocate that item to that agent. Although this modification should, intuitively, improve efficiency, \citet{benabbou2019fairness} provide no formal guarantee in this regard; they evaluate the performance of Algorithm \textbf{H} in experiments where all agents have OXS valuations in terms of \textit{waste} which they define as follows: under a complete allocation $A$, an item $o$ is said to be \textit{wasted} if it has positive marginal utility for some group $h$ under $A$ (i.e. $v_h(A_h \cup \{o\})>v_h(A_h)$) but is allocated to another group $h'$ (i.e. $o \in A_{h'}$) where it is either unassigned or assigned to a member $i \in N_{h'}$ with zero weight for it (i.e. $u_{i,o}=0$), under the particular optimal matching of $A_{h'}$ to $N_{h'}$. The waste of a run of Algorithm \textbf{H} is defined as the percentage of the total number of items that are wasted under the complete allocation produced by Algorithm \textbf{H}.

Here, we ask whether the concept of \textit{envy-induced transfers} ($\EIT$) presented in Algorithm~\ref{alg_bin} for matroid rank valuations (Section~\ref{sec:utiloptef1}) can be used to compute fair and efficient allocations (perhaps in some approximate sense) under more general monotone submodular valuation functions. This is motivated in part by the fact that the transferability property (Lemma~\ref{lem:revoc_realloc}), on which the $\EIT$ concept relies, characterizes any monotone submodular function and not just matroid rank valuations. In Algorithm \ref{alg_gen}, we delineate our work in progress in this vein: a heuristic scheme that extends Algorithm~\ref{alg_bin} to general OXS valuations. 

\begin{algorithm} \small
	\DontPrintSemicolon
	\caption{\small Envy-Induced Transfers for general OXS valuations\label{alg_gen}}
	Compute a clean, utilitarian optimal allocation.\\						
	\textbf{/*Envy-Induced Transfers ($\EIT$)*/}\\
	\While{$\exists i,j \in N$ such that $i$ envies $j$ up to more than $1$ item}
	{
		Pick $i$, $j$, $o$ maximizing $\Delta_i(A_i;o) + \Delta(A_j\setminus \{o\};o)$ over all $i,j \in N$ and all $o \in O$ such that $i$ envies $j$ more than $1$ item and $\Delta_i(A_i;o)>0$.\\ 		
		$A_j \leftarrow A_j \backslash \{o\}$; $A_i \leftarrow A_i \cup \{o\}$.\\
		\If{$\exists o \in A_0$ such that $\Delta_j(A_j;o)>0$}
		{Pick $o \in A_0$ that maximizes $\Delta_j(A_j;o)$.\\ $A_j \leftarrow A_j \cup \{o\}$.}
		\If{$\exists o^* \in A_i$ that is unused}
		{$A_i \leftarrow A_i \backslash \{o^*\}$; revoked = \textbf{true}.\\
			\While{revoked \!\! = \!\! \textbf{true} \textbf{and} $\exists k$ s.t. $\Delta(A_k;o^*)\! >\! 0$}{
				Allocate $o^*$ to agent $k$ maximizing $\Delta(A_k;o^*)$.\\
				\lIf{$\exists o \in A_k$ that is unused}{\\
					\hspace{0.5cm}$A_k \leftarrow A_k \backslash \{o\}$; $o^* \leftarrow o$.
				} 
				\lElse{revoked = \textbf{false}.}
			}
			\lIf{revoked = \textbf{true}}{$A_0 \leftarrow A_0 \cup \{o^*\}$.}
		}
	}
	%	}			
\end{algorithm}

Algorithm \ref{alg_gen} retains the general principle of starting with a(n arbitrary) clean, utilitarian optimal allocation\footnote{Maximizing the utilitarian social welfare is NP-hard when agents have general monotone submodular valuations but can be accomplished in polynomial time under the subclass gross substitutes valuations, assuming oracle access to each valuation function \cite{lehmann2006combinatorial}. In particular, under OXS valuations (assuming that such a valuation function is specified in terms of the weights of each member of the group for all items), computing a utilitarian optimal allocation reduces to the polynomial-time solvable \textit{assignment problem} or maximum sum-of-weights matching on a bipartite graph \cite{munkres1957algo}; the result is automatically clean if we make sure that no item is assigned to an individual with zero weight for it.} and iteratively eliminating envy by transferring an item from an envied bundle to an envious agent. For matroid rank valuations, the ``donor" and the recipient of the transferred item have their valuations decreased and increased respectively by exactly $1$ for any envy-induced transfer; this is no longer the case when we remove the binary marginal utilities restriction. Hence, such a transfer does not, in general, keep the utilitarian social welfare unchanged; %and it may not even change monotonically with successive transfers --- 
the welfare is only constrained to never exceed its starting (optimal) value computed in line 1. As an approach to minimizing the loss in welfare/efficiency due to such transfers, we employ various heuristics in Algorithm~\ref{alg_gen}:
\begin{itemize}
	\item First, in each $\EIT$ step, we transfer the item that induces the minimal decrease in --- or, equivalently, the maximal increase --- in the welfare (see lines 3-6). 
	\item Next, as the donor agent loses one of its items, it may develop a positive marginal utility for a currently withheld item; in that case, the item in $A_0$ for which it has maximal marginal utility is given to it (see lines 6-8).
	\item Finally, if an agent (group) $i$ acquires a new item $o$ due to an envy-induced transfer, at most one of its previous items, say $o^*$, may become \textit{unused}, i.e. it is no longer assigned to a member of the group under the new matching. This happens, for example, if $i$'s positive marginal utility for $o$ with its previous bundle $A_i$ was due to the fact that the member who was assigned item $o^*$ has a higher weight for $o$ than for $o^*$ and no other member prefers $o^*$ to its assigned item. In such a case, item $o^*$ is revoked from agent $i$ and allocated to the agent with maximal and strictly positive marginal utility for it (see lines 10-13). If this creates another unused item, we repeat the process until there are no unused items or the unused item has zero marginal utility for all agents -- in the latter case, the unused item is added to the withheld set (see lines 14-18).
\end{itemize}

We do not yet have theoretical guarantees for Algorithm~\ref{alg_gen}; but, if the $\EIT$ subroutine terminates, then the final allocation is EF1 and has zero waste (as defined above) by construction. To estimate the efficiency properties of our scheme, we ran numerical tests with it on a set of fair allocation instances based on a real-world data set.

In our experiments, we measure and compare the performances of Algorithm~\ref{alg_gen} and the procedure $\textbf{H}$ as described above in terms of waste (as defined above) as well as the \textit{price of fairness} (PoF) which we formally define as follows
$$\mathrm{PoF}(P) = \frac{\max \{\USW(A) \ | \ A \mbox{ is an allocation}\}}{\USW(A(P))}$$
where $A(P)$ is the allocation returned by a given procedure $P$ (Algorithm~\ref{alg_gen}  or Algorithm $\textbf{H}$) on a problem instance. Obviously, the PoF is bounded below by $1$ for any instance and lower values are better.
 
The data set we use is \textit{MovieLens-ml-1m} \cite{movieLens} which contains approximately 1,000,000 ratings (from $0$ to $5$) of 4,000 movies made by 6,000 users. To generate an instance of our allocation problem, we select $200$ movies uniformly at random ($|O|=200$) and then we only consider the users that rated at least one of these movies. Each such sample of $200$ movies defines one run of our experiments. The users are our group-members and the movies  our items. We generate agents/groups by partitioning users based on a demographic attribute; in fact, we use two attributes recorded in the data set, giving us two sets of allocation problem instances for each run:
\begin{itemize}
	\item Gender: $2$ agents (male or female, as recorded in the data set);
	\item Age: $7$ agents representing the $7$ age-groups recognized in the data set.
\end{itemize}
Moreover, for each such set (with $2$ and $7$ agents respectively), we adopt two models for the member-item weights or, equivalently, agents' valuation functions (raw and normalized ratings), giving us a $2 \times 2$ experimental design:
\begin{itemize}
	\item \textsc{Ratings}: $$v_h(S) = \max \Bigg\{\, \sum\limits_{u \in N_h} r_{u,\pi(u)} \mid \pi \in \Pi(N_h,S) \, \Bigg\}$$ where $r_{uo}$ is the user $u$'s rating of movie $o$;
	\item \textsc{Norm}: $v'_h(S)= v_h(S)/v_h(O)$,
\end{itemize}
for every agent $h$ (group $N_h$) and for any bundle of movies $S \subseteq O$. We provide the results, averaged over 50 runs, in Table \ref{tabTests}.

\begin{table}[h]
	\centering
	\begin{tabular}{ c | c | c c | c c }
		& & \multicolumn{2}{c|}{$\textbf{H}$ \cite{lipton2004approximately,benabbou2019fairness}} &   \multicolumn{2}{c}{Algorithm~\ref{alg_gen}}  \\
		&  Attribute ($\#$groups) &  \textsc{ Ratings } & \textsc{ Norm } & \textsc{ Ratings } & \textsc{ Norm } \\ \hline
		PoF & \multirow{2}{*}{Age ($7$)} &  1.01 & 1.15  & 1.05 & 1.19 \\ 
		Waste&  &  $1.25\%$ & $0.20\%$ & $0\%$ & $0\%$  \\ \hline
		PoF  & \multirow{2}{*}{Gender ($2$)} &  1.00 & 1.02  & 1.00 & 1.03 \\ 
		Waste  &  & $0.00\%$ & $0.00\%$ & $0\%$ & $0\%$ \\
	\end{tabular} 
	\caption{Experimental assessment of allocation procedures under OXS valuations. \label{tabTests}}
\end{table}

We observe that Algorithm $\textbf{H}$ has no guarantees on waste but, in practice, has negligible waste; and the waste appears to be lower for a lower number of agents in our experiments. In comparison, Algorithm~\ref{alg_gen} (which always terminated on its own for all our instances) is waste-free by design but has at least as much average PoF as Algorithm $\textbf{H}$ in all our experiments.

\section{Other fairness criteria under \bsub valuations}\label{sec:other_fairness}
In the main paper, we have focused on Pareto optimal and EF1 allocations for the \bsub valuation class. However, many other concepts have been defined and studied in the literature that formalize different intuitive ideas for what it means for an allocation of indivisible goods to be fair. In this appendix, we will investigate the implications of our results from the main paper for some alternative fairness notions.   

\subsection{Marginal envy-freeness up to one item}\label{sec:mef1}
\citet{caragiannis2016unreasonable} define an allocation $A$ to be \emph{marginally envy-free up to one item} or MEF1 if for every pair of agents $i,j\in N$ such that $i$ envies $j$, there is an item $o \in A_j$ such that
\[v_i(A_i) \ge v_i(A_i \cup A_j \setminus \{o\}) - v_i(A_i).\]
Clearly, MEF1 is in general a relaxation of EF1: it coincides with EF1 for additive valuations and is implied by EF1 for submodular valuations, as Proposition~\ref{prop:ef1_mef1} below shows. Thus, our main results trivially prove the existence (and computational tractability) of MEF1 allocations with optimal welfare guarantees (hence Pareto optimality) for \bsub valuations. However, \citet{caragiannis2016unreasonable} already established the more general existence result that, for submodular valuations, every MNW allocation is MEF1.
\begin{proposition}\label{prop:ef1_mef1}
	For submodular valuation functions, an EF1 allocation is always MEF1 but the converse is not true, even if the functions have binary marginal gains.
\end{proposition}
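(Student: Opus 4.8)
The plan is to prove the two parts of Proposition~\ref{prop:ef1_mef1} separately: first that EF1 implies MEF1 for submodular valuations, then exhibit a small counterexample showing MEF1 does not imply EF1 even under matroid rank valuations.

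\textbf{Part 1: EF1 $\Rightarrow$ MEF1 under submodularity.} Suppose $A$ is EF1 and agent $i$ envies agent $j$. By the EF1 condition there is an item $o \in A_j$ with $v_i(A_i) \ge v_i(A_j \setminus \{o\})$. The goal is to show this same $o$ works for MEF1, i.e. $v_i(A_i) \ge v_i(A_i \cup A_j \setminus \{o\}) - v_i(A_i)$, equivalently $2\,v_i(A_i) \ge v_i(A_i \cup (A_j \setminus \{o\}))$. The key observation is submodularity (subadditivity of the induced set function over disjoint unions, which follows from $v_i(\emptyset)=0$ and submodularity): for disjoint sets $S,T$, $v_i(S \cup T) \le v_i(S) + v_i(T)$. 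Applying this with $S = A_i$ and $T = A_j \setminus \{o\}$ (note $A_i$ and $A_j$ are disjoint, so $A_i$ and $A_j\setminus\{o\}$ are too) gives $v_i(A_i \cup (A_j \setminus \{o\})) \le v_i(A_i) + v_i(A_j \setminus \{o\}) \le v_i(A_i) + v_i(A_i) = 2 v_i(A_i)$, using the EF1 inequality in the last step. This is exactly what we need. The only auxiliary fact to record is that monotone submodular $v_i$ with $v_i(\emptyset)=0$ is subadditive; this is the standard telescoping-plus-submodularity argument already used in the proof of Proposition~\ref{prop:binmargprops}, so I would just cite that style of reasoning or state it inline.

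\textbf{Part 2: MEF1 $\not\Rightarrow$ EF1 under binary marginal gains.} Here I would construct an instance with matroid rank (indeed additive-like) valuations where some allocation is MEF1 but not EF1. A natural candidate: let agent $i$ have a matroid rank valuation that caps at some small value (say $v_i(S)=\min\{|S|,1\}$, i.e. $i$ only wants one item from a particular large set), and let agent $j$ hold a bundle $A_j$ of several items all desired by $i$, while $A_i$ is small or empty. Then $i$ envies $j$ strongly, and removing a single item from $A_j$ does not drop $v_i$ below $v_i(A_i)$, so EF1 fails; but the marginal quantity $v_i(A_i \cup A_j \setminus \{o\}) - v_i(A_i)$ is also capped (at $1 - v_i(A_i)$), so the MEF1 inequality $v_i(A_i) \ge v_i(A_i\cup A_j\setminus\{o\}) - v_i(A_i)$ can hold. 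I need to check the numbers carefully: e.g. $v_i(A_i)=0$ with $A_i=\emptyset$, $A_j$ contains $\ge 2$ items $i$ values, so $v_i(A_j\setminus\{o\})=1>0$ (EF1 violated, and in fact envy "up to more than one item" in the EF1 sense), while $v_i(A_i \cup A_j \setminus \{o\}) - v_i(A_i) = 1 - 0 = 1$, which is \emph{not} $\le 0 = v_i(A_i)$ — so that exact choice fails. Better: make $v_i(A_i)=1$ (give $i$ one item it likes) and cap $v_i$ at $2$: take $v_i(S)=\min\{|S|,2\}$, $A_i$ one liked item, $A_j$ two liked items. Then $v_i(A_j \setminus \{o\}) = 1 = v_i(A_i)$, so EF1 \emph{holds} — not useful either. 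The real lever is that the cap must be hit so that adding the whole of $A_j\setminus\{o\}$ to $A_i$ gives no more than the cap while $A_j\setminus\{o\}$ alone still beats $A_i$. So: cap at $2$, $A_i$ empty, $A_j$ three liked items. Then $v_i(A_i)=0$, $v_i(A_j\setminus\{o\})=2 > 0$ so EF1 fails badly; $v_i(A_i\cup A_j\setminus\{o\}) - v_i(A_i) = 2 - 0 = 2 \not\le 0$ — still fails MEF1. Hmm. It seems MEF1 with an empty $A_i$ degenerates. Let me instead make $v_i(A_i)$ large relative to the cap-gap: cap at $k+1$, $A_i$ a clean bundle of $k$ liked items, $A_j$ containing $\ge 2$ additional liked items. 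Then $v_i(A_i)=k$, $v_i(A_j\setminus\{o\}) = \min\{|A_j|-1, k+1\} \ge 2 \ge$? We need $v_i(A_j\setminus\{o\}) > k$ for EF1 to fail up to more than one item, so need $k+1 > k$, i.e. $|A_j|-1 \ge k+1$. Take $|A_j| = k+2$. Then $v_i(A_j\setminus\{o\}) = k+1 > k = v_i(A_i)$: EF1 fails. And $v_i(A_i \cup A_j \setminus\{o\}) - v_i(A_i) = (k+1) - k = 1 \le k = v_i(A_i)$ as long as $k \ge 1$: MEF1 holds! So the concrete instance is: $v_i(S) = \min\{|S|, 2\}$ (taking $k=1$), agent $i$ gets one liked item ($A_i$, so $v_i(A_i)=1$), agent $j$ gets three liked items ($A_j$, so $v_i(A_j)=2$, $v_i(A_j\setminus\{o\})=2$), agent $j$'s own valuation makes $A_j$ clean (e.g. $v_j(S)=|S|$), and $j$ doesn't envy $i$. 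Then $A$ is MEF1 but not EF1. I would present this four-item (one for $i$, three for $j$) two-agent instance explicitly with an \texttt{example}-style environment or inline, verify $v_i(A_i) = 1 < 2 = v_i(A_j \setminus \{o\})$ for every $o \in A_j$ (EF1 violated), and $v_i(A_i \cup A_j \setminus \{o\}) - v_i(A_i) = 2 - 1 = 1 = v_i(A_i)$ (MEF1 satisfied), and similarly that $j$ is envy-free.

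The main obstacle is Part 2: getting the counterexample arithmetic exactly right, since MEF1's inequality involves $v_i(A_i \cup A_j \setminus \{o\}) - v_i(A_i)$ which, for a matroid rank function that is "nearly additive" up to a cap, behaves subtly — one must ensure the envious agent already holds enough value ($v_i(A_i)\ge 1$) that the capped marginal of the rest of $j$'s bundle is small enough to satisfy MEF1 while the absolute value of $j$'s (reduced) bundle is still large enough to violate EF1. Part 1 is routine once subadditivity is invoked.

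\begin{proof}
\textbf{EF1 implies MEF1.} Recall that a monotone submodular function $v_i$ with $v_i(\emptyset) = 0$ is subadditive: for disjoint $S, T \subseteq O$, writing $T = \{o_1,\dots,o_r\}$ and $T_t = \{o_1,\dots,o_t\}$, submodularity gives $\Delta_i(S \cup T_{t-1}; o_t) \le \Delta_i(T_{t-1}; o_t)$ for each $t$, so that
\[
v_i(S \cup T) - v_i(S) = \sum_{t=1}^r \Delta_i(S \cup T_{t-1}; o_t) \le \sum_{t=1}^r \Delta_i(T_{t-1}; o_t) = v_i(T) - v_i(\emptyset) = v_i(T).
\]
Now let $A$ be an EF1 allocation and suppose agent $i$ envies agent $j$. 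Then there is $o \in A_j$ with $v_i(A_i) \ge v_i(A_j \setminus \{o\})$. Since $A_i$ and $A_j \setminus \{o\}$ are disjoint, subadditivity yields
\[
v_i(A_i \cup A_j \setminus \{o\}) - v_i(A_i) \le v_i(A_j \setminus \{o\}) \le v_i(A_i),
\]
which is precisely the MEF1 condition for the pair $(i,j)$ witnessed by $o$. Hence $A$ is MEF1.

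\textbf{MEF1 does not imply EF1.} Consider $N = \{1,2\}$ and $O = \{o_1,o_2,o_3,o_4\}$. Let $v_1(S) = \min\{|S|, 2\}$ and $v_2(S) = |S|$ for all $S \subseteq O$; both are matroid rank valuations (hence submodular with binary marginal gains). Take the allocation $A$ with $A_1 = \{o_1\}$ and $A_2 = \{o_2,o_3,o_4\}$. Agent $2$ does not envy agent $1$ since $v_2(A_2) = 3 > 1 = v_2(A_1)$. Agent $1$ envies agent $2$, and for every $o \in A_2$ we have $v_1(A_2 \setminus \{o\}) = \min\{2,2\} = 2 > 1 = v_1(A_1)$, so $A$ is \emph{not} EF1 (indeed, $1$ envies $2$ up to more than one item). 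However, for any $o \in A_2$,
\[
v_1(A_1 \cup A_2 \setminus \{o\}) - v_1(A_1) = \min\{3,2\} - 1 = 2 - 1 = 1 = v_1(A_1),
\]
so the MEF1 condition holds for the pair $(1,2)$; since agent $2$ is envy-free, $A$ is MEF1. Thus $A$ is MEF1 but not EF1, even though all valuations have binary marginal gains.
\end{proof}
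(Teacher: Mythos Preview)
Your proof is correct. Part~1 is essentially identical to the paper's: both use subadditivity of monotone submodular functions to bound $v_i(A_i \cup A_j \setminus \{o\}) - v_i(A_i)$ by $v_i(A_j \setminus \{o\})$ and then invoke the EF1 witness.

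Your counterexample in Part~2 differs from the paper's and is in fact cleaner. You take $v_1(S) = \min\{|S|,2\}$, the rank function of the uniform matroid $U_{2,4}$, which is transparently submodular with binary marginals. The paper instead defines $v_1(S) = |S|-1$ when $o_1 \in S$ and $|S| \ge 2$, and $v_1(S) = |S|$ otherwise, claiming it is a matroid rank function; however that function fails submodularity (e.g.\ $\Delta_1(\{o_1\};o_3) = 0 < 1 = \Delta_1(\{o_1,o_2\};o_3)$), so your choice actually repairs a defect in the paper's own example while achieving the same conclusion with the same allocation $A_1=\{o_1\}$, $A_2=\{o_2,o_3,o_4\}$.
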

\begin{proof}
	Consider an EF1 allocation $A$, and pick an arbitrary pair of envious and envied agents $i, j \in N$. Let $o \in A_j$ be an item for which $v_i(A_i) \ge v_i(A_j \setminus \{o\})$. Then, since $A_i$ and $A_j$ are disjoint, submodularity implies
	\begin{align*}
	v_i(A_i \cup A_j \backslash \{o\}) -  v_i(A_i) &\le \left[v_i(A_i) + v_i(A_j \backslash \{o\})\right] -  v_i(A_i)\\
	& =  v_i(A_j \backslash \{o\})\\
	&\le v_i(A_i).
	\end{align*}
	Thus, $A$ satisfies the definition of an MEF1 allocation.
	
	The following is a counterexample to the converse. Suppose, $N=[2]$, $O=\{o_1,o_2,o_3,o_4\}$, and the valuation functions of the agents are
	\begin{align*}
	&v_1(S)=\begin{cases}
	%1 &\text{if $S=\{1\}$;}\\
	|S|-1 &\text{whenever $o_1 \in S$ and $|S| \ge 2$;}\\
	|S| &\text{otherwise.}
	\end{cases}\\
	&v_2(S)=|S| \qquad \qquad \forall S \subseteq O.
	\end{align*}
	It is easy to see that both $v_1$ and $v_2$ are \bsub functions. Under an allocation with bundles $A_1=\{o_1\}$ and $A_2=\{o_2,o_3,o_4\}$, $v_1(A_1)=1$ and $v_2(A_2)=3$. %, so that $\USW(A)=4=|O|$. Hence, the allocation is socially optimal. 
	Moreover, $v_2(A_2 \cup A_1)-v_2(A_2)=4-3=1<3=v_2(A_2)$ and $v_1(A_1 \cup A_2 \backslash\{2\})-v_1(A_1) = v_1(\{1,3,4\})-1=2-1=1=v_1(A_1)$. So, the allocation is also MEF1. 
	Agent $2$ does not envy agent $1$ since $v_2(A_1)=1<3=v_2(A_2)$; but, $v_1(A_2\backslash \{o\})=2>1=v_1(A_1)$ for every $o \in A_2$ so that agent $1$ envies agent $2$ up to more than $1$ item. Thus, the allocation is not EF1.   
	$\qed$
\end{proof}

\subsection{Approximate proportionality}\label{sec:prop}
An allocation $A$ of indivisible items $O$ to agents $N$ is said to be \textit{proportional} if $v_i(A_i) \ge \frac{1}{n} v_i(O)$ for every agent $i \in N$. Since such an allocation may not be achievable in general,\footnote{This non-existence can be demonstrated even for \bsub (in fact, binary additive) valuations by the following simple example: there are two agents and one item which each agent values at $1$. Hence, $v_1(O)=v_2(O)=1$, so that each agent must realize a value of at least $\nicefrac{1}{2}$ for the allocation to be proportional. But in any allocation, at least one agent gets no item and hence realizes zero value.} a relaxation is defined along the same lines as EF1: \textit{proportionality up to one item}. Under additive valuations,
 %i.e. when $v_i(S)=\sum_{o \in S}v_i(\{o\})$ for every $i \in N$ and every $S \subseteq O$, 
 i.e. when $\Delta_i(S;o)=v_i(\{o\})$ for every $i \in N$, every item $o$, and every bundle $S \subseteq O \setminus \{o\}$,
 an allocation $A$ is defined to be proportional up to one item or PROP1 if, for every agent $i \in N$, there is at least one item not allocated to $i$, i.e. $o \in O \setminus A_i$, such that $v_i(A_i) \ge \frac{1}{n} v_i(O)- v_i(\{o\})$. For general non-additive valuations, it is debatable what the (subtractive) relaxation term on the right-hand side should be. For submodular valuations, we know that $\Delta_i(O \setminus \{o\};o) \le \Delta_i(S;o) \le v_i(\{o\})$ for every agent $i \in N$, every item $o \in O$, and every bundle $S \subseteq O \setminus \{o\}$; as such, we could pick either $\Delta_i(O \setminus A_i \setminus \{o\};o)$ or $v_i(\{o\})$ as the relaxation term for the strongest and weakest possible definition of PROP1 respectively. It is easy to see that, for general submodular valuations, any PO and EF1 allocation --- \emph{if it exists} --- satisfies the weakest definition of proportionality up to $1$ item (Proposition~\ref{prop:weakprop1} below); the implications of the PO+EF1 property for stronger definitions of approximate proportionality are not clear yet, even for \bsub valuations.
\begin{definition}
For general valuation functions, an allocation $A$ is called weakly proportional up to one item or WPROP1 if, for every agent $i \in N$,
\[ v_i(A_i) \ge \frac{1}{n}v_i(O) - \max_{o \in O \setminus A_i} v_i(\{o\}). \] 
\end{definition}
\begin{proposition}\label{prop:weakprop1}
	If an instance of fair allocation with indivisible goods under general submodular valuations admits a Pareto optimal and EF1 allocation, then such an allocation is also WPROP1.
\end{proposition}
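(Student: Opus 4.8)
The plan is to combine three ingredients: Pareto optimality to discard the withheld items, submodularity (in the diminishing-returns form of Section~\ref{subsec:defs}, which implies subadditivity for disjoint bundles) to split $v_i(O)$ across the allocated bundles, and EF1 together with submodularity to bound agent $i$'s value for every other agent's bundle in terms of $v_i(A_i)$ and $\max_{o \in O \setminus A_i} v_i(\{o\})$. So fix a Pareto optimal and EF1 allocation $A$ and an agent $i \in N$, and write $M \triangleq \max_{o \in O \setminus A_i} v_i(\{o\})$, noting $M \ge 0$ since $v_i \ge 0$. If $A_i = O$ then (adopting the convention $M = 0$) the claim is immediate because $v_i(A_i) = v_i(O) \ge \tfrac1n v_i(O)$; so assume $A_i \subsetneq O$.

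First I would show that the withheld set contributes nothing to $i$, i.e. $v_i(O) = v_i(\bigcup_{j \in N} A_j)$. Indeed, if some $o \in A_0$ had $\Delta_i(A_i;o) > 0$, moving $o$ into $A_i$ and leaving every other bundle untouched would Pareto-dominate $A$ --- a contradiction; hence $\Delta_i(A_i;o) = 0$ for every $o \in A_0$. Since $A_i \subseteq \bigcup_{j} A_j$, submodularity gives $\Delta_i(T;o) \le \Delta_i(A_i;o) = 0$ for any $T \supseteq A_i$ with $o \notin T$, while monotonicity gives $\Delta_i(T;o) \ge 0$; adding the elements of $A_0$ to $\bigcup_j A_j$ one at a time, exactly as in the telescoping arguments in the proofs of Proposition~\ref{prop:binmargprops} and Lemma~\ref{lem:revoc_realloc}, then yields $v_i(O) = v_i(\bigcup_j A_j)$. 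By subadditivity of a monotone submodular function with $v_i(\emptyset) = 0$ (another telescoping argument), $v_i(\bigcup_{j \in N} A_j) \le \sum_{j \in N} v_i(A_j)$.

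Next I would bound $v_i(A_j)$ for each $j \ne i$. If $i$ does not envy $j$, then $v_i(A_j) \le v_i(A_i) \le v_i(A_i) + M$. Otherwise EF1 provides $o_j \in A_j$ (so $A_j \ne \emptyset$) with $v_i(A_i) \ge v_i(A_j \setminus \{o_j\})$, and then $v_i(A_j) = v_i(A_j \setminus \{o_j\}) + \Delta_i(A_j \setminus \{o_j\}; o_j) \le v_i(A_i) + \Delta_i(\emptyset; o_j) = v_i(A_i) + v_i(\{o_j\}) \le v_i(A_i) + M$, using submodularity for the middle inequality and $o_j \in A_j \subseteq O \setminus A_i$ (bundles are disjoint) for the last. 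Summing the $n-1$ bounds over $j \ne i$ together with the trivial $j = i$ term gives $\sum_{j \in N} v_i(A_j) \le v_i(A_i) + (n-1)\bigl(v_i(A_i) + M\bigr) = n\, v_i(A_i) + (n-1)M$. Chaining everything, $v_i(O) = v_i(\bigcup_j A_j) \le \sum_j v_i(A_j) \le n\, v_i(A_i) + (n-1)M$, hence $v_i(A_i) \ge \tfrac1n v_i(O) - \tfrac{n-1}{n}M \ge \tfrac1n v_i(O) - M = \tfrac1n v_i(O) - \max_{o \in O \setminus A_i} v_i(\{o\})$, which is exactly WPROP1. The only delicate step is the first one: using Pareto optimality to kill the withheld items against the \emph{full} union $\bigcup_j A_j$ rather than merely against $A_i$, which needs the submodular marginal-monotonicity plus the telescoping trick; the $A_i = O$ edge case must also be dispatched by hand, as done above.
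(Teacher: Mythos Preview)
Your proof is correct and follows essentially the same route as the paper: use EF1 together with submodularity to bound each $v_i(A_j)$ by $v_i(A_i)+M$, use PO to neutralize the withheld items, and then apply subadditivity to recombine into $v_i(O)$. The only cosmetic difference is in handling $A_0$: you show $\Delta_i(A_i;o)=0$ for each $o\in A_0$ and then telescope up to $\bigcup_j A_j$, whereas the paper observes directly that $v_i(A_i\cup A_0)=v_i(A_i)$ (else giving all of $A_0$ to $i$ Pareto-dominates) and applies subadditivity to the disjoint family $A_i\cup A_0,\,A_1,\dots,A_{i-1},A_{i+1},\dots,A_n$---so your ``delicate step'' can in fact be shortened.
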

\begin{proof}
	Consider a PO, EF1 allocation $A$ admitted by an instance with submodular valuations. %\footnote{Such an allocation is guaranteed to exist by Theorem~\ref{thm:bin_all} and Corollary~\ref{cor:bin_mnw}.} 
	Hence, for any agent $i$, 
	\begin{align*}
	v_i(A_i) &\ge v_i(A_j \setminus \{o\}) && \text{for every $j \in N \setminus \{i\}$, for some $o \in A_j$}\\
	&= v_i(A_j) - \Delta_i(A_j \setminus \{o\};o)&&\\
	&\ge v_i(A_j) - v_i(\{o\}) && \text{since $\Delta_i(A_j \setminus \{o\};o) \le v_i(\{o\})$ due to the submodularity of $v_i(\cdot)$.}
	\end{align*}
	For every $j \in A_j$, consider an arbitrary such item $o_j$ satisfying the above inequality. Since $o_j \in A_j \subseteq O \setminus A_i$, we have $v_i(\{o_j\}) \le \max_{o \in O \setminus A_i} v_i(\{o\})$.
	Moreover, if $A_0 \neq \emptyset$, we must also have $v_i(A_i \cup A_0) = v_i(A_i)$, otherwise we could augment agent $i$'s bundle $A_i$ with the withheld set $A_0$ and thereby increase her realized valuation without diminishing that of any other agent, contradicting the Pareto optimality of $A$. Thus, combining the above $n-1$ inequalities together with this equality, we get
	\begin{align*}
	n v_i(A_i) &\ge v_i(A_i \cup A_0)+\sum_{j \in N \setminus \{i\}} v_i(A_j) - (n-1)\times \max_{o \in O \setminus A_i} v_i(\{o\})\\
	&\ge v_i(\cup_{i=0}^n A_i) - (n-1)\times \max_{o \in O \setminus A_i} v_i(\{o\}) \qquad \text{due to the submodularity, hence subadditivity, of $v_i(\cdot)$}\\
	&=v_i(O) -(n-1)\times \max_{o \in O \setminus A_i} v_i(\{o\})\\
	&\ge v_i(O) - n \times \max_{o \in O \setminus A_i} v_i(\{o\}) \qquad \text{since } \max_{o \in O \setminus A_i} v_i(\{o\}) \ge 0.
	\end{align*}
	Dividing both sides by $n$, we conclude that $A$ is WPROP1.
\end{proof}
Since a PO+EF1 allocation always exists and can be efficiently computed under \bsub valuations (Theorem~\ref{thm:bin_all}), the existence and computational tractability of a WPROP1 allocation for this function class immediately follows from Proposition~\ref{prop:weakprop1}.

\subsection{Approximate equitability}\label{sec:approx_eq}
An allocation $A$ is said to be \emph{equitable} or EQ if the realized valuations of all agents are equal under it, i.e. for every pair of agents $i,j \in N$, $v_i(A_i) = v_j(A_j)$; an allocation $A$ is \emph{equitable up to one item} or EQ1 if, for every pair of agents $i,j \in N$ such that $A_j \neq \emptyset$, there exists some item $o \in A_j$ such that $v_i(A_i) \ge v_j (A_j \setminus \{o\})$ \cite{freeman2019equitable}.\footnote{Note that if $A_j = \emptyset$ for some $j$, $v_i(A_i) \ge v_j(A_j)$ trivially. Hence the ordered pair $(i,j)$ for any $i \in N \setminus \{j\}$ could never prevent the allocation from being EQ1.} We can further relax the equitability criterion up to an arbitrary number of items: an allocation $A$ is said to be \emph{equitable up to $c$ items} or EQ$c$ if, for every pair of agents $i,j \in N$ such that $|A_j| > c$, there exists some subset $S \in A_j$ of size $|S|=c$ such that $v_i(A_i) \ge v_j (A_j \setminus S)$.\footnote{Again, if $|A_j| \le c$ for some $j$, no ordered pair $(i,j)$ for any $i \in N \setminus \{j\}$ could get in the way of the allocation being EQ$c$.} 

\citet{freeman2019equitable} show\footnote{\citet{freeman2019equitable} use an example with $3$ agents having binary additive valuations (Example 1). But it is easy to construct a fair allocation instance with only two agents having binary additive valuations that does not admit an EQ1 and PO allocation: $N=[2]$; $O=\{o_1,o_2,o_3,o_4\}$; $v_1(o)=1$ for every $o \in O$; $v_2(o_1)=1$ and $v_2(o)=o$ for every $o \in O \setminus \{o_1\}$. Obviously, any PO allocation must give $\{o_2,o_3,o_4\}$ to agent $1$ so that this agent's realized valuation is at least $2$ even after dropping one of its items; even if agent $2$ receives $o_1$, her realized valuation of $1$ will always be less than the above.} that, even for binary additive valuations (which is a subclass of the \boxs valuation class), an allocation that is both EQ1 and PO may not exist; however, in Theorem 4, they establish that it can be verified in polynomial time whether an EQ1, EF1 and PO allocation exists and, whenever it does exist, it can also be computed in polynomial time --- under binary additive valuations. We will show that the above positive result about computational tractability extends to the \boxs valuation class. We will begin by proving that  under \bsub valuations, an EQ1 and PO allocation, if it exists, is also EF1 --- we achieve this by combining Theorem~\ref{thm:eq1_bo} below with Corollary~\ref{cor:sumsquares}. This simplifies the problem of finding an EQ1, EF1 and PO allocation to that of finding an EQ1 and PO allocation. 
\begin{theorem}\label{thm:eq1_bo}
	For submodular valuations with binary marginal gains, any EQ1 and PO allocation, if it exists, is a leximin allocation.
\end{theorem}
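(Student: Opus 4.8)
The plan is to reduce to clean allocations and then play the ``within one of each other'' constraint imposed by EQ1 against the ``gap of at least two'' guaranteed by Lemma~\ref{lem:sequence} for any non-leximin utilitarian optimal allocation. No new machinery is needed beyond Theorem~\ref{thm:PO}, Proposition~\ref{prop:clean_size}, and Lemma~\ref{lem:sequence}; in particular the argument uses neither Algorithm~\ref{alg_bin} nor the convexity characterization of Theorem~\ref{thm:convex}.

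First I would argue that it suffices to treat clean allocations. Given an EQ1 and PO allocation $A$, let $A^c$ be a cleaning of $A$. Cleaning leaves every realized valuation $v_i(A_i)$ unchanged, so $\score(A^c)=\score(A)$ and $A^c$ is still utilitarian optimal, hence PO by Theorem~\ref{thm:PO}; and I would check it is still EQ1: for any pair $i,j$ with $A^c_j\neq\emptyset$ we have $A_j\supseteq A^c_j\neq\emptyset$, so EQ1 of $A$ gives some $o'\in A_j$ with $v_i(A_i)\ge v_j(A_j\setminus\{o'\})\ge v_j(A_j)-1$ (binary marginals), while cleanness of $A^c_j$ forces $v_j(A^c_j\setminus\{o\})=v_j(A^c_j)-1=v_j(A_j)-1$ for every $o\in A^c_j$, yielding the EQ1 inequality for $A^c$. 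Since the leximin property depends only on $\score$, proving that $A^c$ is leximin suffices, so I would assume without loss of generality that $A$ is clean.

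Next I would rephrase EQ1 in the clean setting: by Proposition~\ref{prop:clean_size}, $v_j(A_j)=|A_j|$, and since every subset of a clean bundle is again clean, $v_j(A_j\setminus\{o\})=|A_j|-1=v_j(A_j)-1$ for every $o\in A_j$; hence EQ1 becomes simply $v_i(A_i)\ge v_j(A_j)-1$ for all $i,j$ with $A_j\neq\emptyset$, i.e.\ no two agents have realized valuations differing by more than one. Now suppose for contradiction that $A$ is not leximin. Being PO it is utilitarian optimal (Theorem~\ref{thm:PO}), so Lemma~\ref{lem:sequence} yields a utilitarian optimal $A'$ with $\score(A')=\score(A)+\chi_i-\chi_j$ where $\score(A)_j\ge\score(A)_i+2$; thus there are two distinct agents $p,q$ with $v_p(A_p)\ge v_q(A_q)+2$, and in particular $v_p(A_p)\ge 2$ so $A_p\neq\emptyset$. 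Applying the clean form of EQ1 to the ordered pair $(q,p)$ gives $v_q(A_q)\ge v_p(A_p)-1\ge v_q(A_q)+1$, a contradiction. Hence $A$ is leximin.

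I expect the only delicate point to be the reduction step: verifying that cleaning an EQ1 allocation stays EQ1 (which works precisely because binary marginals keep $v_j(A_j\setminus\{o\})$ within one of $v_j(A_j)$, both for $A$ and for its cleaning) and that leximin is cleaning-invariant. Once EQ1 is read, for clean matroid rank allocations, as ``all realized valuations within one of one another,'' the clash with the gap of at least two from Lemma~\ref{lem:sequence} is immediate, so no further computation is required.
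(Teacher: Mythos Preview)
Your proof is correct. Both you and the paper derive the same key consequence of EQ1 under binary marginals, namely $v_i(A_i)\ge v_j(A_j)-1$ for all $i,j$ with $A_j\neq\emptyset$, and both then contradict ``not leximin.'' The routes to the contradiction differ: the paper writes $\score(A)$ explicitly as $n_0$ copies of some $\alpha$ and $n-n_0$ copies of $\alpha+1$, and shows by a direct counting argument that any allocation lexicographically dominating $A$ would have utilitarian welfare at least $U^*+1$, contradicting the optimality of $U^*$. You instead invoke Lemma~\ref{lem:sequence} to produce two agents with a valuation gap of at least two, which clashes with the ``within one'' bound. Your route is shorter and more modular (it reuses existing machinery), while the paper's is self-contained and avoids the dependence on Lemma~\ref{lem:sequence}.

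One simplification: your cleaning detour is unnecessary. The inequality $v_i(A_i)\ge v_j(A_j)-1$ already follows from EQ1 and binary marginals without any assumption on $A$: if $A_j\neq\emptyset$, the witness $o\in A_j$ gives $v_i(A_i)\ge v_j(A_j\setminus\{o\})=v_j(A_j)-\Delta_j(A_j\setminus\{o\};o)\ge v_j(A_j)-1$. And $v_p(A_p)\ge 2>0$ already forces $A_p\neq\emptyset$ since $v_p(\emptyset)=0$. So the entire first paragraph of your argument can be dropped; the paper's proof likewise never cleans.
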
 
Hence, from Theorem~\ref{thm:convex}, we further obtain that if an EQ1 and PO allocation exists under \bsub valuations, it is also MNW and a minimizer of any symmetric strictly convex function of agents' realized valuations among all utilitarian optimal allocations. Moreover, Theorem~\ref{thm:eq1_bo}, together with Corollary~\ref{cor:bin_mnw}, implies that if an EQ1 and PO allocation exists under \bsub valuations, it must be EF1.
\begin{proof}
	Let the optimal $\USW$ for a problem instance under this valuation class be $U^*$; also, suppose this instance admits an EQ1 and PO allocation $A$. The EQ1 property implies that for every pair of agents $i,j \in N$ such that $A_j \neq \emptyset$,
	\begin{align*}
	v_i(A_i) &\ge v_j(A_j \setminus \{o\}) \qquad \text{for some $o \in A_j$}\\
	 &=v_j(A_j) - \Delta_j (A_j \setminus \{o\}; o)\\
	&\ge v_j(A_j) - 1, \qquad \text{since $\Delta_j (A_j \setminus \{o\}; o) \in \{0,1\}$.}
	\end{align*}
 This inequality holds trivially and strictly if $A_j = \emptyset$. Thus, $\max_{i \in N} v_i(A_i) \le \min_{i \in N} v_i(A_i)+1$. In other words, there exist a non-negative integer $\alpha \le U^*$ and a positive integer $n_0 \in [n]$ such that $n_0$ agents have valuations $\alpha$ each and the remaining agents, if any, have valuations $\alpha+1$ each under allocation $A$, with $U^*=n_0 \alpha + (n-n_0)(\alpha+1)=n\alpha + n-n_0$. We can write the agents' realized valuations under $A$ (with arbitrary tie-breaking) as the $n$-dimensional vector
	\[\score(A)=\left(\underbrace{\alpha, \alpha, \dots, \alpha}_\text{$n_0$ entries}, \underbrace{\alpha+1, \alpha+1, \dots, \alpha+1}_\text{$n-n_0$ entries}\right).\]
	If $A$ were not leximin, there would be another allocation $A'$ for which the corresponding valuation vector $\score(A')$ would have an entry strictly higher than that of $A$ at the same position, say $n' \in [n]$. If $n' \le n_0$, then every entry of $\score(A')$ from position $n'$ is at least $\alpha+1$, so the $\USW$ under $A'$ is
	\begin{align*}
	U' &\ge (n'-1)\alpha + (n-n'+1)(\alpha+1)\\
	&=n\alpha + n-n'+1\\
	&\ge n\alpha +n-n_0+1\\
	&=U^*+1.
	\end{align*}
	If $n_0 < n' \le n$, then similarly,
	\begin{align*}
	U' &\ge n_0 \alpha + (n' - n_0-1) (\alpha+1) + (n-n'+1)(\alpha+2)\\
	&= n\alpha +2n -n' -n_0 +1\\
	&\ge n\alpha +2n -n -n_0 +1\\
	&=n\alpha +n -n_0 +1\\
	&=U^*+1.
	\end{align*}
	In either case, we have a contradiction since $U^*$ is the optimal utilitarian social welfare for this instance. Hence, $A$ must be leximin.
\end{proof}
We conjecture that a stronger result holds: under \bsub valuations, the leximin allocation is optimally EQ$c$ for $c \in \{0, 1, \dots, m\}$ among all PO allocations. A proof or a counterexample remains elusive. We present this more formally as follows:
\begin{conjecture}\label{thm:opteqc}
		For a problem instance where all agents have submodular valuations with binary marginal gains, if $c^*$ is the smallest $c \in [m]$ for which a leximin allocation is EQ$c$, then the instance admits no PO allocation that is EQ$c$ for any $c < c^*$.
\end{conjecture}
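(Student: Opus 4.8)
The plan is to show that, under \bsub valuations, EQ$c$ of a clean allocation is governed entirely by the \emph{spread} of its realized-valuation vector, and then to read off the conjecture from the Robin-Hood transfers of Lemma~\ref{lem:sequence}. For an allocation $A$ write $\sigma(A) \triangleq \max_i v_i(A_i) - \min_i v_i(A_i)$. If $A$ is clean, then for every $j$ and every $S \subseteq A_j$ we have $v_j(A_j \setminus S) = |A_j| - |S| = v_j(A_j) - |S|$ by Propositions~\ref{prop:binmargprops} and~\ref{prop:clean_size}, so $A$ is EQ$c$ if and only if $v_i(A_i) \ge v_j(A_j) - c$ for every ordered pair $i,j$, i.e. if and only if $\sigma(A) \le c$; thus the least $c$ for which a clean allocation is EQ$c$ is exactly $\sigma(A)$. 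In particular a clean leximin allocation $L$ is EQ$\sigma(L)$, so $c^* \le \sigma(L)$ whenever $\sigma(L) \ge 1$ (and when $\sigma(L)=0$ the statement is vacuous for $c\in[m]$, since then $c^*=1$); and $\sigma(L)$ is intrinsic because all leximin allocations share the vector $\score(\cdot)$. Hence it suffices to prove that \emph{every} Pareto optimal allocation has spread at least $\sigma(L)$.

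Next I would reduce to clean allocations. Cleaning an allocation $A$ to $A'$ changes no realized valuation, so it preserves Pareto-optimality; moreover if $A$ is EQ$c$ then so is $A'$. For the latter: removing $c$ items drops a \bsub valuation by at most $c$, so $v_j(A_j\setminus S) \ge v_j(A_j)-c$ for all $|S|=c$; letting $j^*$ attain $M \triangleq \max_j v_j(A_j)$ we have $|A_{j^*}| \ge M$ by Proposition~\ref{prop:binmargprops}, so if $M>c$ the EQ$c$ condition applied to the pairs $(i,j^*)$ forces $\min_i v_i(A_i) \ge M-c$, i.e. $\sigma(A') = M-\min_i v_i(A_i) \le c$; if $M \le c$ then $\sigma(A') \le M \le c$ anyway. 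So if some PO allocation were EQ$c$ with $c < c^*$, some clean PO allocation would have $\sigma < c^* \le \sigma(L)$. It remains to rule this out. Since PO $=$ utilitarian optimal for \bsub valuations (Theorem~\ref{thm:PO} with Lemma~\ref{lem:opt_po}), I would start from an arbitrary utilitarian optimal allocation and, as long as it is not leximin, apply Lemma~\ref{lem:sequence} to pass to a utilitarian optimal $B'$ with $\score(B') = \score(B)+\chi_i-\chi_j$ where $\score(B)_j \ge \score(B)_i+2$. Writing the two affected values as $v_i \le v_j-2$, the replacements $v_i \mapsto v_i+1$ and $v_j \mapsto v_j-1$ are both strictly between $v_i$ and $v_j$, hence both lie in $[\min_k\score(B)_k,\max_k\score(B)_k]$; therefore $\max_k\score(B')_k \le \max_k\score(B)_k$ and $\min_k\score(B')_k \ge \min_k\score(B)_k$, so $\sigma(B') \le \sigma(B)$. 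Because $\score(\cdot)$ strictly increases in the lexicographic order at each step and there are finitely many possible valuation vectors, the process terminates, necessarily at a leximin allocation (Lemma~\ref{lem:sequence} provides a further step from any non-leximin one). Tracking $\sigma$ along the way yields $\sigma(A) \ge \sigma(L)$ for the initial $A$, completing the argument.

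The step I expect to require the most care is the reduction to clean allocations, because EQ$c$ is a statement about the actual bundles $A_j$ rather than about realized valuations, and a PO allocation need not be clean (cf. Corollary~\ref{cor:cleancomp}); a priori a non-clean PO allocation could be "more equitable'' than any clean one by carrying zero-marginal items that can be discarded to make $v_j(A_j\setminus S)$ small. The key fact that closes this gap is the elementary bound that a \bsub valuation falls by at most $c$ when $c$ items are removed, combined with $|A_{j^*}| \ge v_{j^*}(A_{j^*})$ to locate the binding pair. The other two ingredients — the clean-bundle characterization of EQ$c$ and the monotonicity of $\sigma$ under a single Robin-Hood transfer — are routine once Propositions~\ref{prop:binmargprops}/\ref{prop:clean_size} and Lemma~\ref{lem:sequence} are in hand. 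A secondary subtlety worth flagging is the degenerate case where $\sigma(L)=0$ (leximin is already equitable), in which $c^*=1$ and the conjecture holds vacuously over $c\in[m]$.
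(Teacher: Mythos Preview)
The paper does not prove this statement: it is labelled a \emph{Conjecture}, and the text immediately preceding it says ``A proof or a counterexample remains elusive.'' Your proposal therefore is not to be compared with a paper proof but assessed on its own --- and it is a correct proof, resolving the conjecture.

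Your argument rests on three clean observations, all of which check out. First, for a clean allocation $A$ under matroid rank valuations, any subset $S\subseteq A_j$ is again independent (matroid axiom (I2)), so $v_j(A_j\setminus S)=|A_j|-|S|=v_j(A_j)-|S|$ by Proposition~\ref{prop:clean_size}; hence the least $c$ with $A$ EQ$c$ is exactly the spread $\sigma(A)$, and in particular a clean leximin allocation witnesses $c^*\le\sigma(L)$ (the $\sigma(L)=0$ case is vacuous as you note). Second, your EQ$c\Rightarrow\sigma\le c$ reduction for possibly non-clean allocations is correct: removing $c$ items drops a matroid rank value by at most $c$, and $|A_{j^*}|\ge v_{j^*}(A_{j^*})$ lets you invoke the EQ$c$ clause at the maximizing agent. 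Third, the Robin--Hood chain via Lemma~\ref{lem:sequence} works: each step replaces two values $s_i,s_j$ with $s_i{+}1,s_j{-}1\in[s_i{+}1,s_j{-}1]$, so the multiset max is nonincreasing and min nondecreasing, giving $\sigma(B')\le\sigma(B)$; and each step strictly improves the leximin order (the paper itself asserts this in the proof of Theorem~\ref{thm:convex}), so the process terminates at a leximin allocation. Chaining, every PO (hence by Theorem~\ref{thm:PO} utilitarian optimal) allocation has $\sigma\ge\sigma(L)\ge c^*$, and combining with $\sigma\le c$ from EQ$c$ rules out $c<c^*$.

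Two small points of presentation you could tighten: the equality $\score(A')=\score(A)+\chi_i-\chi_j$ in Lemma~\ref{lem:sequence} should be read as an equality of multisets (the right-hand side need not be sorted), which is what you implicitly use; and the reduction to clean allocations is in fact needed only to identify $c^*=\sigma(L)$, not for the chain argument itself, since Lemma~\ref{lem:sequence} and Theorem~\ref{thm:PO} do not assume cleanness. Neither affects correctness.
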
%\todo{MC: I commented out my attempt to prove this conjecture.}

\subsection{The MMS guarantee}\label{sec:mms}
Let $\Pi(O)$ denote the collection of all $n$-partitions of the set of items $O$. %, or, in other words, that of all complete allocations of $O$ to $N$. 
Then, the \emph{maximin share} of agent $i$  is defined as: $$\MMS_i \triangleq \max_{(A_1,A_2,\dots,A_n) \in \Pi(O)} \min_{j \in N} v_i(A_j).$$ 
An allocation $A$ is called MMS if $v_i(A_i) \ge \MMS_i$ for every $i \in N$; for any approximation ratio $\alpha \in (0,1]$, $A$ is called $\alpha$-MMS if $v_i(A_i) \ge \alpha \cdot \MMS_i$ for every $i \in N$.

\citet{ghodsi2018fair} showed that for agents with (general) submodular valuations, a $\nicefrac{1}{3}$-MMS allocation always exists and can be computed in polynomial time; it is not yet clear what the connections of the MMS guarantee with PO+EF1 allocations are, even for the subclass of submodular valuations with binary marginal gains, and whether a better approximation ratio can be achieved for this subclass.  

In general, a PO and EF1 allocation may not be MMS even for \boxs valuations, which we show by an example below:
\begin{proposition}\label{prop:poef1_notMMS}
	For assignment valuations (hence submodular valuations) with binary marginal gains, a Pareto optimal and EF1 allocation may not be MMS.
\end{proposition}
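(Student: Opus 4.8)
The plan is to exhibit a single instance with two agents (groups) having \boxs valuations together with a specific allocation that is utilitarian optimal---hence Pareto optimal by Lemma~\ref{lem:opt_po}---and EF1, yet fails MMS for one agent. The construction has to be genuinely non-additive: since a Pareto optimal and EF1 allocation is always MMS for binary additive valuations (Proposition~\ref{prop:MMS}), the counterexample must involve a group whose members' approvals overlap the items in a matching-like way.

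Concretely, I would take four items $O=\{o_1,o_2,o_3,o_4\}$. Agent $1$ is a group with two members, one approving $\{o_1,o_2\}$ and the other approving $\{o_3,o_4\}$, so that $v_1(S)=[\,S\cap\{o_1,o_2\}\neq\emptyset\,]+[\,S\cap\{o_3,o_4\}\neq\emptyset\,]$ (the rank function of the transversal matroid with two parallel classes); agent $2$ is a group of three members approving $o_2$, $o_3$, $o_4$ respectively, so that $v_2(S)=|S\cap\{o_2,o_3,o_4\}|$. The allocation to analyse is $A_1=\{o_1\}$ and $A_2=\{o_2,o_3,o_4\}$.

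The verification then proceeds in three short steps. First, \emph{optimality}: $\USW(A)=1+3=4$; since agent $1$ can contribute at most $v_1(O)=2$ and agent $2$ at most $3$, any allocation with $v_2(A_2)\le 2$ has $\USW\le 4$, while any allocation with $v_2(A_2)=3$ forces $A_2\supseteq\{o_2,o_3,o_4\}$, hence $A_1\subseteq\{o_1\}$ and again $\USW\le 4$; so $A$ is utilitarian optimal and, by Lemma~\ref{lem:opt_po}, Pareto optimal. Second, \emph{EF1}: agent $2$ does not envy agent $1$ because $v_2(A_1)=v_2(\{o_1\})=0$; agent $1$ does envy agent $2$ since $v_1(A_2)=2>1=v_1(A_1)$, but removing $o_2$ gives $v_1(\{o_3,o_4\})=1=v_1(A_1)$, so the envy is only up to one item. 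Third, \emph{failure of MMS}: the $2$-partition $(\{o_1,o_3\},\{o_2,o_4\})$ gives each part $v_1$-value $2$, so $\MMS_1=2$ (and this is tight since $v_1(O)=2$); hence $v_1(A_1)=1<2=\MMS_1$, i.e.\ $A$ is not MMS.

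The only genuine difficulty is designing the instance so that all three requirements hold at once: for agent $1$ to fall below a maximin share of $2$ her matroid must be rich enough to split $O$ into two value-$2$ halves, but for her to stay EF1 while holding value only $1$, the complementary bundle $A_2$ must contain a single item (here $o_2$) whose deletion kills one of agent $1$'s two matched classes---precisely the structure supplied by a two-parallel-class transversal matroid and unavailable under additive valuations. Once the instance is pinned down, every check above is a finite routine computation.
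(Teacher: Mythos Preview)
Your proposal is correct: the four-item instance with $v_1$ the rank of a two-parallel-class transversal matroid and $v_2$ binary additive over $\{o_2,o_3,o_4\}$ does admit the allocation $A_1=\{o_1\}$, $A_2=\{o_2,o_3,o_4\}$ that is utilitarian optimal (hence PO), EF1, and violates MMS for agent~$1$. All three verifications go through exactly as you describe.

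The paper establishes the same proposition via a larger counterexample: eight items and two groups of six and four members respectively (an augmentation of Example~\ref{runningex}), with the MMS violation occurring for the second group. In that instance the exhibited allocation is even envy-free, not merely EF1, and the non-additivity is spread across several overlapping member approvals rather than concentrated in a single two-class structure. Your construction is more economical---four items suffice, and the mechanism is transparent (one parallel class of agent~$1$'s matroid is entirely absorbed by agent~$2$'s bundle, so removing the single item $o_2$ from $A_2$ kills that class and restores EF1, yet agent~$1$ could have secured value~$2$ in both halves of a balanced split). The paper's example, on the other hand, shows that even an envy-free utilitarian-optimal allocation can fail MMS, a marginally stronger observation.
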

\begin{proof}
	Consider an augmentation to Example~\ref{runningex} from Section~\ref{sec:utiloptef1}: we now have eight items and two groups $N_1=\{a_1,a_2, \dots, a_6\}$ and $N_2=\{b_1,b_2,b_3,b_4\}$ with \boxs valuations; an  agent's utility for an item is $1$ if there is an edge between the nodes representing them in Figure~\ref{fig3}, otherwise it is $0$.  
	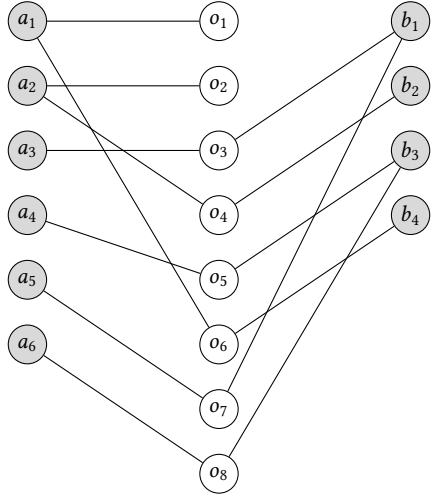
\begin{figure}[htb]
		\centering
		\begin{tikzpicture}[scale=0.85, transform shape, every node/.style={minimum size=6mm, inner sep=1pt}]
		\node[draw, circle,fill=gray!30](1) at (0,2) {$a_1$};
		\node[draw, circle,fill=gray!30](2) at (0,1) {$a_2$};
		\node[draw, circle,fill=gray!30](3) at (0,0) {$a_3$};
		\node[draw, circle,fill=gray!30](4) at (0,-1) {$a_4$};
		\node[draw, circle,fill=gray!30](5) at (0,-2) {$a_5$};
		\node[draw, circle,fill=gray!30](6) at (0,-3) {$a_6$};
		
		\node[draw, circle](11) at (3,2) {$o_1$};
		\node[draw, circle](12) at (3,1) {$o_2$};
		\node[draw, circle](13) at (3,0) {$o_3$};
		\node[draw, circle](14) at (3,-1) {$o_4$};
		\node[draw, circle](15) at (3,-2) {$o_5$};
		\node[draw, circle](16) at (3,-3) {$o_6$};
		\node[draw, circle](17) at (3,-4) {$o_7$};
		\node[draw, circle](18) at (3,-5) {$o_8$};
		
		\node[draw, circle,fill=gray!30](21) at (6,2) {$b_1$};
		\node[draw, circle,fill=gray!30](22) at (6,1) {$b_2$};
		\node[draw, circle,fill=gray!30](23) at (6,0) {$b_3$};
		\node[draw, circle,fill=gray!30](24) at (6,-1) {$b_4$};
		
		\draw[-, >=latex] (1)--(11) (1)--(16);
		\draw[-, >=latex] (2)--(12) (2)--(14);
		\draw[-, >=latex] (3)--(13);
		\draw[-, >=latex] (4)--(15);
		\draw[-, >=latex] (5)--(17);
		\draw[-, >=latex] (6)--(18);
		
		\draw[-, >=latex] (21)--(13);
		\draw[-, >=latex] (21)--(17);
		\draw[-, >=latex] (22)--(14);
		\draw[-, >=latex] (23)--(15);
		\draw[-, >=latex] (23)--(18);
		\draw[-, >=latex] (24)--(16);
		\end{tikzpicture}
		\caption{Problem instance used in the proof of Proposition~\ref{prop:poef1_notMMS}.\label{fig3}}
	\end{figure}
	The maximin share of group/agent $N_2$ is $\MMS_2=3$ corresponding to the partition $\{A_1,A_2\}$ where $A_1 \supseteq \{o_3,o_4,o_5\}$, $A_2 \supseteq \{o_6,o_7,o_8\}$, and each of $o_1,o_2$ is arbitrarily included in either $A_1$ or $A_2$: with $A_1$ (resp. $A_2$) allocated to $N_2$, the members $b_1,b_2,b_3$ (resp. $b_1,b_3,b_4$) are assigned items $o_3,o_4,o_5$ (resp. $o_7,o_8,o_6$) respectively.
	
	However, take the allocation $A'$ with bundles $A'_2=\{o_4,o_6\}$ and $A'_1 = O \setminus A'_2$: $v_1(A'_1)=6$ with $o_1,o_2,o_3,o_5,o_7,o_8$ assigned to $a_1,a_2,a_3,a_4,a_5,a_6$ respectively; $v_2(A'_2)=2$ with $o_4,o_6$ assigned to $b_2,b_4$ respectively; $v_1(A'_2)=2$ with $o_4,o_6$ assigned to $a_2,a_1$ respectively; $v_2(A'_1)=2$ with $o_1,o_2$ unassigned, either $o_3$ or $o_7$ assigned to $b_1$, and either $o_5$ or $o_8$ assigned to $b_3$ respectively. Since $v_1(A_1)+v_2(A_2)=8=|O|$, the allocation is utilitarian optimal (hence PO); it is also envy-free since $v_1(A'_1) > v_1(A'_2)$ and $v_2(A'_2) = v_2(A'_1)$. But $v_2(A'_2) < \MMS_2$.
\end{proof}	
There are a few follow-up questions that are yet to be settled:
\begin{itemize}
	\item Does an MNW/leximin allocation under \bsub valuations (Theorem~\ref{thm:convex} and Corollary~\ref{cor:bin_mnw}) have the MMS guarantee?\footnote{Notice that, in Example~\ref{runningex}, $A_1=\{o_1,o_2,o_3\}$ and $A_2 = \{o_4,o_5,o_6\}$ constitute the unique MNW/leximin allocation that is also MMS.}
	\item Is a PO and EF1 allocation under \bsub valuations guaranteed to provide a constant approximation to the MMS guarantee, and if so, what is a tight bound on this approximation ratio? 
\end{itemize}
We conclude with a positive existential result on a subclass of \boxs valuations: binary additive valuations, i.e. for every $i \in N$, we have $v_i(\{o\}) \in \{0,1\}$ for every $o \in O$, and $v_i(S) = \sum_{o \in S} v_i(\{o\})$ for every $S \subseteq O$. For this valuation subclass, we can safely assume that there are no \textit{redundant} items in $O$, i.e. there is no $o \in O$, such that $v_i(\{o\})=0$ for every $i \in N$. \footnote{Proposition~\ref{prop:MMS} also follows from \citet{segal2019democratic} who extend many of the fairness notions we study in this paper to a setting where agents are partitioned in $k$ groups (ours is a special case with $k=n$ since each agent can be viewed as constituting its own group): their Lemmas 2.3 and 2.7(c) jointly imply that under binary additive valuations, every complete, EF1 allocation is MMS. We thank an anonymous reviewer for pointing us to this connection.}
\begin{proposition}\label{prop:MMS}
 If all agents have binary additive valuations, any Pareto optimal, EF1 allocation is also MMS.	
\end{proposition}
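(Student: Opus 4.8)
The plan is to combine two elementary facts special to binary additive valuations: the maximin share has a closed form, and a Pareto optimal allocation is automatically complete. For an agent $i$ with binary additive valuation, write $d_i \triangleq v_i(O)$ for the number of goods she approves. Then $\MMS_i = \lfloor d_i/n\rfloor$: spreading her $d_i$ approved goods as evenly as possible over the $n$ parts of a partition leaves the least-valued part with $\lfloor d_i/n\rfloor$ approved goods, and a pigeonhole argument shows no partition can leave every part strictly above this. Next I would observe that the given Pareto optimal, EF1 allocation $A$ must be complete: if an item $o$ were unallocated, then — since no item is redundant — some agent $k$ has $v_k(\{o\})=1$, and moving $o$ into $A_k$ strictly increases $v_k$ by additivity while changing nothing else, contradicting Pareto optimality. (Equivalently, $A$ is utilitarian optimal by Theorem~\ref{thm:PO}, which again forces completeness.)

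With that in hand, I would convert EF1 into a linear inequality. Fix $i$. For every $j\in N$, EF1 yields an item $o\in A_j$ with $v_i(A_i)\ge v_i(A_j\setminus\{o\})$; by additivity this is $v_i(A_i)\ge v_i(A_j)-v_i(\{o\})\ge v_i(A_j)-1$, while for $j=i$ the trivial bound $v_i(A_i)\ge v_i(A_i)$ holds with no ``$-1$''. Summing over all $j\in N$, and using completeness together with additivity and disjointness of the bundles to get $\sum_{j\in N}v_i(A_j)=v_i\!\left(\bigcup_{j\in N}A_j\right)=v_i(O)=d_i$, I obtain
\[
 n\,v_i(A_i)\;\ge\;\sum_{j\in N}v_i(A_j)-(n-1)\;=\;d_i-n+1 ,
\]
so $v_i(A_i)\ge (d_i-n+1)/n$, and since $v_i(A_i)$ is an integer, $v_i(A_i)\ge \lceil (d_i-n+1)/n\rceil$.

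It remains to check that $\lceil (d_i-n+1)/n\rceil=\lfloor d_i/n\rfloor=\MMS_i$. Writing $d_i=qn+r$ with $0\le r\le n-1$, we have $(d_i-n+1)/n=q-1+(r+1)/n$, and since $(r+1)/n\in(0,1]$ this lies in $(q-1,q]$, so its ceiling is $q=\lfloor d_i/n\rfloor$. Hence $v_i(A_i)\ge \MMS_i$ for every $i\in N$, i.e. $A$ is MMS. The only place demanding care is the bookkeeping: the $j=i$ term must be kept out of the ``$-1$'' correction (so the right-hand side is $d_i-n+1$, not $d_i-n$), and then the floor/ceiling identity above is needed — without both, the estimate would be off by one and fall just short of $\MMS_i$ when $n$ does not divide $d_i$. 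This is also where the bound is essentially tight, and it is precisely the use of additivity in the two steps marked above that makes the argument fail for \boxs valuations, consistent with Proposition~\ref{prop:poef1_notMMS}.
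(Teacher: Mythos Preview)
Your proof is correct but proceeds differently from the paper's. The paper argues by contraposition: assuming $v_i(A_i)<\MMS_i$, it uses completeness and additivity to show that some other bundle $A_k$ must satisfy $v_i(A_k)>\MMS_i$, whence $v_i(A_k\setminus\{o\})\ge \MMS_i> v_i(A_i)$ for every $o\in A_k$, violating EF1. It never computes $\MMS_i$ explicitly. You instead derive the closed form $\MMS_i=\lfloor v_i(O)/n\rfloor$, sum the EF1 inequalities directly to obtain $n\,v_i(A_i)\ge v_i(O)-(n-1)$, and close the gap with the floor/ceiling identity. Both rest on the same two ingredients (completeness from PO, and the one-item slack from EF1 combined with additivity); your route is more quantitative and makes the tightness visible, while the paper's avoids the arithmetic of $\MMS_i$ at the cost of a short existence argument. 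One minor point: when you invoke EF1 for each $j\neq i$ you implicitly assume $A_j\neq\emptyset$; the inequality $v_i(A_i)\ge v_i(A_j)-1$ still holds trivially when $A_j=\emptyset$, so the argument goes through, but it is worth stating.
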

\begin{proof}
	It is easy to see that a PO allocation must be complete for this valuation class: if not, we would have an item $o^* \in A_0$ and, by our non-redundancy assumption, an agent $i^*$ with $v_{i^*}(\{o\})=1$, hence we could improve agent $i^*$'s valuation at no cost to the other agents by allocating the item $o^*$ to agent $i^*$. We will prove the required result by contraposition. Let $A$ be an allocation that is PO but not MMS for an instance where all agents have binary additive valuations. Then, there is an agent $i$ such that $v_i(A_i) < \mu$ where $\mu = \MMS_i$. Since valuations are integers, $v_i(A_i) \le \mu -1$.
	
	 If $v_i(A_j) \le \mu$ for every other agent $j \in N \setminus \{i\}$, then due to additivity and the emptiness of $A_0$, 
	\[v_i(O)= v_i(A_i) + \sum_{j \in N \setminus \{i\}} v_i(A_j) = v_i(A_i) + \sum_{j \in N \setminus \{i\}} v_i(A_j) < \mu + (n-1) \mu = n  \mu.\]
	
% 	Even if the allocation is not complete, i.e. $A_0 \neq \emptyset$, Pareto optimality implies that $v_i(A_i \cup A_0) = v_i(A_i) $. If $v_i(A_j) \le \mu$ for every other agent $j \in N \setminus \{i\}$, then due to additivity, 
% 	\[v_i(O)=\sum_{j=0}^n v_i(A_j) = v_i(A_i \cup A_0) + \sum_{j \in N \setminus \{i\}} v_i(A_j) = v_i(A_i) + \sum_{j \in N \setminus \{i\}} v_i(A_j) < \mu + (n-1) \mu = n  \mu.\]
	
	But, from the definition of the maximin share, there must exist a complete allocation $A'$ such that $v_i(A'_j) \ge v_i(A'_i)=\mu$ for every $j \in N \setminus \{i\}$. This, together with additivity, implies that
	\[v_i(O) = \sum_{j \in N} v_i(A'_j) \ge n \mu, \]
	a contradiction. Hence, under the allocation $A$, there must be at least one agent, say $k \in N\setminus \{i\}$, such that $v_i(A_k) > \mu$.
	
	Thus, for any item $o \in A_k$, we have %$v_i(\{o\}) \in \{0,1\}$ so that 
	\[ v_i(A_k \setminus \{o\}) = v_i(A_k) - v_i(\{o\}) \ge v_i(A_k) -1 > \mu -1 > v_i(A_i). \]
	In other words, agent $i$ is not envy-free of agent $k$ up to $1$ item, i.e. $A$ cannot be EF1.
	
	Hence, any PO and EF1 allocation must also be MMS.
\end{proof}

\end{document}